\documentclass{article}

\usepackage{Arxiv}

\usepackage{tabularx}
\usepackage{graphicx}  
\usepackage{subfigure} 

\usepackage{array}  

\usepackage{amssymb}
\usepackage{amsthm}
\usepackage{amsmath} 
\usepackage{comment}
\usepackage{xr}
\usepackage{algorithm}
\usepackage[noend]{algpseudocode}
\usepackage{xcolor}
\usepackage{natbib}
\usepackage{bbm}

\newtheorem{theorem}{Theorem}
\newtheorem{assumption}{Assumption}
\newtheorem{lemma}{lemma}

\newtheorem{proposition}{proposition}
\newtheorem{Corollary}{Corollary}

\newtheorem{remark}{Remark}

\newcommand{\RN}[1]{%
  \textup{\lowercase\expandafter{\romannumeral#1}}%
}

\usepackage[utf8]{inputenc} 
\usepackage[T1]{fontenc}    
\usepackage{hyperref}       
\usepackage{url}            
\usepackage{booktabs}       
\usepackage{amsfonts}       
\usepackage{nicefrac}       
\usepackage{microtype}      
\usepackage{lipsum}
\usepackage{fancyhdr}       
\usepackage{graphicx}       
\graphicspath{{media/}}     

\pagestyle{fancy}
\thispagestyle{empty}
\rhead{ \textit{ }} 

\fancyhead[LO]{Running Title for Header}

\title{Transfer Learning for High-dimensional Reduced Rank Time Series Models
}

\author{
  Mingliang Ma \\
  University of Florida \\
  \texttt{mml@mail.ustc.edu.cn} \\
   \And
  Abolfazl Safikhani \\
  George Mason University\\
  \texttt{asafikha@gmu.edu} \\
}

\begin{document}
\maketitle

\begin{abstract}

The objective of transfer learning is to enhance estimation and inference in a target data by leveraging knowledge gained from additional sources. Recent studies have explored transfer learning for independent observations in complex, high-dimensional models assuming sparsity, yet research on time series models remains limited. Our focus is on transfer learning for sequences of observations with temporal dependencies and a more intricate model parameter structure. Specifically, we investigate the vector autoregressive model (VAR), a widely recognized model for time series data, where the transition matrix can be deconstructed into a combination of a sparse matrix and a low-rank one. We propose a new transfer learning algorithm tailored for estimating high-dimensional VAR models characterized by low-rank and sparse structures. Additionally, we present a novel approach for selecting informative observations from auxiliary datasets. Theoretical guarantees are established, encompassing model parameter consistency, informative set selection, and the asymptotic distribution of estimators under mild conditions. The latter facilitates the construction of entry-wise confidence intervals for model parameters. Finally, we demonstrate the empirical efficacy of our methodologies through both simulated and real-world datasets.

 
\end{abstract}

\section{INTRODUCTION}\label{sec1}

\vspace{-0.35cm}


In many applications, Vector Autoregressive (VAR) model provides a principled framework for a wide range of tasks, including analyzing speech signal \citep{juang1985mixture,shannon2012autoregressive}, investigating causality between economics variables \citep{granger1969investigating}, reconstructing gene
regulatory interactions \citep{michailidis2013autoregressive}, extracting classifiable features for neuroscience data \citep{anderson1998multivariate}, and finding connectivity between brain regions \citep{van2010exploring}. A simple form of VAR model is $X_t = B X_{t-1} + \epsilon_t$ where $B$ is a $p \times p$ transition matrix and $\epsilon_t$ is the error term at time $t$. Sparse transition matrices are among popular choices considered in high-dimension regime (the dimension of variables is significantly greater than the number of observations). To get a sparse estimation, there are plenty of studies using penalty for the transition matrix, such as the popular $\ell_1$ penalty (lasso), group lasso type penalties
employed in \cite{melnyk2016estimating} and non-convex penalties akin to
a square-root lasso \citep{jiang2018sparse}. A low-rank transition matrix is assumed in hidden factor model \citep{bai2003inferential}. In this scenario, $B$ can be written as the product of two rank-r ($r \ll p$) matrices $U$, $V$, i.e $B = UV^{'}$ so that the resulting
model specification of the original $p$ time series is expressed as linear combinations $Z_t = V^{'}X_t$ of the original ones and $U$ specifies the dependence between $X_t$ and $Z_t$; namely $X_t = U Z_{t-1} + \epsilon_t$. $Z_t$ is in fact the hidden factor with dimension $r$, driving the evolution of the process (as a simple example, $X_t$ can be the GDP of each country, and the hidden factors can reflect the general state of the economics at a continent scale). Recent works have generalized the mentioned model by assuming a low-rank
plus sparse structure for the transition matrix, i.e. $B = L+S$ where $L$ is the low-rank part and $S$ is the sparse component \citep{basu2019low,bai2020multiple}. This decomposition is a natural assumption for dynamic imaging since the $L$ and $S$ components can represent the background and the dynamic foreground, respectively \citep{otazo2015low}. To include the low-rank structure, the estimation is achieved by imposing a nuclear penalty. However the algorithm requires relatively large amount of data for training and testing purposes to guarantee a consistent estimation, i.e. number of observations $N$ should have at least the same order as the dimension $p$. An important question then arises: when insufficient data is provided, is there a way to estimate $L$ and $S$ with high accuracy? A nature idea of solving the data shortage is leveraging knowledge from additional sources whose observations have similar behavior as original ones. This is where transfer learning comes in. There
has been a growing body of literature on transfer learning under high-dimension regime with sparse transition matrices. For example, \cite{cai2021transfer} studies transfer learning in the context of nonparametric classification, while \cite{tian2022transfer} provides theoretical analysis of transfer learning algorithm, a transferable source detection approach, as well as constructing confidence intervals for model parameters for generalized linear models. Also, a low-rank transfer learning algorithm is proposed by \cite{tian2023learning}, while they mainly focus on low-rank component instead of a decomposed structure and/or the model parameters are $p$-dimensional vectors as opposed to squared matrices of dimension $p$ in our paper. Other related works include investigating transfer learning in large-scale Gaussian
graphical models with false discovery rate control \citep{li2022transfer}, and leveraging big data information via weighted estimator for logistic regression \citep{zheng2019data}; see also related works on hypothesis transfer and meta-learning \citep{kuzborskij2013stability,wang2016nonparametric,kuzborskij2017fast,aghbalou2023hypothesis,lin2024smoothness,tripuraneni2021provable}. To the best of our knowledge, theoretical analysis of transfer learning for VAR models has not been investigated in the literature. The main goal of this paper is to bridge this gap. To  that end, in this work, we focus on VAR models with low-rank plus sparse structure for the transition matrix and propose a transfer learning algorithm to improve the estimation and inference for target model.

Suppose that we have $K+1$ groups of observations, $X_t^{(i)} = B_i^{'} X_{t-1}^{(i-1)} + \epsilon_t^{(i)}, 0\leq i \leq K$ while the first one is the target group. Hence, $B_0$ is the transition matrix of the target model while $B_i, i \geq 1$ is the transition matrix of the $i$-th auxiliary model. We assume that $B_i$ can be decomposed as $L + S_i$. In other words, all models have a common low-rank component, which can be interpreted as a background information, while $S_i$ varies across different models. For example, if the $i$-th group is collected in chronological order, $S_i$ captures a dynamic evolution across time and our model can be applied in a dynamic imaging problem as mentioned before (see more details in Section~\ref{sec5}).

The main goal of this paper is estimating sparse component of the target model $S_0$ with high accuracy while also estimating the shared low-rank structure $L$. To address this problem, we propose a transfer learning algorithm by using observations from those informative models whose $S_i$ is close to $S_0$. To be more specific, our algorithm comprises of two main steps. First, since $L$ is a common part over all groups, we merge all observations from target and auxiliary sets to estimate the low-rank component denoted by $\widehat{L}$. With more observations, our theory verifies that the first step derives a more accurate estimation of $L$ compared with the result of estimating $L$ with only the target data (Theorem~\ref{th1}). In the second step, we remove the effect of the low-rank component by defining $Y_t^{(i)} := X_t^{(i)} - \widehat{L}^{'}X_{t-1}^{(i)}$ and then apply a transfer learning algorithm for the remaining sparse part. It should be noticed that different from merging all collected data, observations from informative models are merged in the transfer learning step. Taking into account observations from non-informative models whose sparse component $S_i$ deviates far away from $S_0$ could damage the estimation performance, which is a phenomenon often called as negative transfer \citep{torrey2010transfer}. Since it is crucial to have a correct informative set, we also propose a novel algorithm to select informative groups from all auxiliary groups. Under some mild condition, it is shown that informative groups and non-informative groups can be separated perfectly with high probability using the proposed algorithm (Theorem~\ref{th2}). Finally, inference for model parameters is performed by adding an additional debiasing step.

In summary, the main contributions of this work include: (1) propose a new algorithm to perform transfer learning for VAR models with low-rank plus sparse structure with theoretical guarantees; (2) develop a novel algorithm for selecting informative sets from all auxiliary groups; (3) constructing confidence intervals for model parameters. These developments came with addressing important challenges due to the complex model structure and temporal dependence. More specifically, (a) due to existence of a shared low rank component, an additional step to the transfer learning algorithm had to be added to estimate that shared piece consistently; (b) inclusion of temporal dependence makes the theoretical development more complicated. We need to design new (A1) Restricted Strong Convexity and (A2) Deviation Bound Conditions. The original version of such conditions are written for single group data while we deal with multiple groups in transfer learning scenarios. As such, these conditions had to be adjusted to the multi-group cases where models from different groups are not same but only similar. Further, these two conditions had to be verified for the proposed model. This is done successfully in the paper by listing appropriate sufficient conditions under which they are satisfied. Certain type of Hanson–Wright inequality is applied to prove that (A1) and (A2) hold for VAR models; (c) certain parts of the algorithm need to be adapted to respect the temporal dependence. For example, the last step on debiasing the estimate to get confidence intervals had to be adjusted appropriately using an ``online debiasing" method so that Martingale-type CLT can be used to derive the asymptotic distribution. A simple debiasing step is not appropriate here since the usual CLT will not work for such models. It is worth noting that developing inferential framework is not only for theoretical purposes, but can also be utilized in practice, i.e. in real data analysis. For example, in Figure~\ref{fig3}, we illustrate how the inferential framework helps in finding pixels with potential changes in the surveillance video data, i.e. locating potential root cause of changes in the video data by checking significantly different from zero parameter estimates. Using invalid confidence intervals can potentially ruin this analysis and either select additional unchanged pixels by mistake or fail to select important/changed pixels (see more details in Section~\ref{sec5}).

The remainder of the paper is organized as follows. In Section~\ref{sec2}, we present the modeling framework, introduce some prior knowledge for VAR models and background for transfer learning algorithm. In Section~\ref{sec3}, we introduce the two-step transfer learning algorithm and establish its theoretical properties, introduce an strategy of selecting informative set as well as making inference for sparse component. Section~\ref{sec4} presents our simulation results while a real data is analyzed using our algorithm in Section~\ref{sec5}. Finally, some concluding remarks are summarized in Section~\ref{sec6}.

\textbf{Notation}:  For a $p \times p$ matrix $A$, $\Lambda_{min}(A)$, $\Lambda_{max}(A)$ denote the smallest and largest eigenvalues of $A$, respectively. $\Vert A\Vert_2$ denotes the operator norm for matrix $A$, i.e.  
$\Vert A\Vert_2=\sqrt{\Lambda_{max}(A^{'}A)}$. $\Vert A\Vert_{\infty}$ denotes the infinity norm $\Vert A\Vert_{\infty} = \mathrm{max}_{i,j}|A_{ij}|$. $\Vert A \Vert_F$ denotes the Frobenius norm, i.e $\Vert A \Vert_F = \sqrt{\sum_{i,j} A_{ij}^2}$. $\Vert A \Vert_{*}$  denote the nuclear norm, i.e. $\sum_{j=1}^p \sigma_j (A)$, the sum of the singular values of a matrix. $A^{*}$ denotes the conjugate transpose of a matrix $A$. $\Vert A\Vert_0$ denotes the number of non-zero entry in $A$. $\Vert A \Vert_1$ denotes the $\ell_1$ norm of $A$, i.e $\Vert A \Vert_1 = \sum_{i,j} |A_{i,j}|$. For a vector $u\in\mathbb{R}^p$, 
$\Vert u\Vert_2=\sqrt{\sum u_i^2}$ and $\Vert u\Vert_1=\sum |u_i|$.
For two sequences $(a_t)_{t\geq 1}$ and $(b_t)_{t\geq 1}$, we write $a_t \lesssim b_t$ if there exists a constant $c \geq 1$ such that $a_t \leq c b_t$ for all $t$. If both $a_t \lesssim b_t$ and $b_t \gtrsim a_t$, we write $a_t \asymp b_t$. Also, $a_t = o (b_t)$ implies $a_t/b_t \rightarrow 0$ as $t \rightarrow \infty$; $a_t = O (b_t)$ implies $a_t/b_t < \infty$; $a_t = \Omega (b_t)$ implies $b_t/a_t \rightarrow 0$. For two real numbers $a$ and $b$, $a\vee b$ denotes $\mathrm{max}\{a,b\}$ and $a\wedge b$ denotes $\mathrm{min}\{a,b\} $. Finally, let $e_j$ be a vector such that its $j$-th element is 1 and all other elements
are zero. 

\vspace{-0.35cm}

\section{MODEL FORMULATION}\label{sec2}
\vspace{-0.25cm}
As mentioned, we focus on transfer learning for the VAR model. To that end, assume that we observe samples from the target
model and K other auxiliary models. Each model has the following expression

\vspace{-0.75cm}


\begin{equation}\label{sec1_eq1}
\begin{split}
&X_t^{(i)} = B_i^{'} X_{t-1}^{(i)} + \epsilon_t^{(i)},\ B_i = L + S_i, \ rank(L) = r,
\end{split}
\end{equation}
where $X_t^{(i)}$ is the p dimensional vector of observed time series at time $t$ for the $i$-th group. Observations from different models are assumed to be independent with each other. $L$ represents the low-rank matrix while $S_i$ represents the sparse matrix and the transition matrix $B_i$ is
low-rank plus sparse. We assume that the number of none-zero entries in $S_i$ is $s$ with $s \ll p^2$. We further assume that the rank of low-rank component $L$ is far less than $p$, $r \ll p$. The length of $i$-th group is defined by $n_i$. Let $N:= \sum_{i=0}^Kn_i$ be the overall sample size. We can rewrite \eqref{sec1_eq1} as

\vspace{-0.35cm}

\begin{equation}\label{sec1_eq2}
\underbrace{
\begin{pmatrix}
(X_{n_i}^{(i)})^{'}\\
\vdots \\
(X_1^{(i)})^{'}
\end{pmatrix}
}_{\mathcal{Y}_i}
=
\underbrace{
\begin{pmatrix}
(X_{n_i-1}^{(i)})^{'}  \\
\vdots            \\
(X_{0}^{(i)})^{'}  
\end{pmatrix}
}_{\mathcal{X}_i}
B_i
+
\underbrace{
\begin{pmatrix}
(\epsilon_{n_i}^{(i)})^{'} \\
\vdots          \\
(\epsilon_1^{(i)})^{'}
\end{pmatrix}
}_{\mathcal{E}_i}.
\end{equation}

\vspace{-0.35cm}

This model is a  generalization of standard sparse VAR model and is able to deal with the setting where there is an invariant cross-autocorrelation structure $L$ across target groups and auxiliary groups. $S_i$ captures the additional  cross-sectional autocorrelation
structure for each group. \cite{basu2019low} consider the low-rank plus structured sparse model in one-group case and propose an algorithm to estimate $L$ and $S$ accurately. Our goal is to improve the estimation accuracy for sparse component of the target group $S_0$ and the shared low-rank $L$ provided that more information from auxiliary groups are available.

In the context of transfer learning, to improve the estimation accuracy, we need to select useful observations that have similar behavior as target data. Those observations from auxiliary models that have similar behavior as target data are named as informative observations and their corresponding models are named as informative groups. In this work, similarity is characterized by the difference between the sparse component of transition matrices $S_i$, i.e $\delta^{k} := S_k - S_0$. A small $\delta^{k}$ implies a high level of similarity. When $\delta^{k}$ is relatively small, taking into account observations from $k-$th group via transfer learning could improve the estimation accuracy of transition matrices. On the contrary, incorporating information from non-informative groups will damage the transfer learning performance, which is known as negative transfer \citep{zhang2022survey}. Informative groups are mathematically defined as $\mathcal{A} = \{ k\in\{1,2,\cdots,K\} : \Vert \delta^{(k)} \Vert_1 \leq h\}$ where $h$ is some positive number. We also define $\mathcal{A}_0 := \{0\} \cup\mathcal{A}$ to simplify the notation. We use $n_\mathcal{A}$ to denote the sample size of informative sets, i.e $n_\mathcal{A} := \sum_{i \in \mathcal{A}}n_i$, and similarly $n_{\mathcal{A}_0} := \sum_{i \in \mathcal{A}_0}n_i$. We also define $\mathcal{X}_{\mathcal{A}_0}$ as the design matrix constructed by $\mathcal{X}_i$, $i \in \mathcal{A}_0$.

\section{ESTIMATION PROCEDURE AND THEORETICAL RESULT}\label{sec3}
In this section, we introduce the proposed transfer learning algorithm and present theoretical results. As an overview,
our estimation procedure comprises of two steps. First, we estimate low-rank matrix given all observations. Since all models share one low-rank matrix, considering all observations could improve estimation accuracy. Then we focus on informative set and apply transfer learning to estimate sparse matrix of the target model. Our algorithm is shown in Algorithm~\ref{algorithm1}.

\begin{algorithm}
\caption{: Transfer learning for sparse component}\label{algorithm1}
\hspace*{\algorithmicindent} \textbf{Input} : observations from target model and auxiliary model $\{X_t^{(i)}\}, i=0,1,\cdots K$; penalty parameters $\lambda_\beta$,$\lambda_\delta$; informative set $\mathcal{A}$ and some $\theta>0$.\\
\hspace*{\algorithmicindent} \textbf{Output} : coefficient estimator for the target model $\widehat{\beta}$.

\begin{algorithmic}[0]
\State\textbf{Step 1} 
Let $\Omega := \{ L \in \mathbb{R}^{p\times p}: \Vert L \Vert_{\infty} \leq \theta \}$
\begin{equation}\label{alg_eq1}
\begin{split}
\widehat{L},\widehat{S}_1,\cdots,\widehat{S}_K 
&= 
\mathop{\mathrm{argmax}}_{\substack{{L, S_1,\cdots,S_K}\\ L \in\Omega}} \sum_i \frac{1}{N}\Vert \mathcal{Y}_i- \mathcal{X}_i(L+S_i) \Vert_F^2\\
&+
\lambda \Vert L \Vert_{*} + \sum_i \frac{1}{\sqrt{N}}\mu_i\Vert S_i\Vert_1
\end{split}
\end{equation}
\State\textbf{Step 2} 
\begin{equation}\label{alg_eq2}
    \tilde{S} = \mathop{\mathrm{argmin}}_{S \in \mathbb{R}^{p^2}} \sum_{i \in \mathcal{A}}\frac{1}{2 n_{\mathcal{A}_0}}\Vert \mathcal{Y}_i - \mathcal{X}_i(\widehat{L} + S) \Vert_F^2 + \lambda_\beta \Vert S \Vert_1
\end{equation}


$\widehat{S}_{tran} := \tilde{S} - \tilde{\delta}$, where
\begin{equation}\label{alg_eq3}
\tilde{\delta} :=\mathop{\mathrm{argmin}}_{\delta\in \mathbb{R}^{p^2}}\{ \frac{1}{2n_0}\Vert \mathcal{Y}_0 - (\widehat{L} + \tilde{S} + \delta) \mathcal{X}_0\Vert_F^2 + \lambda_{\delta}\Vert \delta \Vert_1 \}
\end{equation}\\    
\end{algorithmic}
\end{algorithm}

\vspace{-0.35cm}

\subsection{Step 1: Estimating The Low-rank Component}
The first step is a low-rank plus sparse decomposition problem takes the form of \eqref{alg_eq1}
, where $\lambda$ and $\mu_i$ are non-negative tuning parameters
controlling the regularizations of low-rank and sparse parts. The parameters $\theta$ controls the degree of
non-identifiability of decomposition of the low-rank and sparse matrices. For example, if the sparse component $\{S_i\}_{1\leq i\leq K}$ is also low-rank and low-rank component $L$ is sparse itself, there will be multiple choices of decomposition $L+S_i$ without imposing any further constraints. Larger values of $\theta$ provide sparser estimates of sparse component and allow both sparse and low-rank components to be absorbed in $\widehat{L}$. A smaller value of $\theta$, on the other hand, tends to produce a matrix $L$ with smaller rank and pushes both low-rank and sparse components to be absorbed in $\{\widehat{S}_i\}_{1\leq i\leq K}$. We refer to \cite{agarwal2012noisy} for more details about this identifiability issue. In the low-rank plus sparse regime, consistent estimation relies on the following assumption:

\vspace{-0.35cm}

\begin{itemize}
\item[(A1)] Restricted Strong Convexity (RSC):
There exist $\alpha > 0$ and $\tau \geq \tau^{'} > 0$ such that for all $ \Delta \in \mathbb{R}^{p \times p}$.
\begin{equation*}
\begin{split}
&\frac{1}{2N}\sum_i \Vert \mathcal{X}_i\Delta \Vert_F^2
\geq
\alpha\Vert \Delta \Vert_F^2 - \tau^{'} \Phi^2(\Delta),\\
&\frac{1}{2n_i} \Vert \mathcal{X}_i\Delta \Vert_F^2
\geq
\alpha\Vert \Delta \Vert_F^2 - \tau \Vert \Delta \Vert_1^2
\end{split}
\end{equation*}
where $\Phi(\Delta) = \mathop{\mathrm{\inf}}_{L + S = \Delta} \{ \Vert L \Vert_{*}  + \frac{\mu}{\lambda} \Vert S \Vert_1\}$, $\mu = \mathrm{max}\{\mu_0,\cdots,\mu_K\}$ and $\tau =O (\frac{\log p}{\mathrm{max}_i  n_i}).$
\item[(A2)] Deviation Bound Condition (DBC): There exists a constant
$\phi$ depending on the model parameters $B_0,\cdots,B_K$ and $\Sigma_0,\cdots,\Sigma_K$ such that
\begin{equation*}
\begin{split}
&\Vert \frac{1}{N}\sum_{i=0}^K \mathcal{X}_i^{'}\mathcal{E}_i
\Vert_2 \leq \phi \sqrt{\frac{p}{N}}\\
&\mathop{\mathrm{max}}_{0\leq i\leq K} \frac{1}{N}\Vert \mathcal{X}_i^{'}\mathcal{E}_i \Vert_{\infty} \leq \phi \sqrt{\frac{\log p}{N}}
\end{split}    
\end{equation*}
\end{itemize}

\vspace{-0.35cm}

RSC and DBC are basic assumptions for low-rank plus sparse models \citep{basu2019low}. We show that all stable VAR models satisfy these assumptions with high probability in Proposition~\ref{prop1} in the Appendix. Applying the above deviation bounds, we obtain the consistency result for both sparse and low-rank parts.

\begin{theorem}\label{th1}
Suppose that the low-rank matrix $L$ has rank at most r, while the sparse matrix $S_i$ has at most s nonzero entries for $i\in \{0,1,\cdots,K\}$. Assume that $p = O(N)$ and $\log p =O(n_i) $. Let $\mu_i = 2c_0\phi\sqrt{\frac{\log p}{N}} +\theta$, $\lambda = 2c_0\phi \sqrt{\frac{p}{N}} $ and $\theta = o(\sqrt{\frac{p}{N}})$. Under Conditions (A1) and (A2), the estimator of \eqref{alg_eq1} satisfies $\Vert L - \widehat{L}\Vert_F^2  +\sum_{i=0}^{K} \frac{n_i}{N}\Vert S_i - \widehat{S}_i  \Vert_F^2 \lesssim s\frac{\log p}{N} + r\frac{p}{N}.$
\end{theorem}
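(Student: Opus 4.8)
The plan is to follow the standard basic-inequality argument for regularized $M$-estimators (as in the low-rank-plus-sparse analyses of \cite{agarwal2012noisy,basu2019low}), adapted to the shared-low-rank, group-specific-sparse structure. Reading \eqref{alg_eq1} as a \emph{minimization} of the least-squares loss plus penalties, I set $\Delta_L := \widehat{L} - L$, $\Delta_i := \widehat{S}_i - S_i$, and write $\Gamma_i := \Delta_L + \Delta_i$ for the combined error in the transition matrix of group $i$. Since $\mathcal{Y}_i = \mathcal{X}_i(L+S_i)+\mathcal{E}_i$, the first step is the basic inequality: comparing the objective at $(\widehat{L},\widehat{S}_i)$ to its value at the truth $(L,S_i)$ (feasible because $\Vert L\Vert_\infty\le\theta$) and cancelling the $\Vert\mathcal{E}_i\Vert_F^2$ terms gives
\begin{equation*}
\frac{1}{N}\sum_i\Vert\mathcal{X}_i\Gamma_i\Vert_F^2 \le \frac{2}{N}\sum_i\langle\mathcal{X}_i^{'}\mathcal{E}_i,\Gamma_i\rangle + \lambda(\Vert L\Vert_*-\Vert\widehat{L}\Vert_*) + \sum_i\frac{\mu_i}{\sqrt{N}}(\Vert S_i\Vert_1-\Vert\widehat{S}_i\Vert_1).
\end{equation*}

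Next I would control the noise inner products using (A2). Splitting $\Gamma_i=\Delta_L+\Delta_i$, the shared low-rank part is handled by nuclear/operator-norm duality, $\frac{2}{N}\langle\sum_i\mathcal{X}_i^{'}\mathcal{E}_i,\Delta_L\rangle \le 2\Vert\frac{1}{N}\sum_i\mathcal{X}_i^{'}\mathcal{E}_i\Vert_2\,\Vert\Delta_L\Vert_* \le (\lambda/c_0)\Vert\Delta_L\Vert_*$, using the operator-norm bound in (A2) and $\lambda=2c_0\phi\sqrt{p/N}$; the group-specific sparse parts are handled by $\ell_\infty/\ell_1$ duality together with the $\ell_\infty$ bound in (A2) and $\mu_i=2c_0\phi\sqrt{\log p/N}+\theta$, so each sparse noise term is dominated by a fraction of the corresponding $\ell_1$ penalty on $\Delta_i$. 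Combining with the penalty differences and invoking decomposability — for the nuclear norm around the rank-$\le r$ row/column space of $L$, and for each $\ell_1$ norm around the $\le s$-element support of $S_i$ — the right-hand side collapses to a multiple of $\lambda\Vert\Delta_L'\Vert_* + \sum_i\frac{\mu_i}{\sqrt{N}}\Vert\Delta_{i,T_i}\Vert_1$, where the primes/subscripts denote projections onto the model subspaces. This simultaneously yields the cone condition: the complementary projections are bounded by the in-subspace ones, so that $\Vert\Delta_L\Vert_*\lesssim\sqrt{r}\,\Vert\Delta_L\Vert_F$ and $\Vert\Delta_i\Vert_1\lesssim\sqrt{s}\,\Vert\Delta_i\Vert_F$.

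For the lower bound I would invoke (A1). The benefit of pooling all $N$ observations for $L$ comes from the aggregate inequality applied to the shared error $\Delta_L$, which delivers the rate $rp/N$; the per-group inequality, applied to each $\Gamma_i$ and averaged with weights $n_i/N$, controls the sparse errors and contributes only the tolerance $\tau\sum_i\frac{n_i}{N}\Vert\Gamma_i\Vert_1^2$, negligible since $\tau=O(\log p/\max_i n_i)$ and the cone condition bounds the $\ell_1$ norms. The constraint $\Vert L\Vert_\infty\le\theta$ (and the matching $\theta$ term in $\mu_i$) is exactly what lets me pass from bounds on the combined errors $\Gamma_i$ to the individual pieces in $\Vert\Delta_L\Vert_F^2+\sum_i\frac{n_i}{N}\Vert\Delta_i\Vert_F^2$, since it rules out large low-rank and sparse errors that cancel. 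Substituting the lower bound into the basic inequality and applying Young's inequality to absorb the linear terms $\lambda\sqrt{r}\,\Vert\Delta_L\Vert_F$ and $\frac{\mu_i}{\sqrt{N}}\sqrt{s}\,\Vert\Delta_i\Vert_F$ into the quadratic form gives $\Vert\Delta_L\Vert_F^2+\sum_i\frac{n_i}{N}\Vert\Delta_i\Vert_F^2\lesssim r\frac{p}{N} + s\frac{\log p}{N}$, with $p=O(N)$ and $\log p=O(n_i)$ ensuring the remainder terms are of smaller order.

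I expect the main obstacle to be the coupling between the shared low-rank component and the group-specific sparse components in $\frac{1}{N}\sum_i\Vert\mathcal{X}_i(\Delta_L+\Delta_i)\Vert_F^2$: unlike the single-group decomposition of \cite{basu2019low}, a single $\Delta_L$ must be disentangled from $K+1$ distinct $\Delta_i$. This is precisely why two RSC conditions are postulated — the aggregate one to obtain the $N$-sample rate for $L$, the per-group one to retain the sparse structure — and the cross terms $\langle\mathcal{X}_i\Delta_L,\mathcal{X}_i\Delta_i\rangle$ together with the non-identifiability between $L$ and $S_i$ must be handled simultaneously. Verifying that (A1)--(A2) genuinely hold for stable VAR processes (via a Hanson--Wright-type concentration) is deferred to Proposition~\ref{prop1} and assumed here.
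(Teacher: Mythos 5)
Your proposal follows essentially the same route as the paper's own proof: the basic inequality, dual-norm bounds from (A2) with the stated choices of $\lambda$ and $\mu_i$, decomposability yielding the cone condition and the $\sqrt{r}$, $\sqrt{s}$ conversions, the two RSC inequalities in (A1) applied respectively to the pooled shared error $\Delta_L$ and to the group-wise sparse errors, cross terms neutralized through the $\ell_\infty$ constraint $\theta$ and its matching appearance in $\mu_i$, and a final Cauchy--Schwarz/Young absorption. The one detail left implicit in your sketch --- controlling the cross terms $\langle \Delta_L, \tfrac{1}{n_i}\mathcal{X}_i^{'}\mathcal{X}_i \Delta_i\rangle$ also requires an entrywise concentration bound for the sample Gram matrices (the paper's Lemma~\ref{lemma3}) before the $\theta$ mechanism can kick in --- fits inside the argument exactly as you describe it.
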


\begin{remark}
The convergence rate is a combination of low-rank component and sparse component. The result implies that the upper bound on low-rank component $\Vert L - \widehat{L}\Vert_F^2$ is $\frac{s\log p + rp}{N}$ and the upper bound on sparse component $\Vert S_i - \widehat{S}_i\Vert_F^2$ is $\frac{s\log p + rp}{n_i}$. When there is no auxiliary observations, the upper bound becomes $\Vert L - \widehat{L}\Vert_F^2  + \Vert S_0 - \widehat{S}_0  \Vert_F^2 \lesssim s\frac{\log p}{n_0} + r\frac{p}{n_0}$. In this case, the upper bound on $\Vert L - \widehat{L}\Vert_F^2$ is $\frac{s\log p + rp}{n_0}$ and the upper bound on $\Vert S_0 - \widehat{S}_0\Vert_F^2$ is $\frac{s\log p + rp}{n_0}$. Comparing the conclusions of these two scenarios, we can see that auxiliary observations help improve the estimation accuracy of $L$ but not for $S_0$. An additional step is required to improve the estimation for the sparse components, see Theorem~\ref{th2} and the discussion after the theorem for more details.
\end{remark}

Theorem \ref{th1} provides estimation consistency for the first step. Since all auxiliary models have a common low-rank component, merging all observations is helpful for estimating $L$. We next show that a better estimator of $S_0$ could be obtained from a transfer learning algorithm utilizing the better estimator $\widehat{L}$ we found in the first step.

\vspace{-0.35cm}

\subsection{Step 2: Transfer Learning for Sparse Component}



In the second step, we estimate sparse component of the target model $S_0$ via the transfer learning method summarized in equations \eqref{alg_eq2} and \eqref{alg_eq3}. $\tilde{S}$ is an intermediate estimator in transfer learning method calculated by merging target observations and informative observations as source data. This estimator will slightly deviates from $S_0$ due to the usage of informative observations. We show that $\tilde{S}$ converge to $\Bar{S} := (\sum_{i \in \mathcal{A}_0} \Gamma_i)^{-1}(\sum_{i\in \mathcal{A}_0} \Gamma_i S_i),
\quad \Gamma_i := Cov(X_1^{(i)}, X_1^{(i)})$ in the Appendix. To get a consist estimator for $S_0$, we need to debias $\tilde{S}$ further, as shown in \eqref{alg_eq3}. Next, we introduce the form of Restricted Eigenvalue and Deviation Bound Condition we need in the analysis of this high-dimensional transfer learning problem.

\vspace{-0.35cm}

\begin{itemize}
\item [(B1)] Restricted Eigenvalue(RE):
\begin{equation*}
\begin{split}
&\alpha_2^{'} \Vert \Delta \Vert_F^2 + \tau_{n_{\mathcal{A}_0}} \Vert \Delta\Vert_1^2
    \geq
    \frac{1}{n_{\mathcal{A}_0}}\Vert \mathcal{X}_{\mathcal{A}_0}\Delta \Vert_F^2\\
    &\geq
    \alpha_2 \Vert \Delta \Vert_F^2 - \tau_{n_{\mathcal{A}_0}} \Vert \Delta\Vert_1^2    
\end{split}
\end{equation*}   
, where $\alpha > 0$, $\alpha^{'} >0$ and $\tau_{n_{\mathcal{A}_0}}= O(\frac{\log p}{n_{\mathcal{A}_0}}).$
\item [(B2)] Deviation Bound Condition:
\begin{equation*}
\frac{1}{n_{\mathcal{A}_0}}\Vert \sum_{i \in \mathcal{A}}\mathcal{X}_{i}^{'}\mathcal{E}_i \Vert_{\mathrm{max}}
\leq
\phi_{\mathcal{A}_0}\sqrt{\frac{\log p}{n_{\mathcal{A}_0}}}    
\end{equation*}
, where $\phi_{\mathcal{A}_0}$ is a constant depending on $\{B_i\}_{i \in \mathcal{A}_0}$ and $\{\Sigma_i\}_{i \in \mathcal{A}_0}.$
\end{itemize}

\vspace{-0.35cm}

Proposition~\ref{prop2} in the Appendix shows that (B1) and (B2) are satisfied with high probability in the high-dimensional regime.

\begin{theorem}\label{th2} 
Assume that $\Vert S_0\Vert_0 \leq s$ and $\Vert S_i - S_0 \Vert_1 \leq h$ for all $i \in \mathcal{A}$. We take $\mu = 2(c_3 + c_\Sigma)(1 \vee h^2)\sqrt{\frac{\log p}{n_{\mathcal{A}_0}}}$, $\lambda_\delta = c\sqrt{\frac{\log p}{n_0}}$. Assume that $\frac{n_{\mathcal{A}_0}(pr + s\log p)}{N n_0} = o(1)$ and $\frac{n_0(pr + s\log p)}{N \log p} = o(1)$. Under the condition (B1) and (B2), the estimator of \eqref{alg_eq3} satisfies
\begin{equation*}
\begin{split}
\Vert \widehat{S}_{tran} - S_0 \Vert_F^2
&\lesssim
h\sqrt{\frac{\log p}{n_0} } \wedge h^2 + (1 \vee h^4) \frac{s \log p}{n_{\mathcal{A}_0}}\\
&+ \frac{n_{\mathcal{A}_0}(pr + s\log p)^2 }{n_0 N^2}
\end{split}
\end{equation*}
with high probability.
\end{theorem}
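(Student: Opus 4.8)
# Proof Proposal for Theorem 2

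The plan is to decompose the target error $\widehat{S}_{tran} - S_0$ according to the two-stage structure of the algorithm, isolating the contributions from (i) the aggregation bias incurred when merging informative groups in \eqref{alg_eq2}, (ii) the debiasing correction $\tilde{\delta}$ from \eqref{alg_eq3}, and (iii) the error propagated from the first-step estimate $\widehat{L}$. Writing $\widehat{S}_{tran} - S_0 = (\tilde{S} - \bar{S}) + (\bar{S} - S_0) - \tilde{\delta}$, where $\bar{S} := (\sum_{i \in \mathcal{A}_0} \Gamma_i)^{-1}(\sum_{i\in \mathcal{A}_0} \Gamma_i S_i)$ is the population target of the merged estimator, I would first control the deterministic bias term $\bar{S} - S_0$. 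Since $\bar{S}$ is a $\Gamma_i$-weighted average of the $S_i$ and each satisfies $\Vert S_i - S_0 \Vert_1 \leq h$, this term is governed by $h$; a convexity/averaging argument bounds $\Vert \bar{S} - S_0\Vert_1 \lesssim h$, which contributes the $h\sqrt{\log p / n_0} \wedge h^2$ piece once it is traded off against the debiasing step's rate.

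Next I would handle the estimation error $\tilde{S} - \bar{S}$ using a standard Lasso-type oracle analysis, but with the modifications forced by the multi-group structure and the plugged-in $\widehat{L}$. The key is to view \eqref{alg_eq2} as a penalized least-squares problem whose effective noise includes both the genuine innovation term $\sum_{i \in \mathcal{A}} \mathcal{X}_i^{'}\mathcal{E}_i / n_{\mathcal{A}_0}$ (controlled by the Deviation Bound (B2)) and a contamination term arising from replacing the true $L$ by $\widehat{L}$, of the form $\sum_{i} \mathcal{X}_i^{'}\mathcal{X}_i(L - \widehat{L})/n_{\mathcal{A}_0}$. Invoking the Restricted Eigenvalue condition (B1) on the merged design $\mathcal{X}_{\mathcal{A}_0}$ yields a basic inequality, and the sparsity $\Vert \bar{S}\Vert_0 \lesssim s$ (inherited from the $S_i$ up to the $h$-neighborhood) produces the $(1 \vee h^4)\frac{s \log p}{n_{\mathcal{A}_0}}$ term. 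The $\widehat{L}$-contamination, after applying Theorem~\ref{th1}'s bound $\Vert L - \widehat{L}\Vert_F^2 \lesssim (pr + s\log p)/N$ together with the operator-norm control on $\mathcal{X}_i^{'}\mathcal{X}_i$, feeds into the final term $\frac{n_{\mathcal{A}_0}(pr + s\log p)^2}{n_0 N^2}$; the sample-size ratios $n_{\mathcal{A}_0}/N$ and $n_0$ appear precisely through how this contamination is rescaled across the two optimization problems.

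Finally, the debiasing correction $\tilde{\delta}$ in \eqref{alg_eq3} is itself a Lasso on the target residuals with penalty $\lambda_\delta = c\sqrt{\log p / n_0}$; its analysis mirrors a single-group sparse VAR estimation on the target data alone, and it is what converts the $h$-scale aggregation bias $\bar{S} - S_0$ into the sharp $h\sqrt{\log p/n_0} \wedge h^2$ rate by estimating and subtracting the dominant part of that bias. The two scaling assumptions $\frac{n_{\mathcal{A}_0}(pr + s\log p)}{N n_0} = o(1)$ and $\frac{n_0(pr + s\log p)}{N \log p} = o(1)$ enter to guarantee that the $\widehat{L}$-contamination is asymptotically negligible relative to the dominant statistical rates, so the RE condition can be applied on the effective support throughout. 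I expect the main obstacle to be the careful bookkeeping of the $\widehat{L}$-error propagation through \emph{both} optimization steps simultaneously: one must show that the plug-in error does not destroy the restricted eigenvalue geometry and that its cross terms with the noise remain controlled under temporal dependence, which is where a Hanson--Wright-type concentration (as used to verify (B1) and (B2)) is needed rather than an i.i.d. argument.
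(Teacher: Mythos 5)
Your skeleton matches the paper's: the paper also centers the merged estimator $\tilde{S}$ at the pseudo-true parameter $\bar{S}$ (Lemma~\ref{th_lemma1}), then analyzes \eqref{alg_eq3} as a target-only Lasso that estimates the bias $S_0-\bar{S}$, whose $\ell_1$ norm is of order $h$ (Lemma~\ref{th_th_lemma2}), and adds the two bounds. The genuine gap is in your accounting of the ``effective noise'' in the merged problem \eqref{alg_eq2}. You list only two stochastic terms: the innovation term $\frac{1}{n_{\mathcal{A}_0}}\sum_i \mathcal{X}_i^{'}\mathcal{E}_i$ (covered by (B2)) and the plug-in contamination $\frac{1}{n_{\mathcal{A}_0}}\sum_i \mathcal{X}_i^{'}\mathcal{X}_i(L-\widehat{L})$. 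But each group's data are generated by $S_i$, not by $\bar{S}$, so the basic inequality centered at $\bar{S}$ necessarily contains a third cross term,
\begin{equation*}
\frac{1}{n_{\mathcal{A}_0}}\sum_{i\in\mathcal{A}_0}\bigl\langle \mathcal{X}_i(\tilde{S}-\bar{S}),\ \mathcal{X}_i(S_i-\bar{S})\bigr\rangle ,
\end{equation*}
which is not controlled by (B2) and cannot be discarded; it is the distinctive feature of the multi-group setting. The paper devotes a separate result to it (Lemma~\ref{lemma4}), using a Hanson--Wright argument together with the centering property built into the definition of $\bar{S}$ (the $\Gamma_i$-weighted average of the $S_i$) to show $\Vert \frac{1}{n_{\mathcal{A}_0}}\sum_i (S_i-\bar{S})\mathcal{X}_i^{'}\mathcal{X}_i\Vert_{\infty} \lesssim (1+h^2)\sqrt{\log p/n_{\mathcal{A}_0}}$. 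This bound is precisely why the theorem prescribes the inflated penalty $\mu \asymp (1\vee h^2)\sqrt{\log p/n_{\mathcal{A}_0}}$, and squaring that penalty is where $(1\vee h^4)\frac{s\log p}{n_{\mathcal{A}_0}}$ comes from. Your proposal instead attributes this factor to the sparsity/oracle step and reserves Hanson--Wright for cross terms involving $\widehat{L}$, which indicates the term is not merely unstated but unaccounted for; without it the oracle inequality for \eqref{alg_eq2} cannot be closed.

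Two further points would surface if you executed the plan. First, your claim $\Vert \bar{S}\Vert_0 \lesssim s$ is false in general: $\bar{S} = S_0 + (\sum_i\Gamma_i)^{-1}\sum_i\Gamma_i(S_i-S_0)$ is typically dense and is only \emph{approximately} sparse ($s$-sparse plus a remainder with $\ell_1$ norm $O(h)$). The paper therefore cannot run a pure cone/RE argument; Lemma~\ref{th_lemma1} splits into two regimes according to whether $\Vert\tilde{S}-\bar{S}\Vert_{1,M_1}$ dominates $2h + \frac{pr+s\log p}{N\mu}$, and this case analysis is what generates the $h$-dependent pieces of the rate. Second, the debiasing step is not ``a single-group sparse VAR estimation on the target data alone'': since \eqref{alg_eq3} is centered at $\tilde{S}$ rather than $\bar{S}$, its basic inequality inherits the term $\frac{1}{n_0}\Vert\mathcal{X}_0(\tilde{S}-\bar{S})\Vert_F^2$, so both the Frobenius and the $\ell_1$ bounds from the first stage must be transported through the target design --- this is exactly why (B1) is stated with an upper as well as a lower restricted eigenvalue bound. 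Your outline flags $\widehat{L}$-error propagation as the main obstacle, but the propagation of the $\tilde{S}$-error into the second stage is an equally essential, and separate, piece of bookkeeping.
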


Theorem \ref{th2} provides the convergence rate of $S_0$. This consistency rate underscores the non-trivial nature of our method and theoretical developments, as it deviates from existing rates in the literature \citep{li2022transfer,tian2022transfer,li2020transfer}. This distinctive consistency rate offers valuable insights into how the similarity between target and informative groups —quantified by $h$— affects the overall estimation error. Specifically, it elucidates the interplay among the similarity metric $h$, dimensionality $p$, sample sizes of the target and informative groups ($n_0$ and $n_{\mathcal{A}_0}$), rank $r$, and sparsity level $s$. Further, the upper bound on estimation error consists of two parts. First part, $h\sqrt{\frac{\log p}{n_0} } \wedge h^2 + (1 \vee h^4) \frac{s \log p}{n_{\mathcal{A}_0}}$, which we call the transfer learning error, is coming from transfer learning steps. This rate is the same as the rate of traditional transfer learning algorithm \citep{li2020transfer} when no low-rank component is present in the model. Second part is the last term representing the error due to the estimation error of $L$ in our first step. When we estimate $L$ with high accuracy given enough observations (i.e $N \gtrsim  \frac{n_{\mathcal{A}_0}(pr + s\log p)}{\sqrt{sn_0\log p}} $), the third term will be dominated by the transfer learning error. When the informative set $\mathcal{A}$ is empty ($h=0$ and $n_{\mathcal{A}_0} = n_0$), transfer learning error becomes $ \frac{s\log p}{n_0}$, which is the same as the rate of traditional lasso method \citep{basu2015regularized}. As we can see, using extra information received from informative groups improves the estimation accuracy when $h = o(s\sqrt{\frac{\log p}{n_0}})$.

\vspace{-0.35cm}

\subsection{Selecting Informative Set}
Algorithm \ref{algorithm1} is based on a known informative set, while informative set is typically unknown in practice. Misclassifying non-informative observations as informative observations does harm to the performance of transfer learning. Therefore, we need to select useful observations before applying Algorithm \ref{algorithm1}. The goal of this section is to determine informative models from all auxiliary models. This algorithm is inspired by \cite{tian2022transfer}.

The basic idea of selecting informative set comes from cross validation. We evenly split target data into two groups $X_{\mathcal{I}}^{(0)}$ and $X_{\mathcal{I}^c}^{(0)}$ where $\mathcal{I}$ plays the training set role and $\mathcal{I}^c$ as testing set. For each $k$, we estimate transition matrices for the $k$-th auxiliary model based on observations from both $k$-th group and $\mathcal{I}$. Then, we compute squared residual on the testing set, i.e. $ R^{(k)} = \Vert Y_{\mathcal{I}^c}^{(0)}- X_{\mathcal{I}^c}^{(0)} \widehat{\beta}^{(k)} \Vert_2^2$.
A lower test error $R^{(k)}$ implies a closer transition matrix $S_k$ to $S_0$, and thus the ones with lower $R^{(k)}$ is selected as informative set. The proposed algorithm is summarized in Algorithm~\ref{algorithm2} in the Appendix. Next, we make some additional assumptions before presenting the theoretical properties of $R^{(k)}$.

\begin{assumption}\label{assum10}
There exists some constant $M > 0$, such that, $\mathrm{sup}_k\Vert S_k - S_0 \Vert_1 \leq M.$
\end{assumption}
\begin{assumption}\label{assum11}
For $k\in\mathcal{A}$, $\Vert \delta^{(k)} \Vert_2^2 = O(\sqrt{\frac{\log (p)}{n_0/2}}) $; For $k \in \mathcal{A}^c$, $\Vert \delta^{(k)} \Vert_2^2 = \Omega(\sqrt{\frac{\log (p)}{n_0/2}}) $.
\end{assumption}

\begin{theorem}\label{th3}
    Suppose $\Vert S_0 \Vert_0 \leq s$, $\frac{s\log(p)}{n_0}=o(1)$ and $K = o(p^2)$. Taking $\lambda_k=C_1(1\vee M^2)\sqrt{\frac{\log(p)}{n_k+n_0/2}}$, where $C_1$ depends on $\mathcal{M}$ and $\mathfrak{m}$. Under Assumption \ref{assum10}, We have 
    \begin{equation}\label{th3_eq0}
 \Vert 
\delta^{(k)} \Vert_2^2 - \sqrt{\frac{\log(p)}{n_0/2}}       \lesssim R^{(k)} - R_{1}^{(0)} \lesssim \Vert 
\delta^{(k)} \Vert_2^2 + \sqrt{\frac{\log(p)}{n_0/2}},
    \end{equation}
with high probability. $R_1^{(0)}$ is defined in Algorithm \ref{algorithm2}. Further, suppose that Assumption \ref{assum11} holds. Then, we have $\mathbb{P}\{ \mathrm{sup}_{k\in\mathcal{A}} R^{(k)} < \mathrm{inf}_{k\in\mathcal{A}^c} R^{(k)} \}\rightarrow 1.$
\end{theorem}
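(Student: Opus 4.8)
The plan is to prove Theorem~\ref{th3} in two stages, matching its two conclusions. The key object is the test residual $R^{(k)} = \Vert Y_{\mathcal{I}^c}^{(0)} - X_{\mathcal{I}^c}^{(0)}\widehat{\beta}^{(k)}\Vert_2^2$, where $\widehat{\beta}^{(k)}$ is the estimator fit on the $k$-th auxiliary group together with the training half $\mathcal{I}$. The central idea is that $R^{(k)}$ is, up to a baseline quantity $R_1^{(0)}$ and controllable fluctuations, a proxy for the squared distance $\Vert \delta^{(k)}\Vert_2^2 = \Vert S_k - S_0\Vert_2^2$. First I would decompose the residual. Writing $Y_{\mathcal{I}^c}^{(0)} = X_{\mathcal{I}^c}^{(0)} B_0 + \mathcal{E}_{\mathcal{I}^c}^{(0)}$ and recalling $B_k = L + S_k$, the quantity $X_{\mathcal{I}^c}^{(0)}(\widehat{\beta}^{(k)} - B_0)$ expands into a dominant signal term governed by $B_k - B_0 = S_k - S_0 = \delta^{(k)}$, an estimation-error term $\widehat{\beta}^{(k)} - B_k$, and a cross term with the target noise $\mathcal{E}_{\mathcal{I}^c}^{(0)}$. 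The baseline $R_1^{(0)}$ (defined in Algorithm~\ref{algorithm2}) should be precisely the portion coming from the irreducible target noise $\Vert \mathcal{E}_{\mathcal{I}^c}^{(0)}\Vert_2^2$, so that $R^{(k)} - R_1^{(0)}$ isolates the signal plus error plus cross terms.

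Second, I would control each piece to establish the two-sided bound~\eqref{th3_eq0}. The leading signal term, after a concentration argument, should behave like $\Vert \delta^{(k)}\Vert_2^2$ times a factor concentrating around the population covariance $\Gamma_0 = \mathrm{Cov}(X_1^{(0)}, X_1^{(0)})$; invoking Assumption~\ref{assum10} ($\sup_k \Vert S_k - S_0\Vert_1 \leq M$) bounds the sparsity-scale so the $\ell_1$-to-$\ell_2$ slack is controlled, and the choice $\lambda_k = C_1(1 \vee M^2)\sqrt{\log(p)/(n_k + n_0/2)}$ calibrates the lasso-type estimation error of $\widehat{\beta}^{(k)}$. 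The estimation-error term would be bounded using the Restricted Eigenvalue and Deviation Bound machinery (the analogues of (B1) and (B2), or directly Proposition~\ref{prop2}), yielding a contribution of order $s\log(p)/(n_k + n_0/2)$, which under $s\log(p)/n_0 = o(1)$ is absorbed into the $\sqrt{\log(p)/(n_0/2)}$ fluctuation term. The cross term between the target noise and the design must be handled with a deviation bound; this is where temporal dependence forces care, so I would appeal to the Hanson–Wright-type inequality cited in the paper for VAR processes to get the $\sqrt{\log(p)/(n_0/2)}$ order on $\mathcal{I}^c$. Assembling these gives both inequalities in~\eqref{th3_eq0} simultaneously, with high probability uniformly over $k$ after a union bound costing $\log K = o(\log p^2) = o(\log p)$, which is why the assumption $K = o(p^2)$ is imposed.

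Third, the separation conclusion follows by feeding Assumption~\ref{assum11} into~\eqref{th3_eq0}. For $k \in \mathcal{A}$ the assumption gives $\Vert \delta^{(k)}\Vert_2^2 = O(\sqrt{\log(p)/(n_0/2)})$, so the upper bound in~\eqref{th3_eq0} forces $R^{(k)} - R_1^{(0)}$ to be at most a constant multiple of $\sqrt{\log(p)/(n_0/2)}$; for $k \in \mathcal{A}^c$ the lower bound together with $\Vert \delta^{(k)}\Vert_2^2 = \Omega(\sqrt{\log(p)/(n_0/2)})$ forces $R^{(k)} - R_1^{(0)}$ to grow strictly faster than that same rate. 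Choosing the constants so the $\Omega$ side eventually dominates the $O$ side separates the two groups: $\sup_{k \in \mathcal{A}} R^{(k)} < \inf_{k \in \mathcal{A}^c} R^{(k)}$. Since the common baseline $R_1^{(0)}$ cancels in the comparison, only the uniform validity of~\eqref{th3_eq0} over all $k$ is needed, so the probability tends to one.

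The main obstacle I expect is controlling the cross term between the target-group noise $\mathcal{E}_{\mathcal{I}^c}^{(0)}$ and the estimation error $\widehat{\beta}^{(k)} - B_0$ uniformly in $k$ under temporal dependence. Unlike the independent-data setting, the design rows on $\mathcal{I}^c$ are autocorrelated and, more subtly, the estimator $\widehat{\beta}^{(k)}$ is built partly from the training half $\mathcal{I}$ of the \emph{same} target series, so independence between training and test halves is only approximate. Establishing that this term concentrates at the claimed $\sqrt{\log(p)/(n_0/2)}$ rate, tightly enough that it does not overwhelm the signal gap $\Vert \delta^{(k)}\Vert_2^2$ for the informative/non-informative boundary, is the delicate part and is where the VAR-specific Hanson–Wright inequality and the mixing structure of the process must be used carefully.
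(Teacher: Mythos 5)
Your overall skeleton (expand the test residual, isolate a signal term, control error and cross terms, union bound over $k$, then feed Assumption~\ref{assum11} into the two-sided bound) does match the paper's proof. However, there is a genuine gap in the middle step: you mis-identify the population target of $\widehat{\beta}^{(k)}$. By Step 2 of Algorithm~\ref{algorithm2}, $\widehat{S}_k$ is a \emph{pooled} lasso fit on the $k$-th group together with the target training half $\mathcal{I}$, two samples whose true sparse parameters differ ($S_k$ versus $S_0$). Its population target is therefore not $B_k$ but the covariance-weighted combination $S_0+\tilde{\delta}_k$ with
\begin{equation*}
\tilde{\delta}_k=\bigl(\alpha_0\Gamma_0+\alpha_k\Gamma_k\bigr)^{-1}\alpha_k\Gamma_k\,\delta^{(k)},\qquad \alpha_0=\tfrac{n_0/2}{n_0/2+n_k},\ \alpha_k=\tfrac{n_k}{n_0/2+n_k},
\end{equation*}
which is exactly what the paper's Lemma~\ref{th_lemma1} establishes. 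Consequently $\widehat{\beta}^{(k)}-B_k$ does \emph{not} shrink at the rate $s\log(p)/(n_k+n_0/2)$ as you claim; it converges to $(S_0+\tilde{\delta}_k)-S_k$, whose norm is of the same order as $\Vert\delta^{(k)}\Vert_2$. In your expansion of $\Vert \mathcal{X}_{0,\mathcal{I}^c}(\widehat{\beta}^{(k)}-B_0)\Vert_F^2$, the cross term between this ``error'' and the signal $\delta^{(k)}$ is negative and of order $\Vert\delta^{(k)}\Vert_2^2$, so it cannot be absorbed into the $\sqrt{\log(p)/(n_0/2)}$ fluctuation. The upper bound in \eqref{th3_eq0} survives only with a changed constant, but the lower bound --- the part needed for the separation claim $\sup_{k\in\mathcal{A}}R^{(k)}<\inf_{k\in\mathcal{A}^c}R^{(k)}$ --- is destroyed, since an uncontrolled term of signal order could cancel the signal entirely.

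The paper's resolution, which your proposal is missing, has two ingredients: (i) center the analysis at the pseudo-true parameter $\bar{S}=S_0+\tilde{\delta}_k$, so that the genuine lasso-rate bounds of Lemma~\ref{th_lemma1} apply to $\widehat{S}_k-(S_0+\tilde{\delta}_k)$, and the signal in $R^{(k)}-R_1^{(0)}$ becomes $\Vert \mathcal{X}_{0,\mathcal{I}^c}(\widehat{S}_{0,\mathcal{I}}-\widehat{S}_k)\Vert_F^2\asymp\Vert\tilde{\delta}_k\Vert_F^2$; and (ii) the spectral sandwich $2\pi\mathfrak{m}\leq\Lambda_{\min}(\Gamma_i)\leq\Lambda_{\max}(\Gamma_i)\leq 2\pi\mathcal{M}$, which gives $\Vert\tilde{\delta}^{(k)}\Vert_2^2\asymp\Vert\delta^{(k)}\Vert_2^2$ and $\Vert\tilde{\delta}^{(k)}\Vert_1\lesssim h$ --- this is precisely why the theorem's constant $C_1$ depends on $\mathcal{M}$ and $\mathfrak{m}$. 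Two smaller inaccuracies follow the same pattern: $R_1^{(0)}$ is not ``precisely the irreducible noise'' but the test error of $\widehat{S}_{0,\mathcal{I}}$, so the terms in $S_0-\widehat{S}_{0,\mathcal{I}}$ must be tracked explicitly (the paper does so and uses $\frac{s\log p}{n_0}=o(1)$ to kill them); and the training/test dependence you flag as the main obstacle is dispatched in the paper without any conditioning argument, simply by H\"older's inequality --- the $\ell_\infty$ deviation bound on $\mathcal{X}_{0,\mathcal{I}^c}'\mathcal{E}_{0,\mathcal{I}^c}$ (Proposition~\ref{prop1}) multiplied by the deterministic-in-flavor $\ell_1$ bound $\Vert\widehat{S}_{0,\mathcal{I}}-\widehat{S}_k\Vert_1\leq CM$ supplied by Assumption~\ref{assum10}, which never requires the two halves to be independent.
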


Theorem \ref{th3} implies that informative groups and non-informative groups can be perfectly separated based on $R^{(k)} - R_1^{(0)}$. Since $R^{(k)} - R_1^{(0)}$ can be treated as testing error, models with lower $R^{(k)} - R_1^{(0)}$ will be preferred for transfer learning. Basically, we can set a threshold and select models with the value of $R^{(k)} - R_1^{(0)}$ below the threshold as informative set. In Algorithm \ref{algorithm2}, we use $| R_1^{(0)} - R_2^{(0)}|$ as a threshold to select models. As we can see in \eqref{th3_eq0}, $| R_1^{(0)} - R_2^{(0)}|$ is lower than $\sqrt{\frac{\log p}{n_0}}$ with high probability, which implies any model in $\mathcal{A}^c$ will be excluded from the estimated informative set by our proposed algorithm.

\vspace{-0.35cm}

\subsection{Inference for Sparse Component}

We propose an additional debiasing step to help with inference. The explicit form of debiased estimator is $   \widehat{S}^{on} = \widehat{S}_{tran} + \frac{1}{n_0}\sum_{i=1}^{n_0}M_i X_i^{(0)}(X_{i+1}^{(0)} - X_i^{(0)}(\widehat{L} + \widehat{S}_{tran}))^{'}$, where $M_i$ is called the debiasing matrix and needs to be estimated by target model. If observations are i.i.d, setting $M_1=M_2\cdots=M_{n_0}$ is an effective way to debias $\widehat{L}$ \citep{javanmard2014confidence}. However, for VAR models, the existence of dependency destroys the asymptotic normality. To fix this problem, we follow the online procedure in \cite{deshpande2021online} in estimating $M_i$ by past observations, $\{X_t\}_{t< i}$, which makes $M_i$ predictable. This online algorithm works for any estimator $\widehat{S}$ as long as $\Vert \widehat{S} -  S\Vert_1 = o(s\sqrt{\frac{\log p}{n_0}})$. For the proposed transfer learning estimator, such a rate holds when $h=o(s\sqrt{\frac{\log p}{n_0}})$. Details are deferred to the Appendix due to page limits.


\vspace{-0.35cm}

\section{SIMULATION RESULTS}\label{sec4}
\vspace{-0.35cm}

In this section, we compare the performance of the proposed transfer learning algorithms with the lasso method (see also additional simulation studies on improvement for recovering the low-rank part utilizing the proposed transfer learning algorithm as well as reporting on computation time in the appendix). Transfer learning algorithms include the Oracle Trans-Lasso (Algorithm \ref{algorithm1} with known $\mathcal{A}$), Trans-Lasso (Algorithm \ref{algorithm1} with $\mathcal{A}$ selected by Algorithm \ref{algorithm2}), and naive Trans-lasso (Algorithm \ref{algorithm1} with $\mathcal{A}=\{1,2,\cdots,K\}$). The lasso method is applied only for the target data. Discuss on hyperparameter selection and their sensitivity analysis are summarized in the Appendix due to space consideration.


We focus on both estimation and inference performances of oracle transfer learning algorithms with the lasso method. All simulations are repeated 200 times. In this simulation setting, the entries of $S_0$ are generated
independently from a Bernoulli distribution with success probability 
$q = 0.02$, multiplied by $b\cdot \mathrm{uniform}(\{+1, -1\})$ with $b = 0.25$, i.e. $b\cdot\mathrm{Bernoulli}(q)\cdot \mathrm{uniform}(\{+1, -1\})$. $L$ is generated by $L = U  D  V^{'}$, where $D := diag(0.2, r)$ is a diagonal matrix. Rank of $L$, r is set to be 8, We set $p = 100$, $n_0 = 200$, $n_1 = n_2= \cdots = n_K = 100$, and $K = 10$. Note that in this setting, the total number of parameters in the target model is $p^2 = 10,000$ while there are only $200$ time points. Thus, this can be regarded as the high-dimensional case. Let $\mathcal{A}$ denote the informative set. We define $\mathcal{J}$ as the set of non-zero entries in $S_0$, and $\mathcal{J}^c$ as the set of zero entries. For the transition matrices of auxiliary models $S_k$, we construct them by modifying entries of $S_0$ in $\mathcal{J}$ and $\mathcal{J}^c$ separately. For a given $k$, let $H_k$ be a random subset of $\mathcal{J}$ such that $|H_k| = \gamma |\mathcal{J}|$, and $G_k$ be a random subset of $\mathcal{J}^c$ such that $|G_k| = \gamma |\mathcal{J}^c| $, where $\gamma = \gamma_1$ if $k\in\mathcal{A}$, and $\gamma = \gamma_2$ otherwise, and it ranges from $0$ to $1$.
If $(i,j)\in H_k$, we set $S^{(k)}_{ij} = - S^{(0)}_{ij} $. If $(i,j) \in G_k$, we set $S^{(k)}_{ij} = S^{(0)}_{ij} + \eta_{ij}, \text{where}\ \eta_{ij}\sim \text{uniform}(-0.1,0.1) $. The two terms $\gamma_1$ and $\gamma_2$ are the percentage of changes we make for entries of $A_0$. We set $\gamma_1 = 0.04$, $\gamma_2 = 0.4$ and $|\mathcal{A}| = \{0,2,4,\cdots,10\}$.


The estimation error is shown in Figure~\ref{simu2_fig1} in the Appendix. We present the absolute estimation error, i.e $\Vert S_0 - \widehat{S}\Vert_1$, for lasso, Oracle Trans-lasso, naive Trans-lasso and Trans-lasso with an increasing samples in informative set. As excepted, Trans-lasso has better estimation error than lasso method when we consider enough informative samples. The similar behavior of Oracle Trans-lasso and lasso implies that Algorithm \ref{algorithm2} selects informative set accurately. As for naive Trans-lasso, it outputs the worst estimation when the size of informative set $|\mathcal{A}|$ is small and reach the same accuracy level when $|\mathcal{A}|$ increases. This is because naive Trans-lasso takes non-informative set into consideration, which damages the estimation performance.

In addition, we construct entry-wise confidence interval for sparse matrix based on Trans-lasso and lasso separately. To make comparison for inference performance, we consider four metrics: True Positive Rate (TPR), False Positive Rate (FPR), coverage rate of confidence intervals and average confidence interval length (Avg CI length). Figure \ref{simu2_fig2} summarizes the results for all methods
at significance level $\alpha = 0.05$. As we can see, Trans-lasso shows a comparable result with lasso method when $|\mathcal{A}| = 0$ (all auxiliary sets are non-informative). This is because transfer learning with no informative set  is equivalent to lasso method. As more informative sets are provided, Trans-lasso performs better in terms of FPR, TPR and coverage rate. The coverage rate gradually goes up to $0.95$ as $|\mathcal{A}|$ increases, which is consistent with our significant level $\alpha = 0.05$. Since both lasso and transfer learning use the same method to generate conditional variance $V_n$, the length of confidence intervals for lasso and transfer learning are at the same level all the time.

\begin{figure}[h]
    \begin{tabular}{ccc}
        \centering
        \subfigure[FPR(naive)]{
        \begin{minipage}[t]{0.24\linewidth}
        \centering
        \includegraphics[width=1.0in]{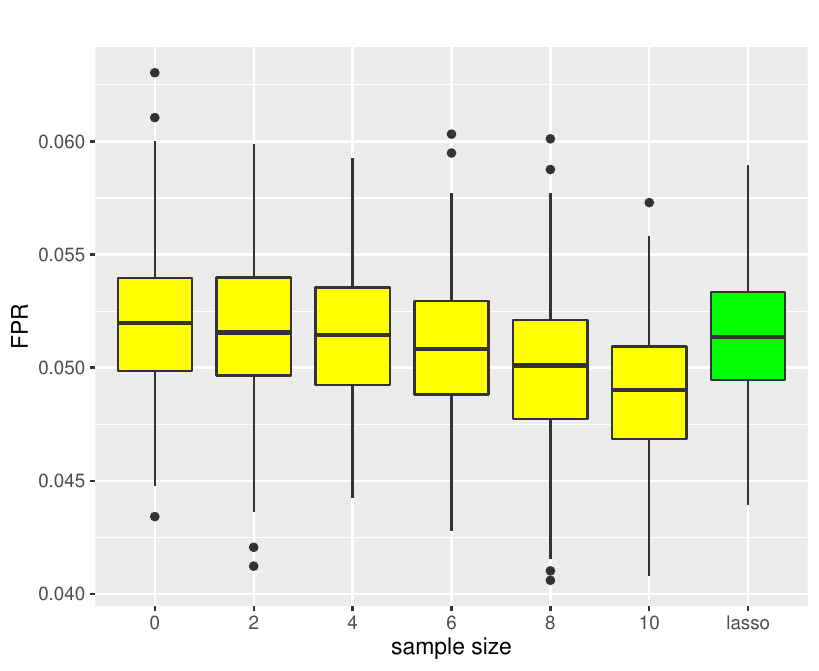}
        \end{minipage}%
        }&\subfigure[FPR(Oracle-Trans)]{
        \begin{minipage}[t]{0.24\linewidth}
        \centering
        \includegraphics[width=1.0in]{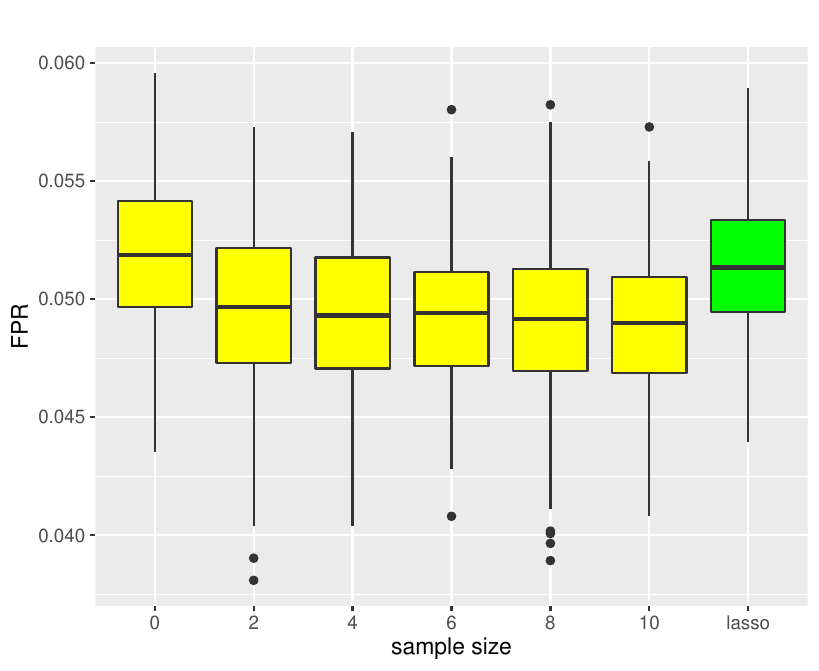}
        \end{minipage}%
        }&\subfigure[FPR(Trans-Lasso)]{
        \begin{minipage}[t]{0.24\linewidth}
        \centering
        \includegraphics[width=1.0in]{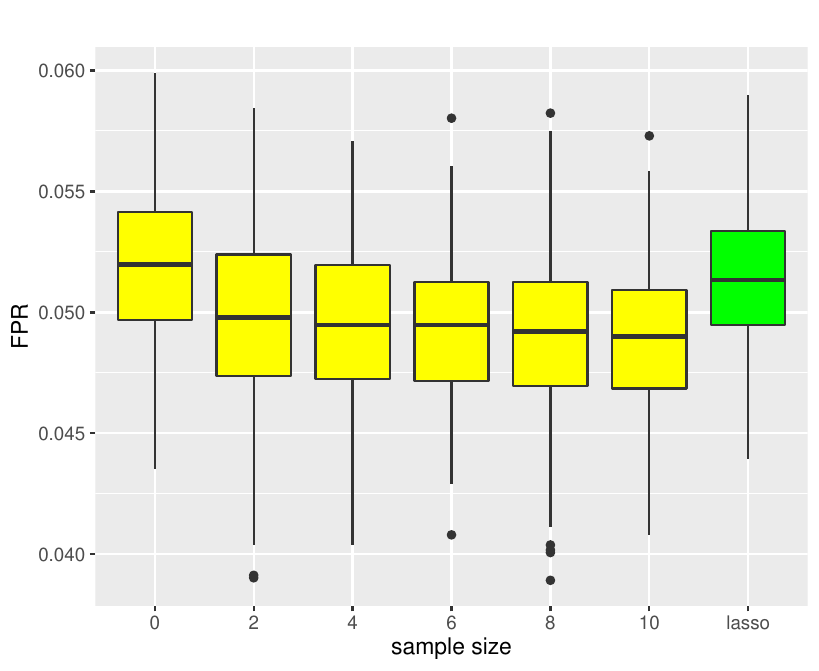}
        \end{minipage}%
        }
        \\
        \subfigure[TPR(naive)]{
        \begin{minipage}[t]{0.24\linewidth}
        \centering
        \includegraphics[width=1.0in]{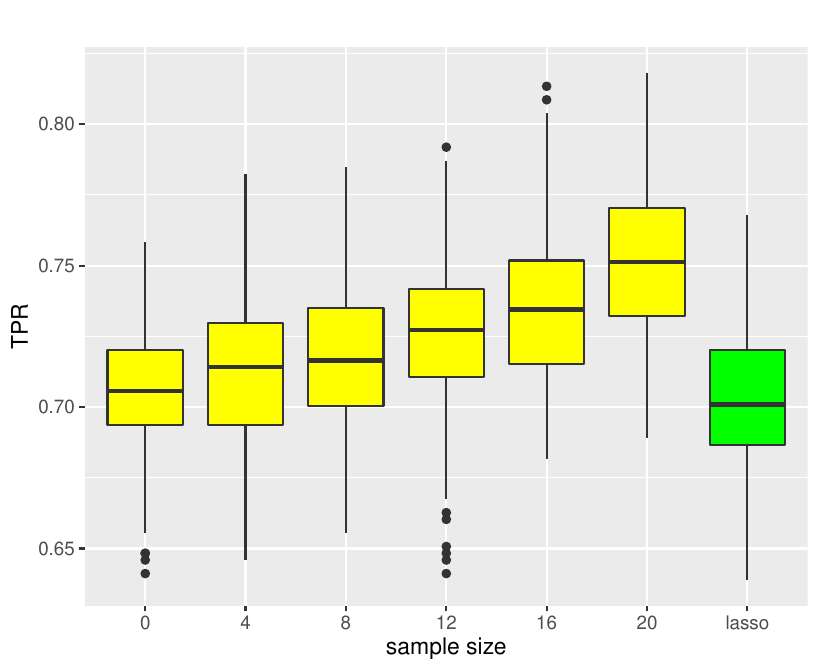}
        \end{minipage}%
        }&\subfigure[TPR(Oracle-Trans)]{
        \begin{minipage}[t]{0.24\linewidth}
        \centering
        \includegraphics[width=1.0in]{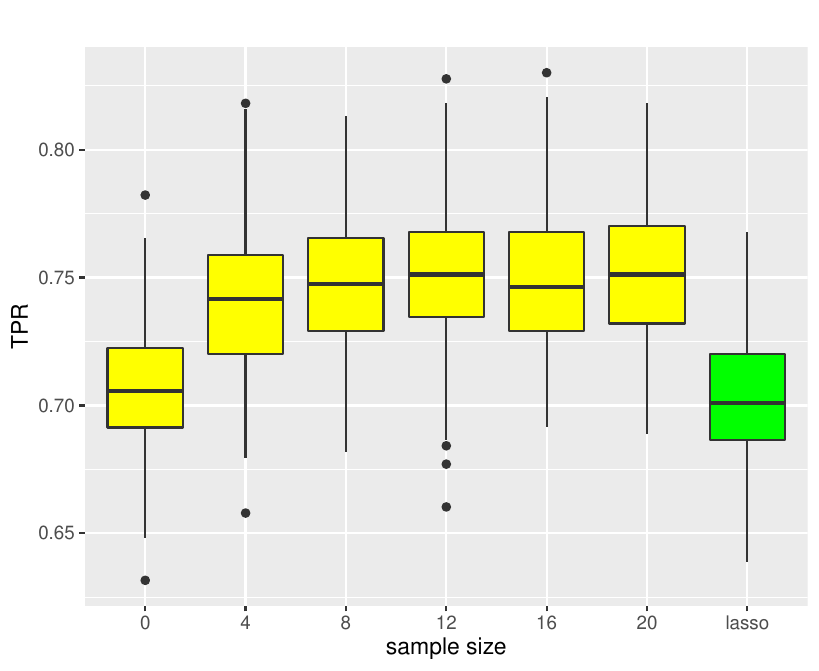}
        \end{minipage}%
        }&\subfigure[TPR(Trans-Lasso)]{
        \begin{minipage}[t]{0.24\linewidth}
        \centering
        \includegraphics[width=1.0in]{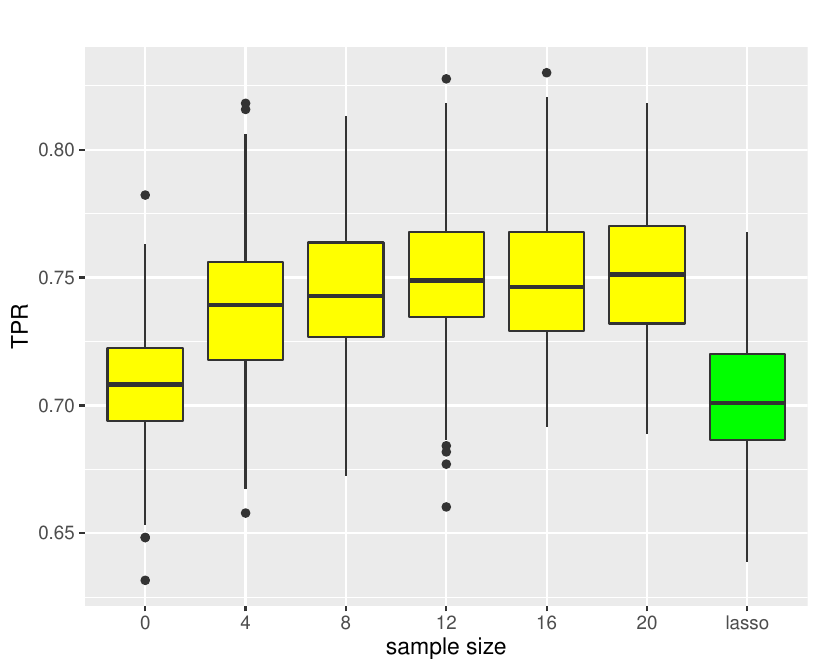}
        \end{minipage}%
        }
        \\
        \subfigure[Avg CI length(naive)]{
        \begin{minipage}[t]{0.24\linewidth}
        \centering
        \includegraphics[width=1.0in]{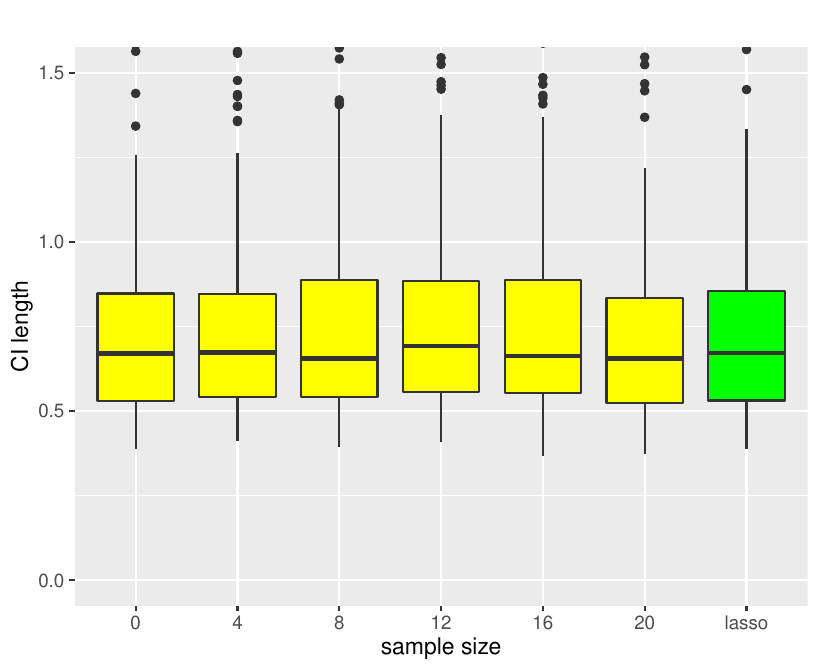}
        \end{minipage}%
        }&\subfigure[Avg CI length(Oracle-Trans)]{
        \begin{minipage}[t]{0.24\linewidth}
        \centering
        \includegraphics[width=1.0in]{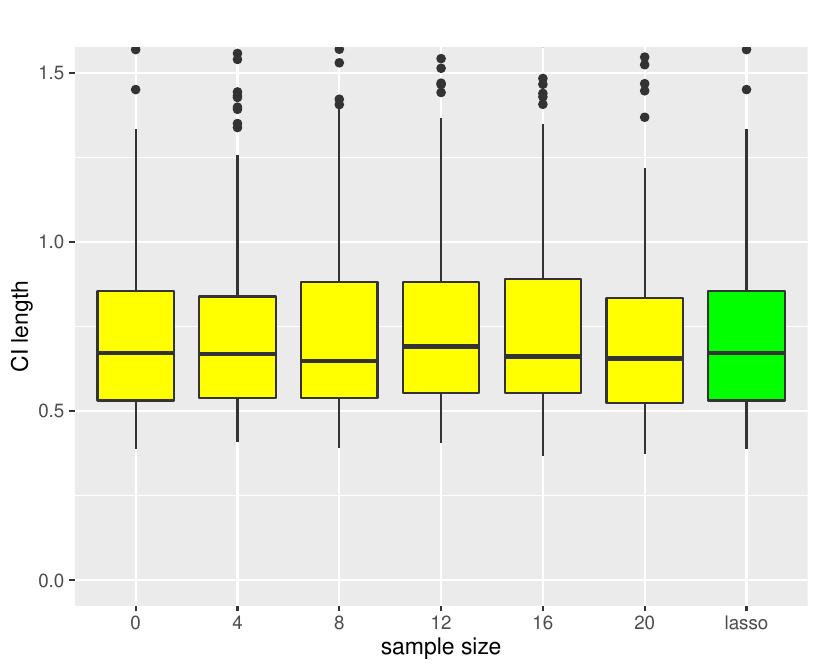}
        \end{minipage}%
        }&\subfigure[Avg CI length(Trans-Lasso)]{
        \begin{minipage}[t]{0.24\linewidth}
        \centering
        \includegraphics[width=1.0in]{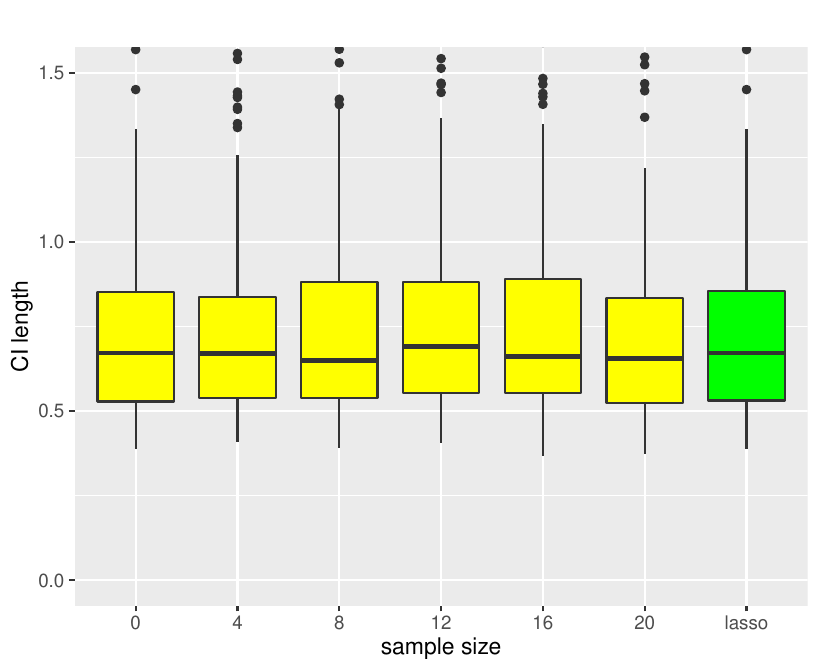}
        \end{minipage}%
        }
        \\
        \subfigure[Coverage rate(naive)]{
        \begin{minipage}[t]{0.24\linewidth}
        \centering
        \includegraphics[width=1.0in]{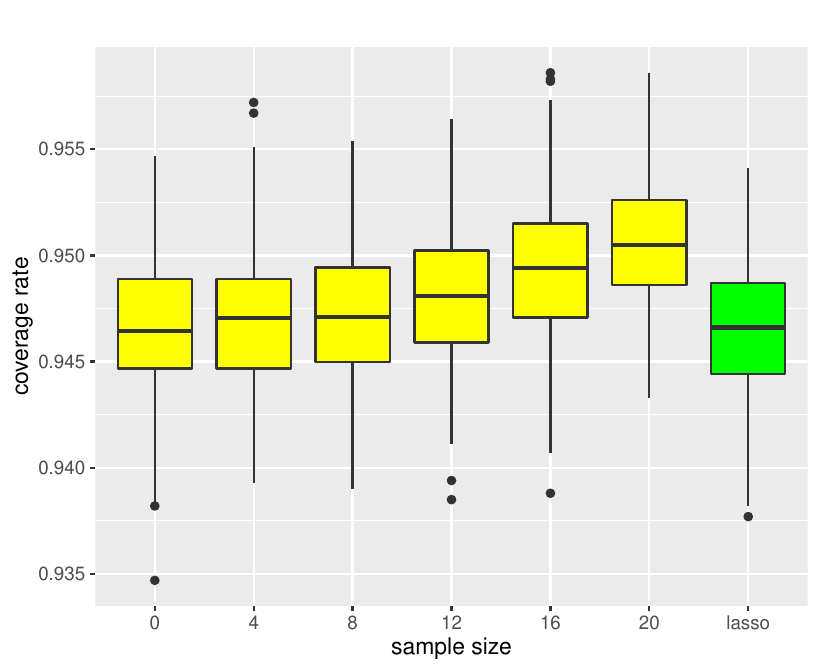}
        \end{minipage}%
        }&\subfigure[Coverage rate(Oracle-Trans)]{
        \begin{minipage}[t]{0.24\linewidth}
        \centering
        \includegraphics[width=1.0in]{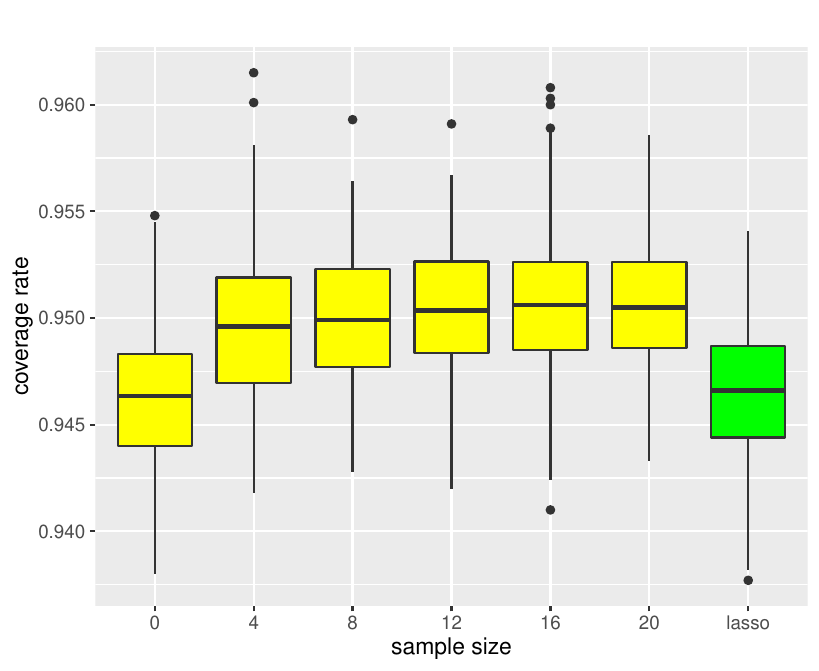}
        \end{minipage}%
        }&\subfigure[Coverage rate(Trans-Lasso)]{
        \begin{minipage}[t]{0.24\linewidth}
        \centering
        \includegraphics[width=1.0in]{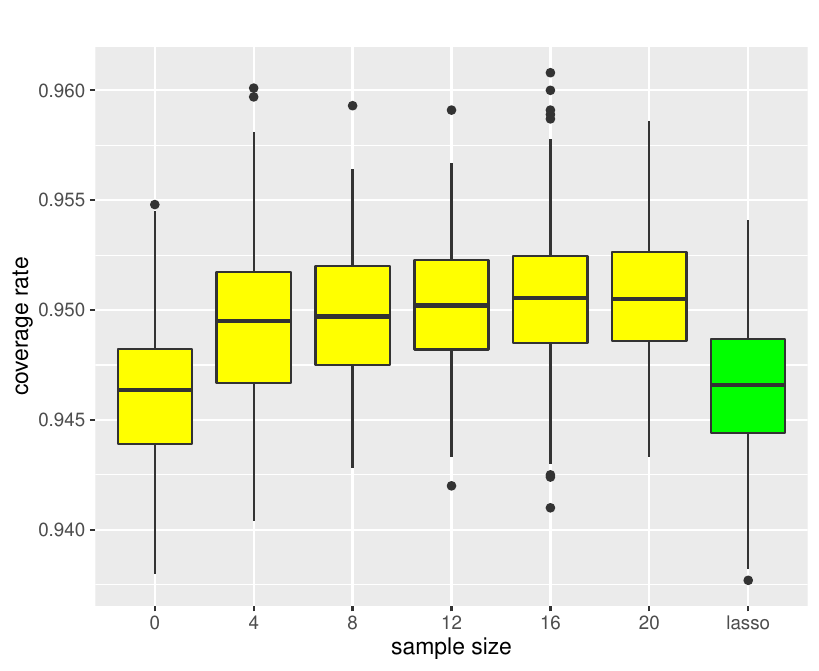}
        \end{minipage}%
        }
    \end{tabular}
\caption{FPR (False Positive Rate), TPR (True Positive
Rate), Coverage Rate and Average Length of Confidence Intervals (Avg CI Length) at significance level $\alpha = 0.05$. The $x$ axis is the size of informative set $| \mathcal{A}|$.}
\label{simu2_fig2}
\end{figure}

\vspace{-0.35cm}

\section{A REAL DATA EXAMPLE}\label{sec5}
\vspace{-0.35cm}
The proposed transfer learning algorithm is applied to a surveillance video data set obtained from the CAVIAR project\footnote{http://homepages.inf.ed.ac.uk/rbf/CAVIARDATA1/}. A number of video clips record different actions by people in diverse settings, including walking alone, meeting with others, entering and exiting a room, etc. The data set we analyze is ``Two other people meet and walk together". The original data set has $837$ images in total and the resolution of each image is half-resolution PAL standard (384 × 288 pixels, 25 frames per second). Before applying our proposed algorithm, we re-sized the original images from 384 × 288 pixels to
32 × 24 pixels and used a gray-scaled scheme instead of the
original colored image to accelerate computations. We then digitalize and vectorize 32 × 24 image to be a 768 dimension vector. Therefore, the resulting time series process has $n = 837$ time points and $p = 768$ features.

The whole video data can be divided into 12 segments depending on human activities in the video (see more details in \cite{bai2020multiple}). The background (mainly the lobby) seems fixed during the time while certain human movements occur during the video. When our model is applied to this data, the low-rank part will capture the background (lobby) while the sparse components will capture the additional human movements during the video clip. To ensure the proposed model is a good fit, we compared it with several additional parameterizations (including low-rank only, sparse only, etc.) and concluded that the proposed algorithm coupled with low rank plus sparse model parameters outperforms all other competing models. More specifically, we consider five scenarios in total: Trans-lasso(L+S), Trans-lasso(S), lasso(L+S), lasso(S) and Low-rank. Trans-lasso(L+S) and lasso(L+S) refer to methods that we model VAR with low-rank plus sparse structure, with and without transfer learning, respectively. Trans-lasso(S) and lasso(S) refer to methods that we model VAR with sparse components only, with and without transfer learning, respectively. Low-rank method implies that we model VAR with only low-rank component for each segment. Results of this comparison are summarized in Table~\ref{table1} in the Appendix. As seen from this table, the proposed modeling framework with the help from the proposed transfer learning algorithm achieves the best prediction error overall.

First row of Figure \ref{fig3} shows the start time point for four of these segments/movements: 1st segment, 4th segment, 6th segment and 9th segment. We apply Algorithm \ref{algorithm1} to estimate the low-rank component (dimension is $768 \times 768$) for each segment. Since non-changing low-rank component corresponding to the stationary background of the space surveyed and the changing sparse component corresponds to movement of people in and out of the space in the evolving foreground, the sparse component can imply the position of people in the lobby. To visualize the information contained in sparse component, we (1) construct entry-wise $95\%$ confidence interval of the sparse estimator $\widehat{S}$, (2) count the number of significant entries in each row, i.e $V := (v_1,\cdots, v_{768}), \ v_i = \#\{j:\widehat{S}_{ij} \mbox{is significant}\} $, (3) map vector $V$ back to a $32 \times 24$ matrix $M$. Figure \ref{fig3e} -\ref{fig3h} shows the heatmap of $M$. As we can see, the dark region of the heatmap perfectly matches the position of people in original image. Results for other segments are summarized in the Appendix. Further, Table~\ref{table1} summarizes out-of-sample mean squared prediction error (each segment is split such that its first $2/3$ observations are used as training and the remaining parts as testing data) obtained from lasso and Trans-lasso (T-lasso) algorithms which clearly illustrates the great reduction of prediction error when similar images are used in the estimation procedure for each segment.

\begin{table}[h]
\caption{Mean Squared Prediction Error for Each Segment; Standard Errors Are Shown in Parentheses.}\label{table1}
    \centering
    \begin{tabular}{c|c|c|c|}
    \hline
    & seg1 & seg2 & seg3 \\
    \hline
T-lasso & 7.205(0.015) & 0.133(0.003) & 2.177(0.012)  \\
lasso & 8.660(0.017) & 0.241(0.004) & 4.484(0.017)  \\
\hline
    & seg4 & seg5 & seg6 \\
    \hline
T-lasso  & 0.468(0.005) & 0.092(0.002) & 0.916(0.008) \\
lasso  & 1.928(0.011) & 0.450(0.005) & 2.686(0.013) \\
 \hline
 & seg7 & seg8 & seg9 \\
 \hline
T-lasso & 0.372(0.005) & 0.233(0.004) & 0.146(0.003) \\
lasso & 1.653(0.010) & 1.235(0.009) & 0.523(0.006)  \\
\hline
 & seg10 & seg11 & seg12\\
 \hline
T-lasso & 0.094(0.002) & 0.071(0.002) & 0.139(0.002)\\
lasso  & 0.189(0.004) & 0.102(0.004) & 0.153(0.002) \\
\hline
    \end{tabular}
\end{table}

\begin{figure}[h]
    \begin{tabular}{cc}
        \centering
        \subfigure[T = 0]{
        \begin{minipage}[t]{0.4\linewidth}
        \centering
        \includegraphics[width=1.4in]{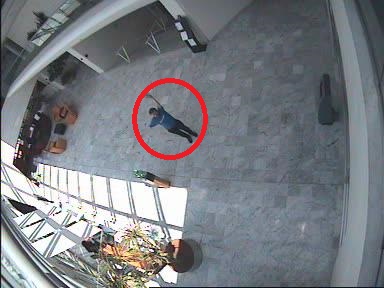}
        \label{fig3a}
        \end{minipage}%
        }&\subfigure[T=231]{
        \begin{minipage}[t]{0.4\linewidth}
        \centering
        \includegraphics[width=1.4in]{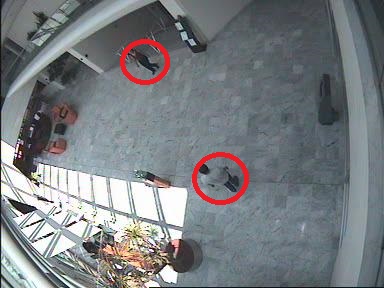}
        \label{fig3b}
        \end{minipage}%
        }
        \\
        \subfigure[T = 347]{
        \begin{minipage}[t]{0.4\linewidth}
        \centering
        \includegraphics[width=1.4in]{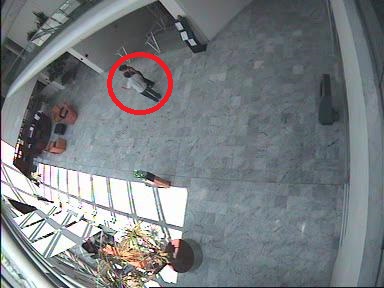}
        \label{fig3c}
        \end{minipage}%
        }&\subfigure[T = 521]{
        \begin{minipage}[t]{0.4\linewidth}
        \centering
        \includegraphics[width=1.4in]{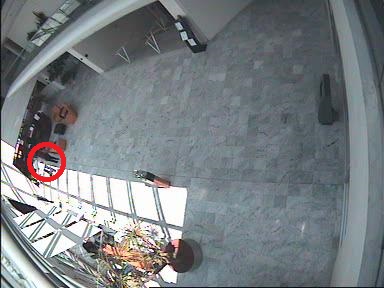}
        \label{fig3d}
        \end{minipage}%
        }
        \\
        \subfigure[T = 0]{
        \begin{minipage}[t]{0.4\linewidth}
        \centering
        \includegraphics[width=1.4in]{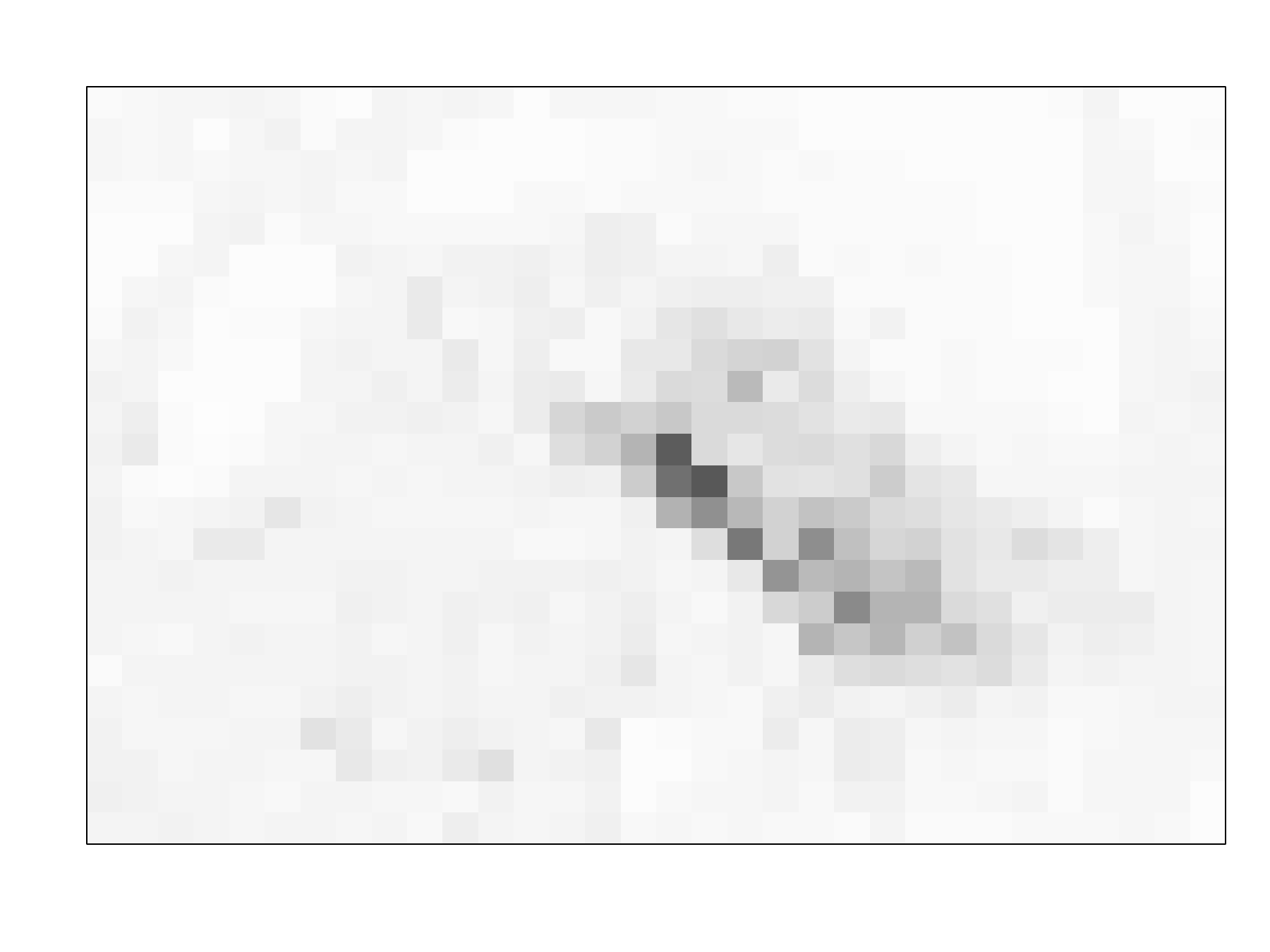}
        \label{fig3e}
        \end{minipage}%
        }&\subfigure[T=231]{
        \begin{minipage}[t]{0.4\linewidth}
        \centering
        \includegraphics[width=1.4in]{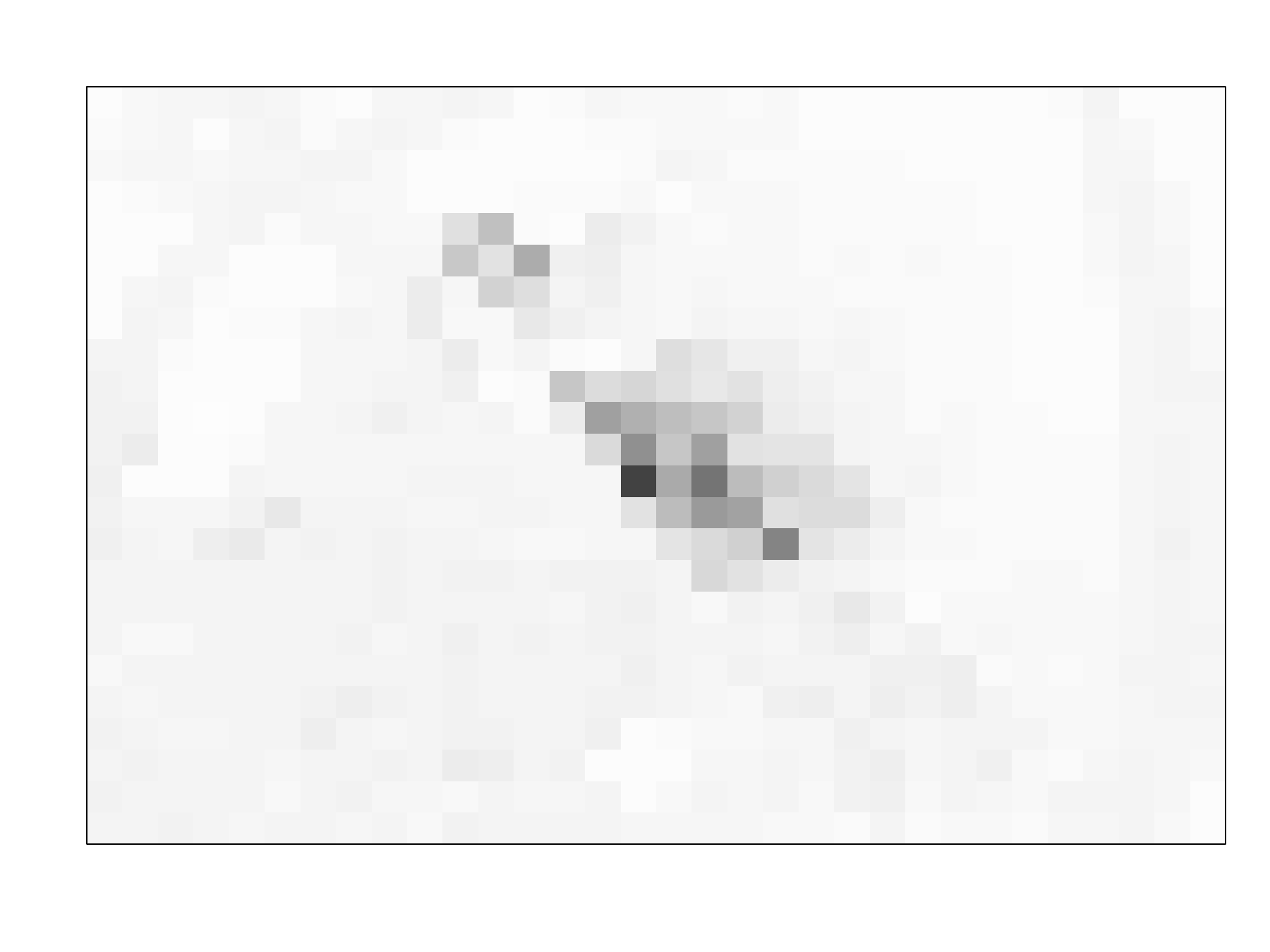}
        \label{fig3f}
        \end{minipage}%
        }
        \\
        \subfigure[T = 347]{
        \begin{minipage}[t]{0.4\linewidth}
        \centering
        \includegraphics[width=1.4in]{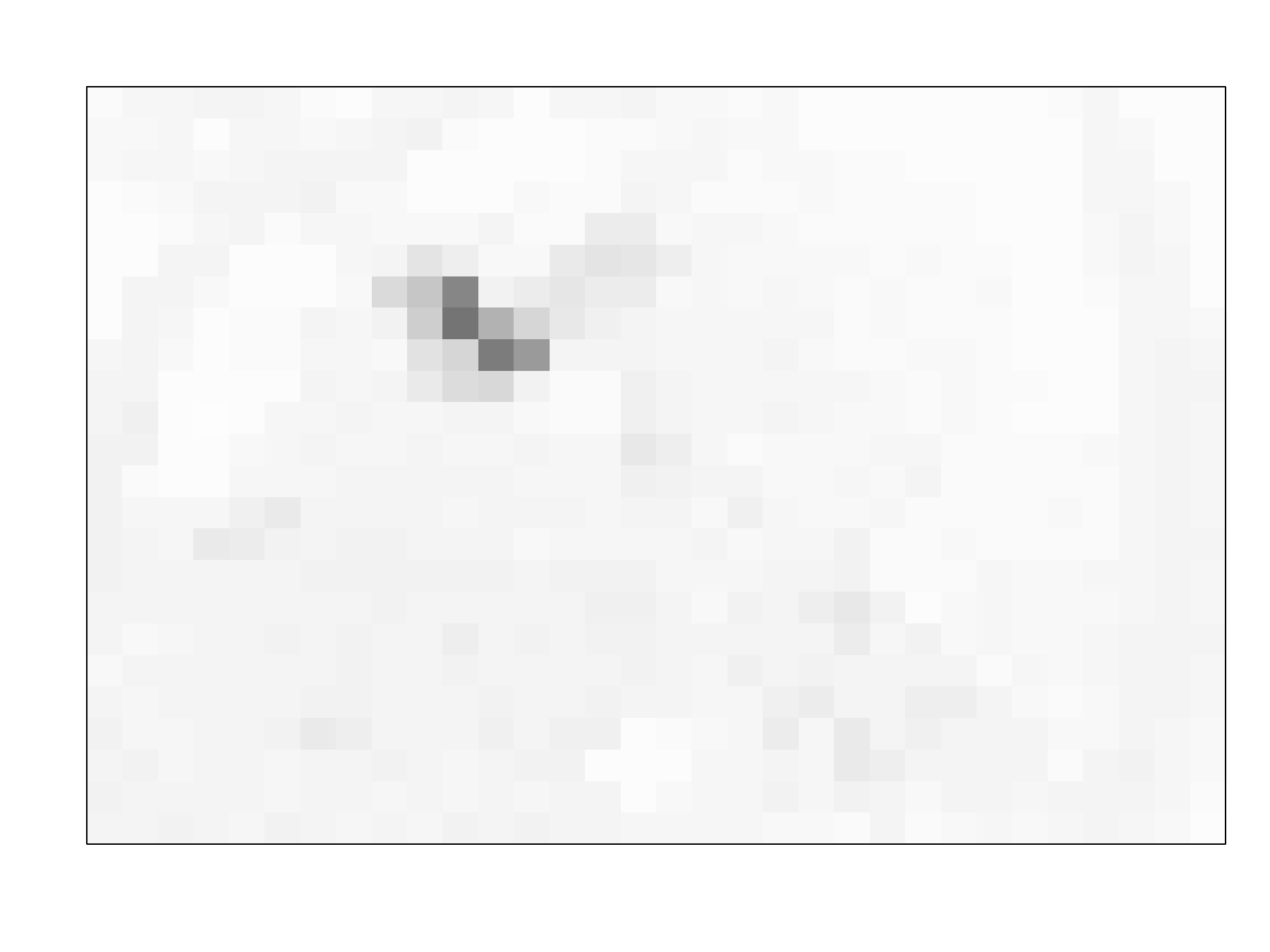}
        \label{fig3g}
        \end{minipage}%
        }&\subfigure[T = 521]{
        \begin{minipage}[t]{0.4\linewidth}
        \centering
        \includegraphics[width=1.4in]{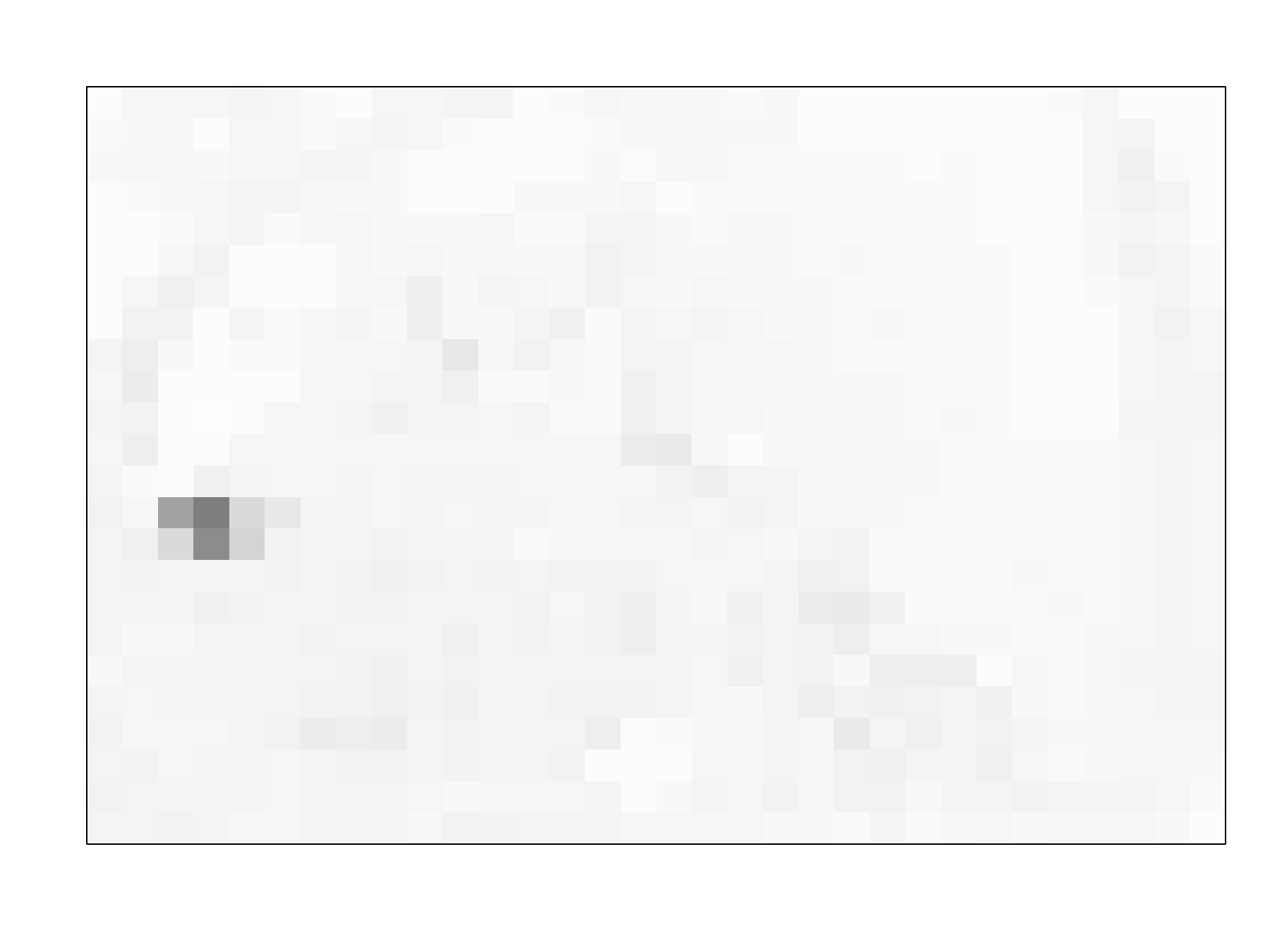}
        \label{fig3h}
        \end{minipage}%
        }
    \end{tabular}
\caption{\ref{fig3a}-\ref{fig3d} are the view of footage for 1st segment, 4th segment, 6th segment and 9th segment respectively. \ref{fig3e}-\ref{fig3h} are corresponding heatmaps of sparse estimators.}
\label{fig3}
\end{figure}

\section{DISCUSSION}\label{sec6}
In this paper, we propose an step-wise algorithm for implementing transfer learning for VAR models with low-rank plus sparse structure. Theoretical results confirm that our transfer learning algorithm can improve the estimation accuracy for the low-rank and sparse components given known informative set. We also provide an approach based on prediction error for properly selecting the informative sets when it is unknown. Numerical experiments and real data applications support the theoretical findings. In our model, all auxiliary models have a common low-rank component. How to relax to the case where auxiliary models have different but similar low-rank components is an interesting future research direction. Measuring the similarity according to the column space of low-rank components \citep{tian2023learning} can be a feasible approach. Another limitation of our work is considering a VAR model with single lag. Extensions to VAR model of general lag, i.e. VAR(d) models (utilizing techniques in \cite{lutkepohl2005new}) is of interest.

\bibliographystyle{apalike}
\bibliography{main}

\section*{Appendix}
In this section, some prior information about VAR models are summarized in Section~\ref{sec:prior}, some useful lemmas with their proofs are provided in Section~\ref{sec:1} while some propositions with their proofs are summarized in Section~\ref{sec:2}. Proof of main theorems are stated in Section~\ref{sec:3} while additional details on the proposed algorithms are described in Section~\ref{sec:4}. Further, some additional details on numerical studies and a new simulation study are explained in Section~\ref{sec:5}. Finally, computer information is summarized in Section~\ref{sec:6}. 

\section{Prior Knowledge for VAR Model in High-dimensions}\label{sec:prior}


For a $p$-dimensional centered, covariance-stationary process $\{X_t\}_{t \in \mathbb{Z}}$ with autocovariance function $\Gamma_X (h) = \mathrm{Cov}(X_t, X_{t+h})$, its spectral density is defined as $f_X(\theta)
=
\frac{1}{2\pi} \sum_{-\infty}^{\infty} \Gamma_X(h) e^{-ih \theta}$. For the VAR model \eqref{sec1_eq1} in the main file, the spectral density has the closed form $
f_{X^{(k)}}(\theta)
=
\frac{1}{2\pi}(\mathcal{B}_k^{-1}(e^{i\theta}))\Sigma_k (\mathcal{B}_k^{-1}(e^{i\theta}))^{*}
$ where $\mathcal{B}_k(z) = I_p - B_k^{'} z$ is the characteristic polynomial and $\Sigma_k$ is the covariance matrix of the error term. To introduce some useful properties for VAR model, we need the following quantities

\vspace{-0.35cm}

\begin{equation*}
\begin{split}
&\mathcal{M}(f_{X^{(k)}}) := \mathop{\mathrm{sup}}_{\theta \in [-\pi, \pi]} \Lambda_{\mathrm{max}}(f_{X^{(k)}}(\theta))\\
&\mathfrak{m}(f_{X^{(k)}}) := \mathop{\mathrm{sup}}_{\theta \in [-\pi, \pi]} \Lambda_{\mathrm{min}}(f_{X^{(k)}}(\theta))\\
&\mu_{\mathrm{max}}(\mathcal{B}_k) := \mathop{\mathrm{max}}_{|z| = 1}\Lambda_{\mathrm{max}}(\mathcal{B}_k^{*}(z) \mathcal{B}_k(z))\\
&\mu_{\mathrm{min}}(\mathcal{B}_k) := \mathop{\mathrm{min}}_{|z| = 1}\Lambda_{\mathrm{max}}(\mathcal{B}_k^{*}(z) \mathcal{B}_k(z)).
\end{split}
\end{equation*}

\vspace{-0.35cm}

Stability is always a basic assumption in time series model to ensure consistent estimation. \cite{basu2015regularized} provide a new measure of stability described by $\mathcal{M}(f_X)$ and shows that a larger $\mathcal{M}(f_X)$ implies a  less stable process. $\mathcal{M}(f_{X})$ and $ \mathfrak{m}(f_{X})$ capture the dependence among the univariate components of the vector-valued time series and help quantify dependence among the columns of the design matrix in our analysis. For VAR model, the boundness of $\mathcal{M}(f_{X})$ and $ \mathfrak{m}(f_{X})$ are related to $\mu_{\mathrm{max}}(\mathcal{B})$ and $\mu_{\mathrm{min}}(\mathcal{B})$:
$
\mathcal{M}(f_{X^{(k)}}) \leq \frac{1}{2\pi}\frac{\Lambda_{\mathrm{max}}(\Sigma_k)}{\mu_{\mathrm{min}}(\mathcal{B}_k)},\ 
\mathfrak{m}(f_{X^{(k)}}) \geq \frac{1}{2\pi}\frac{\Lambda_{\mathrm{min}}(\Sigma_k)}{\mu_{\mathrm{max}}(\mathcal{B}_k)}.
$

\noindent
\textit{Notations.} For a matrix $A$, its transpose is denoted by $A^{'}$ while $\text{vec}(A)$ is the vectorized version of matrix $A$. $\Lambda_{min}(A)$, $\Lambda_{max}(A)$ denote the smallest and largest eigenvalues of $A$, respectively. Define $\Gamma_{X^{(k)}}(i-j) := \mathrm{Cov}(X_i^{(k)}, X_j^{(k)})$.
$\Upsilon_{n_k}^{X^{(k)}} = \mathrm{Cov}(\mathrm{vec}((\mathcal{X}^{(k)})^{'}),\mathrm{vec}((\mathcal{X}^{(k)})^{'})) $.
From Proposition 2.3 in \cite{basu2015regularized}, we know that
\begin{equation*}
\begin{split}
&2\pi \mathfrak{m}(f_{X^{(k)}}) \leq \Lambda_{min}(\Upsilon_{n_k}^{X^{(k)}}) \leq \Lambda_{max}(\Upsilon_{n_k}^{X^{(k)}}) \leq 2\pi \mathcal{M}(f_{X^{(k)}})\\
&2\pi \mathfrak{m}(f_{X^{(k)}}) \leq \Lambda_{min}(\Gamma_k) \leq \Lambda_{max}(\Gamma_k) \leq 2\pi \mathcal{M}(f_{X^{(k)}}).
\end{split}
\end{equation*}

Similarly, for $\{\epsilon_0^{(k)}, \cdots, \epsilon_{n_k}^{(k)}\}$, we have
\begin{equation*}
\begin{split}
&2\pi \mathfrak{m}(f_{\epsilon^{(k)}}) \leq \Lambda_{min}(\Upsilon_{n_k}^{\epsilon^{(k)}}) \leq \Lambda_{max}(\Upsilon_{n_k}^{\epsilon^{(k)}}) \leq 2\pi \mathcal{M}(f_{\epsilon^{(k)}})\\
&2\pi \mathfrak{m}(f_{\epsilon^{(k)}}) \leq \Lambda_{min}(\Sigma_k) \leq \Lambda_{max}(\Sigma_k) \leq 2\pi \mathcal{M}(f_{\epsilon^{(k)}}).
\end{split}
\end{equation*}

We define $\mathcal{M}_\epsilon := \mathrm{max}_k \mathcal{M}(f_{\epsilon^{(k)}})$ and $\mathfrak{m}_\epsilon := \mathrm{min}_k \mathfrak{m}(f_{\epsilon^{(k)}})$. For two matrices $A$ and $B$, the inner product of $A$ and $B$ is defined as $<A, B> := \sum_{i,j} (AB^{'})_{ij}$.

\section{Useful Lemmas with Proofs}\label{sec:1}

\begin{lemma}\label{lemma1}
Consider model \eqref{sec1_eq1}. Recall the following notation $2\pi\mathcal{M}=\mathop{\mathrm{max}}_k\Lambda_{max}(\Upsilon_{n_k}^{X^{(k)}})$, $\Gamma_k = Cov(X_0^{(k)}, X_0^{(k)})$. Suppose $v_0,\cdots,v_K\in \mathbb{R}^{p}$. We have that,

\begin{equation}\label{eq1}
  \mathbb{P}\left( \frac{1}{N} \left| \sum_{\substack{k=0,\cdots,K \\ i=1,\cdots,n_k}} v_{k}^{'}[(X_i^{(k)})(X_i^{(k)})^{'}-\Gamma_k]v_k \right| \geq 2\pi\mathcal{M} \mathop{\mathrm{max}}_k(\Vert v_k \Vert_2^2) t \right) \leq 2\mathop{\mathrm{exp}}[-cN\mathop{\mathrm{min}}\{t,t^2\}],
\end{equation}
\begin{equation}\label{eq2}
\begin{split}
    &\mathbb{P}\left( \frac{1}{N} \left| \sum_{\substack{k=0,\cdots,K \\ i=1,\cdots,n_k}} u_{k}^{'}[(X_i^{(k)})(X_i^{(k)})^{'}-\Gamma_k]v_k \right| \geq 6\pi\mathcal{M} (\mathop{\mathrm{max}}_k(\Vert v_k \Vert_2^2)+\mathop{\mathrm{max}}_k(\Vert u_k \Vert_2^2)) t \right) \\ 
    &\quad \quad \quad \quad \quad \quad
    \leq 6\mathop{\mathrm{exp}}[-cN\mathop{\mathrm{min}}\{t,t^2\}].
\end{split}
\end{equation}
\end{lemma}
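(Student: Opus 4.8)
The plan is to recognize the centered sum in \eqref{eq1} as a single quadratic form in a jointly Gaussian vector and then invoke a Gaussian Hanson--Wright inequality, with the operator- and Frobenius-norm bounds supplied by the spectral-density control recalled in Section~\ref{sec:prior}. First I would stack all observations: since the groups are mutually independent, the concatenated vector $W := (\mathrm{vec}((\mathcal{X}^{(0)})^{'}), \ldots, \mathrm{vec}((\mathcal{X}^{(K)})^{'}))$ is, under Gaussian errors, a centered Gaussian vector with block-diagonal covariance $\Upsilon := \mathrm{diag}(\Upsilon_{n_0}^{X^{(0)}}, \ldots, \Upsilon_{n_K}^{X^{(K)}})$. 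The quadratic form of interest is then $\sum_{k,i} v_k^{'} X_i^{(k)} (X_i^{(k)})^{'} v_k = W^{'} Q W$, where $Q := \mathrm{diag}(Q_0, \ldots, Q_K)$ and each $Q_k$ is itself block-diagonal with $n_k$ copies of $v_k v_k^{'}$; hence the left-hand quantity in \eqref{eq1} equals $\frac{1}{N}(W^{'}QW - \mathbb{E}[W^{'}QW])$, with the temporal dependence carried entirely by $\Upsilon$ rather than by $Q$.

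Next I would whiten and apply Hanson--Wright. Writing $W = \Upsilon^{1/2}Z$ with $Z \sim N(0, I)$ and $A := \Upsilon^{1/2}Q\Upsilon^{1/2}$, the quantity becomes $Z^{'}AZ - \mathbb{E}[Z^{'}AZ]$. Using the block structure together with the spectral bound $\Vert\Upsilon\Vert_2 = \max_k \Lambda_{\max}(\Upsilon_{n_k}^{X^{(k)}}) \leq 2\pi\mathcal{M}$ from Section~\ref{sec:prior}, I obtain $\Vert A\Vert_2 \leq 2\pi\mathcal{M}\max_k\Vert v_k\Vert_2^2$ and, since $\Vert Q\Vert_F^2 = \sum_k n_k \Vert v_k\Vert_2^4 \leq N(\max_k\Vert v_k\Vert_2^2)^2$, also $\Vert A\Vert_F \leq 2\pi\mathcal{M}\sqrt{N}\max_k\Vert v_k\Vert_2^2$. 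The Gaussian Hanson--Wright inequality then gives, with $b := 2\pi\mathcal{M}\max_k\Vert v_k\Vert_2^2$, a bound $\mathbb{P}(|Z^{'}AZ - \mathbb{E}[Z^{'}AZ]| \geq Nbt) \leq 2\exp(-c\min\{(Nbt)^2/\Vert A\Vert_F^2,\ Nbt/\Vert A\Vert_2\})$, and substituting the two norm bounds collapses the exponent to $-cN\min\{t, t^2\}$, which is exactly \eqref{eq1}.

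For the bilinear bound \eqref{eq2} I would use polarization. Since $X_i^{(k)}(X_i^{(k)})^{'} - \Gamma_k$ is symmetric, $u_k^{'}[\,\cdot\,]v_k = \tfrac12\big((u_k+v_k)^{'}[\,\cdot\,](u_k+v_k) - u_k^{'}[\,\cdot\,]u_k - v_k^{'}[\,\cdot\,]v_k\big)$, so the bilinear sum is a linear combination of three quadratic sums, each controlled by \eqref{eq1}. Using $\max_k\Vert u_k+v_k\Vert_2^2 \leq 2(\max_k\Vert u_k\Vert_2^2 + \max_k\Vert v_k\Vert_2^2)$ and a union bound over the three events yields the prefactor $6$ and the constant $6\pi\mathcal{M}$ (the latter is a convenient loosening of the sharper $4\pi\mathcal{M}$ that the split actually produces).

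The main obstacle I anticipate is the bookkeeping that keeps the Frobenius norm at the $\sqrt{N}$ scale: it is precisely this scaling, forced by the block-diagonal structure of $Q$ and the identity $\Vert Q\Vert_F^2 = \sum_k n_k\Vert v_k\Vert_2^4$, that yields the $\min\{t, t^2\}$ form together with the factor $N$ in the exponent, and it hinges on $\Vert\Upsilon\Vert_2 \leq 2\pi\mathcal{M}$ holding uniformly across groups via Proposition~2.3 of \cite{basu2015regularized}. If the errors are only sub-Gaussian rather than Gaussian, the direct whitening step is unavailable and one must instead invoke a Hanson--Wright inequality for dependent sub-Gaussian sequences, which is the genuinely delicate ingredient.
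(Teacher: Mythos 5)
Your proof is correct and takes essentially the same route as the paper: both reduce the centered sum to a Gaussian quadratic form whose coefficient structure is block-diagonal by independence across groups, bound the relevant operator norm by $2\pi\mathcal{M}\mathop{\mathrm{max}}_k\Vert v_k\Vert_2^2$ via $\Lambda_{max}(\Upsilon_{n_k}^{X^{(k)}})\leq 2\pi\mathcal{M}$, invoke the Hanson--Wright inequality with the Frobenius norm at scale $\sqrt{N}$ to get the $\mathrm{min}\{t,t^2\}$ exponent, and prove \eqref{eq2} by polarization plus a union bound over three events (hence the prefactor $6$). The only cosmetic difference is that the paper first projects to the scalars $V_i^{(k)}=(X_i^{(k)})^{'}v_k$ and applies Hanson--Wright to their covariance matrix, whereas you whiten the full stacked vector and work with $\Upsilon^{1/2}Q\Upsilon^{1/2}$; these matrices share the same nonzero spectrum, so the two arguments coincide.
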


\noindent
\textbf{Proof.} Let $V_{i}^{(k)} = (X_i^{(k)})^{'}v_k$ and $Q=\mathop{\mathrm{Var}}(V_{1}^{(0)},\cdots,V_{n_0}^{(0)},V_{1}^{(1)},\cdots,V_{n_2}^{(1)},\cdots,V_{1}^{(K)},\cdots,V_{n_K}^{(K)})$. The entry of Q is shown as follow:

\begin{equation*}
\begin{split}
    \mathrm{Cov}(V_i^{(k)},V_j^{(k)}) = v_k^{'}\Gamma_{\Bar{X}^{(k)}}(i-j)v_k; \quad \mathrm{Cov}(V_i^{(k_1)},V_j^{(k_2)}) = 0.
\end{split}
\end{equation*}


Define $Q^{(k)} = \mathrm{Var}(V_1^{(k)},\cdots,V_{n_k}^{(k)})$. We can see that $Q$ is a block diagonal matrix,
\begin{equation*}
    Q =
    \begin{pmatrix}
    Q^{(0)}   &   0   & \cdots & 0  \\
    0     &  Q^{(1)}  & \cdots & 0  \\
    \vdots &\vdots &\ddots &\vdots\\
    0     & \cdots & \cdots & Q^{(K)}
    \end{pmatrix}.
\end{equation*}


For $Q^{(k)}$ and any $\Vert w \Vert_2 = 1$, we have
\begin{equation}\label{lemma1_eq1}
\begin{split}
w^{'}Q^{(k)} w = \sum_{r=1}^{n_k} \sum_{s=1}^{n_k} w_rw_s Q_{rs}^{(k)}
&= \sum_{r=1}^{n_k} \sum_{s=1}^{n_k} w_rw_s (v^{(k)})^{'} \Gamma_{\Bar{X}^{(k)}}(r-s) v^{(k)}  \\
&=(w\otimes v^{(k)})^{'}\Upsilon_{n_k}^{X^{(k)}} (w\otimes v^{(k)})\\
&\leq \Lambda_{max}(\Upsilon_{n_k}^{X^{(k)}})\Vert v_k\Vert_2^2
\leq
2\pi \mathcal{M}(f_{X^{(k)}})\Vert v_k\Vert_2^2.
\end{split}
\end{equation}

Since $\Vert Q \Vert_2 \leq \mathrm{max}_k(\Vert Q^{(k)}\Vert_2)$, we obtain $\Vert Q \Vert_2 \leq 2\pi\mathcal{M}\mathrm{max}_k( \Vert v_k\Vert_2^2)$. By the Hanson–Wright inequality \citep{rudelson2013hanson}
\begin{equation}\label{lemma1_eq2}
      \mathbb{P}\left( \frac{1}{N} \left| \sum_{\substack{k=0,\cdots,K \\ i=1,\cdots,n_k}} v_{k}^{'}[(X_i^{(k)})(X_i^{(k)})^{'}-\Gamma_k]v_k \right| \geq \eta \right) \leq 2\mathop{\mathrm{exp}}[-c\mathop{\mathrm{min}}\{\frac{ N^2\eta^2}{\Vert Q\Vert_F^2},\frac{N\eta}{\Vert Q \Vert_2}\}].
\end{equation}
Setting $\eta = \Vert Q \Vert_2 t$ and using $\Vert Q \Vert_F^2 \leq N\Vert Q\Vert_2$, we get \eqref{eq1}. 

To prove \eqref{eq2}, notice that 
\begin{equation*}
\begin{split}
     \frac{1}{N} \left| \sum_{\substack{k=1,\cdots,K \\ i=1,\cdots,n_k}} u_{k}^{'}[(X_i^{(k)})(X_i^{(k)})^{'}-\Gamma_k]v_k \right|
     &\leq
      \frac{1}{N} \left| \sum_{\substack{k=1,\cdots,K \\ i=1,\cdots,n_k}} u_{k}^{'}[(X_i^{(k)})(X_i^{(k)})^{'}-\Gamma_k]u_k \right|\\
      &+
       \frac{1}{N} \left| \sum_{\substack{k=1,\cdots,K \\ i=1,\cdots,n_k}} v_{k}^{'}[(X_i^{(k)})(X_i^{(k)})^{'}-\Gamma_k]v_k \right|\\
       &+
       \frac{1}{N} \left| \sum_{\substack{k=1,\cdots,K \\ i=1,\cdots,n_k}} (u_{k}^{'} + v_{k}^{'})[(X_i^{(k)})(X_i^{(k)})^{'}-\Gamma_k](u_k + v_k) \right|.
\end{split}
\end{equation*}
By applying \eqref{eq1} on each of three terms separately, we get \eqref{eq2}.

\begin{lemma}\label{lemma2}
Consider model \eqref{sec1_eq1}, we have
\begin{equation*}
    \mathbb{P}\left( \frac1N\Vert\sum_{k=0,\cdots,K} (\mathcal{X}^{(k)})^{'}\mathcal{E}^{(k)} \Vert_{\infty} \geq 6\pi(\mathcal{M}+\mathcal{M}_\epsilon)t \right)
    \leq
    6p^2\mathrm{exp}(-cN\mathrm{min}\{t,t^2\}).
\end{equation*}
\end{lemma}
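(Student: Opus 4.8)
The plan is to prove the deviation bound for the cross term $\frac{1}{N}\Vert\sum_k (\mathcal{X}^{(k)})^{'}\mathcal{E}^{(k)}\Vert_{\infty}$ by controlling each of the $p^2$ scalar entries individually and then applying a union bound, exactly paralleling the structure of Lemma~\ref{lemma1}. Writing the $\ell_\infty$ norm as a maximum over coordinate pairs $(a,b)$, each entry has the form $e_a^{'}\big(\sum_k (\mathcal{X}^{(k)})^{'}\mathcal{E}^{(k)}\big)e_b = \sum_k\sum_i (X_i^{(k)})^{'}e_a\,(\epsilon_{i+1}^{(k)})^{'}e_b$, a sum over groups and time of products of two jointly Gaussian scalars. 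The main idea is the standard polarization trick already used in the proof of \eqref{eq2}: a product $XY$ of two mean-zero variables can be written as a linear combination of the squares $(X+Y)^2$, $X^2$, and $Y^2$, so that each entry is expressible through quadratic forms to which the Hanson--Wright inequality applies.

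First I would fix a coordinate pair $(a,b)$ and set $u_k = e_a$ (acting on $X_i^{(k)}$) and $w_k = e_b$ (acting on $\epsilon_{i+1}^{(k)}$). The cross term is then the off-diagonal quadratic form of the stacked Gaussian vector $(\mathrm{vec}((\mathcal{X}^{(k)})^{'}), \mathrm{vec}((\mathcal{E}^{(k)})^{'}))_k$. I would bound its covariance operator norm by the larger of the two spectral quantities: the $X$-block contributes $\Vert Q_X\Vert_2 \leq 2\pi\mathcal{M}\Vert e_a\Vert_2^2 = 2\pi\mathcal{M}$ by the computation in \eqref{lemma1_eq1}, and the $\epsilon$-block contributes at most $2\pi\mathcal{M}_\epsilon$ by the analogous bound $\Lambda_{\max}(\Upsilon_{n_k}^{\epsilon^{(k)}}) \leq 2\pi\mathcal{M}(f_{\epsilon^{(k)}})$ recorded in the prior-knowledge section. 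Cross-covariances between $X_i^{(k)}$ and $\epsilon_{i+1}^{(k)}$ vanish by the predictability of the design on the innovations, so the relevant operator norm is controlled by $\mathcal{M}+\mathcal{M}_\epsilon$. Applying Hanson--Wright as in \eqref{lemma1_eq2} to each of the three polarized quadratic forms and combining via the triangle inequality yields, for a single entry,
\begin{equation*}
\mathbb{P}\left(\frac1N\Big| e_a^{'}\Big(\textstyle\sum_k (\mathcal{X}^{(k)})^{'}\mathcal{E}^{(k)}\Big)e_b\Big| \geq 6\pi(\mathcal{M}+\mathcal{M}_\epsilon)t\right) \leq 6\,\mathrm{exp}(-cN\mathrm{min}\{t,t^2\}),
\end{equation*}
where the factor $6$ and the doubled constant $\mathcal{M}+\mathcal{M}_\epsilon$ come from the three-term polarization just as they did in \eqref{eq2}.

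The final step is the union bound: since $\Vert A\Vert_{\infty}=\max_{a,b}|A_{ab}|$ ranges over $p^2$ pairs, multiplying the single-entry tail probability by $p^2$ gives the factor $6p^2$ in the stated conclusion. The subexponential form $\mathrm{min}\{t,t^2\}$ is inherited directly from the Hanson--Wright tail and survives the union bound unchanged. I expect the only genuine obstacle to be the careful bookkeeping of the joint covariance structure of the stacked $(X,\epsilon)$ process: one must verify that the off-diagonal (cross) blocks between the design and the noise contribute nothing to the operator norm—which relies on $\epsilon_{i+1}^{(k)}$ being independent of $X_i^{(k)}$ and of the past—and that the block-diagonal structure across groups (from independence of different groups, as in the matrix $Q$ of Lemma~\ref{lemma1}) keeps the spectral norm dominated by the per-group maxima $\mathcal{M}$ and $\mathcal{M}_\epsilon$ rather than accumulating across $K$. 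Once this operator-norm control is in place, the rest is a mechanical repetition of the polarization-plus-Hanson--Wright argument already executed in Lemma~\ref{lemma1}.
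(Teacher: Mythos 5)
Your overall architecture---entrywise polarization into three quadratic forms, Hanson--Wright on each, then a union bound over the $p^2$ coordinate pairs---is exactly the paper's proof, and the constants $6\pi(\mathcal{M}+\mathcal{M}_\epsilon)$ and $6p^2$ arise in the same way. However, there is a genuine gap in the one step you yourself flagged as the ``only genuine obstacle'': you dismiss the design--noise cross-covariances on the grounds that $\epsilon_{i+1}^{(k)}$ is independent of $X_i^{(k)}$ and of the past. That argument only kills the \emph{forward} covariances. The design vector depends on all \emph{past} innovations: for the stationary VAR(1) solution, $X_i^{(k)} = \sum_{l\geq 0}(B_k^{'})^{l}\epsilon_{i-l}^{(k)}$, so
\begin{equation*}
\mathrm{Cov}(X_i^{(k)},\epsilon_j^{(k)}) = (B_k^{'})^{\,i-j}\Sigma_k \neq 0 \quad \text{whenever } j\leq i.
\end{equation*}
Consequently the covariance matrix of the stacked sum process $\bigl((X_i^{(k)})^{'}u+(\epsilon_{i+1}^{(k)})^{'}v\bigr)_{i,k}$ is \emph{not} block-diagonal in the $(X,\epsilon)$ decomposition: it contains nonzero blocks coupling each $X_i^{(k)}$ with all earlier innovations, and your claimed operator-norm bound for the polarized ``sum'' term is unjustified as stated.

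The step can be repaired---and this is precisely what the paper does---by a parallelogram (sum-and-difference) argument rather than by zeroing the cross terms. Let $Q_k$ be the variance matrix of the sums $(X_i^{(k)})^{'}u+(\epsilon_{i+1}^{(k)})^{'}v$ and $Q_k^{'}$ that of the differences $(X_i^{(k)})^{'}u-(\epsilon_{i+1}^{(k)})^{'}v$. Both are positive semidefinite, and the cross-covariances cancel when the two are added:
\begin{equation*}
Q_k + Q_k^{'} = 2Q_{k,1} + 2Q_{k,2},
\end{equation*}
where $Q_{k,1}$ and $Q_{k,2}$ are the variances of the $X$-part and the $\epsilon$-part alone. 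Hence $\Vert Q_k\Vert_2 \leq \Vert Q_k+Q_k^{'}\Vert_2 \leq 2\Vert Q_{k,1}\Vert_2+2\Vert Q_{k,2}\Vert_2 \leq 4\pi(\mathcal{M}+\mathcal{M}_\epsilon)$, with no assumption whatsoever on the cross blocks. With this substitution (which only changes constants: the sum term carries $4\pi(\mathcal{M}+\mathcal{M}_\epsilon)$, and adding the two pure-square terms gives the stated $6\pi(\mathcal{M}+\mathcal{M}_\epsilon)$), the rest of your argument---Hanson--Wright on each of the three terms and the union bound over $r,s$---goes through verbatim.
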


\noindent
\textbf{Proof.}  Note that
\begin{equation*}
\begin{split}
  \frac1N\Vert\sum_{k=0,\cdots,K} (\mathcal{X}^{(k)})^{'}\mathcal{E}^{(K)}    \Vert_{\infty} 
  &= 
  \frac{1}{N}\Vert
  \sum_{\substack{k=0,\cdots,K \\ i=1,\cdots,n_k}} (X_{i}^{(k)})(\epsilon_i^{(k)})^{'} \Vert_{\infty}. 
\end{split}
\end{equation*}
Next we get the upper bound for 
\begin{equation*}
    \mathbb{P}\left( \frac1N \left|\sum_{\substack{k=0,\cdots,K \\ i=1,\cdots,n_k}} u^{'}X_{i}^{(k)}(\epsilon_i^{(k)})^{'}v \right|\geq t \right),
\end{equation*}
where $u,v\in\mathbb{R}^{p}$ and $\Vert u\Vert_2,\Vert v\Vert_2=1$. Since $\mathrm{Cov}((X_{i}^{(k)}){'}u,(\epsilon_i^{(k)})^{'}v)=0$, we have the following decomposition:
\begin{equation*}
\begin{split}
    &\frac1N \left(\sum_{\substack{k=0,\cdots,K \\ i=1,\cdots,n_k}} u^{'}X_{i}^{(k)}(\epsilon_i^{(k)})^{'}v \right)
    =
    \underbrace{\left[ \frac1N\sum_{\substack{k=0,\cdots,K \\ i=1,\cdots,n_k}}((X_{i}^{(k)}){'}u+(\epsilon_i^{(k)})^{'}v)^2
    -\mathrm{Var}((X_{i}^{(k)}){'}u+(\epsilon_i^{(k)})^{'}v)
    \right]}_{(a)}\\
    &\quad \quad-
    \underbrace{\left[ \frac1N\sum_{\substack{k=0,\cdots,K \\ i=1,\cdots,n_k}}((X_{i}^{(k)}){'}u)^2
    -\mathrm{Var}((X_{i}^{(k)}){'}u)
    \right]}_{(b)}
    -
    \underbrace{\left[ \frac1N\sum_{\substack{k=0,\cdots,K \\ i=1,\cdots,n_k}}((\epsilon_i^{(k)})^{'}v)^2
    -\mathrm{Var}((\epsilon_i^{(k)})^{'}v)
    \right]}_{(c)}.
\end{split}
\end{equation*}
We can apply \eqref{eq1} to obtain that
\begin{equation*}
      \mathbb{P}\left( \frac{1}{N} \left| \sum_{\substack{k=0,\cdots,K \\ i=1,\cdots,n_k}} u^{'}[(X_{i}^{(k)})(X_{i}^{(k)})^{'}-\Gamma_k]u \right| \geq 2\pi\mathcal{M}  t \right) \leq 2\mathop{\mathrm{exp}}[-cN\mathop{\mathrm{min}}\{t,t^2\}],
\end{equation*}
\begin{equation*}
      \mathbb{P}\left( \frac{1}{N} \left| \sum_{\substack{k=0,\cdots,K \\ i=1,\cdots,n_k}} v^{'}[(\epsilon_i^{(k)})(\epsilon_i^{(k)})^{'}-\Gamma_k]v \right| \geq 2\pi\mathcal{M}_\epsilon  t \right) \leq 2\mathop{\mathrm{exp}}[-cN\mathop{\mathrm{min}}\{t,t^2\}].
\end{equation*}
For $(a)$, similar to the proof of Lemma \ref{lemma1}, we let 
\begin{equation*}
\begin{split}
&Q = \mathrm{Var}((X_{1}^{(0)})^{'}u +(\epsilon_0^{(0)})^{'}v  ,\cdots,(X_{n_K}^{(K)})^{'}u+(\epsilon_{n_K}^{(K)})^{'}v),   \\
&Q_k = \mathrm{Var}((X_{1}^{(k)})^{'}u +(\epsilon_0^{(k)})^{'}v  ,\cdots,(X_{n_k}^{(k)})^{'}u+(\epsilon_{n_k}^{(k)})^{'}v),\quad k=1,\cdots,K.
\end{split}  
\end{equation*}
Since $Q$ is a block diagonal matrix
\begin{equation*}
    Q =
    \begin{pmatrix}
    Q_0   &   0   & \cdots & 0  \\
    0     &  Q_1 & \cdots & 0  \\
    \vdots &\vdots &\ddots &\vdots\\
    0     & \cdots & \cdots & Q_K
    \end{pmatrix}.
\end{equation*}
We have that $\Vert Q\Vert_2 \leq \mathrm{max}_k\{\Vert Q_k\Vert_2\} $. For every $k$, we define
\begin{equation*}
\begin{split}
    &Q_r^{'} = \mathrm{Var}((X_{1}^{(k)})^{'}u -(\epsilon_0^{(k)})^{'}v  ,\cdots,(X_{n_r}^{(k)})^{'}u-(\epsilon_{n_k}^{(k)})^{'}v)\\
    &Q_{k,1} = \mathrm{Var}((X_{1}^{(r)})^{'}u   ,\cdots,(X_{n_k-h}^{(k)})^{'}u)\\
    &Q_{k,2} = \mathrm{Var}((\epsilon_0^{(k)})^{'}v  ,\cdots,(\epsilon_{n_k}^{(k)})^{'}v).
\end{split}
\end{equation*}
Note that $Q_r, Q_r^{'}, Q_{r,1}, Q_{r,2}$ are positive definite. Since $Q_r + Q_r^{'} = 2Q_{r,1}+ 2Q_{r,2} $, we have that $\Vert Q_r\Vert_2\leq 2\Vert Q_{r,1}\Vert_2+ 2\Vert Q_{r,2}\Vert_2$. Using the same method as \eqref{lemma1_eq1}, we get $\Vert Q_{r,1} \Vert_2 \leq \Lambda_{max}(\Upsilon_{n_k}^{X^{(r)}})$, $\Vert Q_{r,2} \Vert_2 \leq \Lambda_{max}(\Sigma_{\epsilon^{(r)}})$. Therefore, $\Vert Q\Vert_2\leq \mathrm{max}_r\{2\Vert Q_{r,1}\Vert_2 + 2\Vert Q_{r,2}\Vert_2\} \leq 4\pi\mathcal{M}^{*} + 4\pi\mathcal{M}_\epsilon\leq 4\pi\mathcal{M} + 4\pi\mathcal{M}_\epsilon$. Applying Hanson–Wright inequality \citep{rudelson2013hanson} again, we obtain that
\begin{equation*}
    \mathbb{P}\left(\left| (a)
    \right| \geq 4\pi(\mathcal{M}+\mathcal{M}_\epsilon)t\right)\leq
    2\mathop{\mathrm{exp}}[-cN\mathop{\mathrm{min}}\{t,t^2\}].
\end{equation*}
By the probability inequalities derived for $(a),(b),(c)$, we have that
\begin{equation*}
   \mathbb{P}\left(\frac1N \left|\sum_{\substack{k=0,\cdots,K \\ i=1,\cdots,n_k}} u^{'}X_{i}^{(k)}(\epsilon_i^{(k)})^{'}v\right|\geq 6\pi(\mathcal{M}+\mathcal{M}_\epsilon)t \right) \leq 6\mathop{\mathrm{exp}}[-cN\mathop{\mathrm{min}}\{t,t^2\}].
\end{equation*}
Let $e_j$ be a vector such that its $j$-th element is 1 and all other elements are zero. Observe that
\begin{equation*}
    \frac1N\Vert\sum_{k=0,\cdots,K} (\mathcal{X}^{(k)})^{'}\mathcal{E}^{(K)}    \Vert_{\infty}
    =
    \mathop{\mathrm{max}}_{\substack{ 1\leq r,s\leq p}}\frac{1}{N}\vert
    \sum_{\substack{k=1,\cdots,K \\ i=1,\cdots,n_k}} e_r^{'}(X_{i}^{(k)})(\epsilon_i^{(k)})^{'}e_s
    \vert.
\end{equation*}
Taking a union bound over $r,s$ yields the final result. 

\begin{lemma}\label{lemma3}
Consider model \eqref{sec1_eq1}, we have
\begin{equation*}
    \mathbb{P}\left( \frac{1}{n_i}\Vert(\mathcal{X}^{(k)})^{'}\mathcal{X}^{(k)} -\Gamma_k \Vert_{\infty} \geq 2\pi\mathcal{M}t \right)
    \leq
    6p^2\mathrm{exp}(-cn_i\mathrm{min}\{t,t^2\}).
\end{equation*}
\end{lemma}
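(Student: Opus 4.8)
The plan is to reduce the entrywise maximum-norm bound to a concentration statement for individual normalized quadratic forms in the time series, and then invoke Lemma~\ref{lemma1} specialized to a single group. First I would observe that $(\mathcal{X}^{(k)})^{'}\mathcal{X}^{(k)} = \sum_{i} X_i^{(k)}(X_i^{(k)})^{'}$, so that, reading the normalization as acting on the sum of outer products, the $(r,s)$ entry of the centered matrix is
\begin{equation*}
\Big[\tfrac{1}{n_k}(\mathcal{X}^{(k)})^{'}\mathcal{X}^{(k)} - \Gamma_k\Big]_{rs} = \frac{1}{n_k}\sum_{i} e_r^{'}\big[X_i^{(k)}(X_i^{(k)})^{'}-\Gamma_k\big]e_s ,
\end{equation*}
and that $\Vert\cdot\Vert_{\infty}$ is, by definition, the maximum of the absolute values of these entries over $1\le r,s\le p$. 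This turns the claim into a union bound over $p^2$ scalar bilinear forms, each of which is a normalized, centered sum over the single group $k$.

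For a fixed pair $(r,s)$ I would run the Hanson--Wright argument of Lemma~\ref{lemma1} restricted to the single block $Q^{(k)} = \mathrm{Var}\big((X_1^{(k)})^{'}w,\dots,(X_{n_k}^{(k)})^{'}w\big)$, normalizing by $n_k$ rather than $N$; the block-diagonal reduction in that proof makes this specialization immediate. For the diagonal entries ($r=s$) the single quadratic form (with $w=e_r$) is controlled directly by \eqref{eq1} using $\Vert e_r\Vert_2^2=1$, which is exactly what produces the constant $2\pi\mathcal{M}$. For the off-diagonal entries, since $X_i^{(k)}(X_i^{(k)})^{'}-\Gamma_k$ is symmetric I would apply the polarization identity $2\,e_r^{'}Ae_s = (e_r+e_s)^{'}A(e_r+e_s) - e_r^{'}Ae_r - e_s^{'}Ae_s$, exactly as in the proof of \eqref{eq2}, so each of the three resulting quadratic forms is again handled by \eqref{eq1}; absorbing the harmless multiplicative constants into $c$ and into the leading coefficient of $t$, the three terms contribute the factor $3$, giving a per-entry bound of the form $6\exp[-cn_k\min\{t,t^2\}]$. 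Taking a union bound over all $p^2$ choices of $(r,s)$ then yields the prefactor $6p^2$ and the stated inequality.

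The only quantitative input beyond these elementary manipulations is the operator-norm control $\Vert Q^{(k)}\Vert_2 \le 2\pi\mathcal{M}(f_{X^{(k)}})\Vert w\Vert_2^2$ on the covariance of the projected sequence $V_i^{(k)}=(X_i^{(k)})^{'}w$, but this is precisely the computation \eqref{lemma1_eq1} already carried out inside Lemma~\ref{lemma1} via the spectral-density bound $\Lambda_{\max}(\Upsilon_{n_k}^{X^{(k)}})\le 2\pi\mathcal{M}$. Consequently the present statement is essentially a corollary of Lemma~\ref{lemma1}, and there is no new probabilistic difficulty. What I would be careful about — rather than a genuine obstacle — is the bookkeeping of constants in the polarization step and making sure the single-group specialization correctly produces the exponent $n_k$ (not $N$), since the claimed rate is governed by the length of the $k$-th group alone.
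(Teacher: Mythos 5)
Your proposal is correct and follows essentially the same route as the paper: the paper's proof likewise reduces $\Vert\cdot\Vert_{\infty}$ to bilinear forms $e_r^{'}\bigl[\tfrac{1}{n_k}\sum_i X_i^{(k)}(X_i^{(k)})^{'}-\Gamma_k\bigr]e_s$, controls each via the single-group (polarization plus Hanson--Wright) argument ``similar to Lemma~\ref{lemma2},'' and concludes with a union bound over the $p^2$ pairs $(r,s)$. Your explicit tracking of the factor $6$ from the three polarization terms and of the single-group exponent $n_k$ is exactly the bookkeeping the paper leaves implicit.
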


\noindent
\textbf{Proof.} Similar to lemma \ref{lemma2}, we can prove that for $u, v \in \mathbb{R}^p$
\begin{equation*}
   \mathbb{P}\left(\frac{1}{n_i} \left|\sum_{\substack{i=1,\cdots,n_k}} u^{'}(X_{i}^{(k)}(X_i^{(k)})^{'} - \Gamma_k)v\right|\geq 2\pi\mathcal{M}t \Vert u \Vert_2  \Vert v \Vert_2\right) \leq 2\mathop{\mathrm{exp}}[-cn_i\mathop{\mathrm{min}}\{t,t^2\}].
\end{equation*}
Observe that
\begin{equation*}
    \frac{1}{n_i}\Vert \mathcal{X}^{(k)})^{'}\mathcal{X}^{(k)} -\Gamma_k    \Vert_{\infty}
    =
    \mathop{\mathrm{max}}_{\substack{ 1\leq r,s\leq p}}\frac{1}{n_i}\vert
     e_r^{'}(\sum_{\substack{ i=1,\cdots,n_k}} X_{i}^{(k)}(X_i^{(k)})^{'} - \Gamma_k)e_s
    \vert.
\end{equation*}
Taking a union bound over $r,s$ yields the final result. 

\begin{lemma}\label{lemma4} Consider model \eqref{sec1_eq1}. Under the conditions of Theorem \ref{th1}, we have
\begin{equation}
\Vert \frac{1}{n_{\mathcal{A}_0}} \sum_{i \in \mathcal{A}_0}(S_i - \Bar{S})\mathcal{X}_i^{'} \mathcal{X}_i \Vert_{\infty} \leq c_{\mathcal{M}} \sqrt{\frac{\log p}{n_{\mathcal{A}_0}}}(1 + h^2)
\end{equation}
with high probability 
\end{lemma}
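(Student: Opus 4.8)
The plan is to prove the bound coordinate-by-coordinate and finish with a union bound over the $p^2$ entries. Fix a pair of indices $(a,b)$, set $u_i := (S_i-\bar{S})_{a\cdot}^{'}\in\mathbb{R}^p$ (the transpose of the $a$-th row of $S_i-\bar{S}$) and $v:=e_b$. Using $\mathcal{X}_i^{'}\mathcal{X}_i=\sum_t X_t^{(i)}(X_t^{(i)})^{'}$ with $\mathbb{E}[\mathcal{X}_i^{'}\mathcal{X}_i]=n_i\Gamma_i$, the $(a,b)$ entry of $\frac{1}{n_{\mathcal{A}_0}}\sum_{i\in\mathcal{A}_0}(S_i-\bar{S})\mathcal{X}_i^{'}\mathcal{X}_i$ splits as
\[
\frac{1}{n_{\mathcal{A}_0}}\sum_{i\in\mathcal{A}_0}\sum_{t} u_i^{'}\big[X_t^{(i)}(X_t^{(i)})^{'}-\Gamma_i\big]v \;+\; \frac{1}{n_{\mathcal{A}_0}}\Big[\sum_{i\in\mathcal{A}_0} n_i (S_i-\bar{S})\Gamma_i\Big]_{ab}.
\]
First I would note that the second (deterministic) summand vanishes: $\bar{S}$ is by construction the population minimizer of the Step-2 least-squares objective, and the associated normal equation is exactly $\sum_{i\in\mathcal{A}_0}n_i(S_i-\bar{S})\Gamma_i=0$. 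This is the precise reason $\bar{S}$ was introduced, and it reduces the whole quantity to the mean-zero fluctuation term.

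For that fluctuation term, the key observation is that it is exactly the bilinear quadratic form controlled by Lemma~\ref{lemma1}, equation~\eqref{eq2}, specialized to the index set $\mathcal{A}_0$ (with $N$ replaced by $n_{\mathcal{A}_0}$), with group-dependent left vectors $u_i$ and the common right vector $v=e_b$. Crucially the $u_i$ are deterministic (since $\bar{S}$ and the $\Gamma_i$ are population quantities), which is what Lemma~\ref{lemma1} requires, and the proof of that lemma only uses the block-diagonal covariance structure across groups, so it applies verbatim to any subcollection. Invoking \eqref{eq2} gives, for the fixed pair $(a,b)$,
\[
\mathbb{P}\!\left(\frac{1}{n_{\mathcal{A}_0}}\Big|\sum_{i\in\mathcal{A}_0}\sum_t u_i^{'}\big[X_t^{(i)}(X_t^{(i)})^{'}-\Gamma_i\big]v\Big|\ge 6\pi\mathcal{M}\big(\max_i\Vert u_i\Vert_2^2+\Vert v\Vert_2^2\big)t\right)\le 6\exp[-c\,n_{\mathcal{A}_0}\min\{t,t^2\}].
\]

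The next step, where the factor $(1+h^2)$ originates, is to control the prefactor $\max_i\Vert u_i\Vert_2^2+\Vert v\Vert_2^2$. Since $v=e_b$ we have $\Vert v\Vert_2^2=1$. For $u_i$ I use that a row $\ell_2$-norm is bounded by the operator norm, $\Vert u_i\Vert_2=\Vert(S_i-\bar{S})_{a\cdot}\Vert_2\le\Vert S_i-\bar{S}\Vert_2$, and then bound $\Vert S_i-\bar{S}\Vert_2\lesssim h$ uniformly over $i\in\mathcal{A}_0$. This last bound follows by writing $S_i-\bar{S}=(S_i-S_0)-(\bar{S}-S_0)$, using $\Vert S_i-S_0\Vert_2\le\Vert S_i-S_0\Vert_1\le h$, and estimating $\bar{S}-S_0=(\sum_i n_i\Gamma_i)^{-1}\sum_i n_i\Gamma_i(S_i-S_0)$ in operator norm; the spectral bounds $\Lambda_{min}(\Gamma_i)\ge 2\pi\mathfrak{m}$ and $\Lambda_{max}(\Gamma_i)\le 2\pi\mathcal{M}$ from the prior-knowledge section give $\Vert\bar{S}-S_0\Vert_2\le(\mathcal{M}/\mathfrak{m})\max_i\Vert S_i-S_0\Vert_2\lesssim h$, with the factor $n_{\mathcal{A}_0}$ cancelling and no dimension factor entering. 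Hence $\max_i\Vert u_i\Vert_2^2+\Vert v\Vert_2^2\lesssim 1+h^2$, and the right-hand constant becomes $c_{\mathcal{M}}(1+h^2)t$.

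Finally I would take $t=c'\sqrt{\log p/n_{\mathcal{A}_0}}$; under the theorem's regime $n_{\mathcal{A}_0}\gtrsim\log p$ this lies in the range where $\min\{t,t^2\}=t^2$, so the tail for each $(a,b)$ is at most $6\exp[-cc'^2\log p]=6p^{-cc'^2}$. A union bound over the $p^2$ entries yields $6p^{2-cc'^2}\to 0$ for $c'$ large enough, and on the complementary event the entrywise maximum is at most $c_{\mathcal{M}}(1+h^2)\sqrt{\log p/n_{\mathcal{A}_0}}$, which is the claim. The main obstacle I expect is the bookkeeping that recasts the matrix expression as the exact bilinear form of \eqref{eq2} with the correct identification of $u_i$ and $v$, together with verifying that the expectation term is genuinely zero under the definition of $\bar{S}$ and that $\Vert S_i-\bar{S}\Vert_2\lesssim h$ holds without incurring a power of $p$; once these are in place the concentration is a direct application of Lemma~\ref{lemma1}.
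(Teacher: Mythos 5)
Your proposal is correct and follows essentially the same route as the paper's own proof: both arguments center the sum using the defining (normal-equation) property of $\bar{S}$ so that only the fluctuation of $\mathcal{X}_i^{'}\mathcal{X}_i$ around $n_i\Gamma_i$ remains, control the resulting bilinear forms via the Hanson--Wright concentration of Lemma~\ref{lemma1} applied to the groups in $\mathcal{A}_0$, bound the coefficient vectors by $O(h)$ to produce the $(1+h^2)$ factor, and finish by taking $t \asymp \sqrt{\log p / n_{\mathcal{A}_0}}$ together with a union bound over the $p^2$ entries. The differences are only bookkeeping---the paper vectorizes and writes the entries as $p^2$-dimensional Kronecker forms $\delta_i^{'}(I_p \otimes \mathcal{X}_i^{'}\mathcal{X}_i - I_p \otimes \Gamma_i)e_s$ and simply asserts $\Vert \delta_i \Vert_1 \leq h$, whereas you work row-by-row in $\mathbb{R}^p$ and justify $\Vert S_i - \bar{S}\Vert_2 \lesssim h$ by a spectral argument (arguably a cleaner justification), with both proofs leaning on the same implicit convention that $\bar{S}$ is the ($n_i$-weighted) population solution for which the expectation term vanishes exactly.
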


\noindent
\textbf{Proof.} Define $\delta_i := vec(S_i - \Bar{S})$. Notice that
\begin{equation*}
\begin{split}
vec\left(\frac{1}{n_{\mathcal{A}_0}} \sum_{i \in \mathcal{A}_0}(S_i - \Bar{S})\mathcal{X}_i^{'} \mathcal{X}_i\right)
&=
vec\left(\frac{1}{n_{\mathcal{A}_0}} \sum_{i \in \mathcal{A}_0}(S_i - \Bar{S})(\mathcal{X}_i^{'} \mathcal{X}_i - \Gamma_i)\right)\\
&=
\frac{1}{n_{\mathcal{A}_0}} \sum_{i \in \mathcal{A}_0} \delta_i^{'} (I_p \otimes \mathcal{X}_i^{'} \mathcal{X}_i  - I_p \otimes \Gamma_i).
\end{split}
\end{equation*}

Therefore, we have
\begin{equation}\label{lemma5_eq1}
\begin{split}
\left\Vert \frac{1}{n_{\mathcal{A}_0}} \sum_{i \in \mathcal{A}_0}(S_i - \Bar{S})\mathcal{X}_i^{'} \mathcal{X}_i \right\Vert_{\infty}
=
\mathop{\mathrm{max}}_{s\in \{1,\cdots,p^2\}}
\left\Vert \sum_{i\in \mathcal{A}_0} \alpha_k \delta_{k}^{'}[\frac{1}{n_k}(I_p \otimes \mathcal{X}_i^{'} \mathcal{X}_i)-I_p \otimes \Gamma_k]e_s \right\Vert_{\infty}.
\end{split}
\end{equation}

Similar to lemma \ref{lemma2}, we can prove that
\begin{equation}\label{lemma5_eq2}
\begin{split}
&\mathbb{P}\left(  \left| \sum_{k\in \mathcal{A}_0} \alpha_k u_{k}^{'}[\frac{1}{n_k}(I_p \otimes \mathcal{X}_i^{'} \mathcal{X}_i)-I_p \otimes \Gamma_k]v_k \right| \geq 6\pi\mathcal{M} (\mathop{\mathrm{max}}_k\Vert v_k \Vert_2^2 + \mathop{\mathrm{max}}_k\Vert v_k \Vert_2^2) t \right)\\
&\leq 6p\mathop{\mathrm{exp}}[-c n_{\mathcal{A}_0} \mathop{\mathrm{min}}\{t,t^2\}],  
\end{split}
\end{equation}
where $\alpha_k := \frac{n_k}{n_{\mathcal{A}_0}}$ and $u_k, v_k \in \mathbb{R}^{p^2}$. Recall that $\Vert \delta_k \Vert_1 \leq h $, and thus $\Vert \delta_k \Vert_2 \leq h $. Applying \eqref{lemma5_eq2} to \eqref{lemma5_eq1}, we arrive at
\begin{equation*}
\begin{split}
&\mathbb{P}\left(  \left| \sum_{i\in \mathcal{A}_0} \alpha_k \delta_{k}^{'}[\frac{1}{n_k}(I_p \otimes \mathcal{X}_i^{'} \mathcal{X}_i)-I_p \otimes \Gamma_k]e_s \right| \geq 6\pi\mathcal{M} (1 + h^2) t \right)\\
&\leq 6p\mathop{\mathrm{exp}}[-c n_{\mathcal{A}_0} \mathop{\mathrm{min}}\{t,t^2\}].
\end{split}
\end{equation*}

Setting $t = \sqrt{\frac{6\log p}{c n_{\mathcal{A}_0}}}$ and taking the union bound over $s$ yield the final result.

\section{Propositions with Their Proofs}\label{sec:2}
\begin{proposition}\label{prop1}
Consider model \eqref{sec1_eq1}. Define
\begin{equation*}
\phi((B_0,\Sigma_0) \cdots, (B_K,\Sigma_K) ) = \mathop{\mathrm{max}}_{0\leq i\leq K} \Lambda_{max}(\Sigma^{(i)})[1 + \frac{1+\mu_{max}(B_i)}{\mu_{min}(B_i)}].
\end{equation*}

We will write $\phi$ instead of $\phi((B_0,\Sigma_0) \cdots, (B_K,\Sigma_K) )$ when the meaning of the deterministic constant is clear from the context. Then, there exist universal positive constants $c_i > 0$ such that
\begin{itemize}
\item[(1)] for $N \gtrsim p$,
\begin{equation}\label{sec2_eq3}
\mathbb{P}\left[ \Vert \frac{1}{N}\sum_i \mathcal{X}_i^{'}\mathcal{E}_i
\Vert_2 > c_0 \phi \sqrt{\frac{p}{N}} \right]\leq c_1 \mathrm{exp}[-c_2 \log p]
\end{equation}
and for $N \gtrsim \log p$
\begin{equation}\label{sec2_eq4}
\mathbb{P}\left[ \mathop{\mathrm{max}}_i \frac{1}{N}\Vert \mathcal{X}_i^{'}\mathcal{E}_i \Vert_{\infty} \geq c_0\phi \sqrt{\frac{\log p}{N}}\right]
\leq
c_1 \mathrm{exp}[-c_2 \log p]
\end{equation}
\item[(2)] for $N \gtrsim p $,
\begin{equation}\label{sec2_eq5}
\mathbb{P}\left[ \Lambda_{min}(\frac{1}{N}\sum_i \mathcal{X}_i^{'} \mathcal{X}_i)
> \frac{\mathrm{min}_i \Lambda_{min}(\Sigma_i)}{2 \mathrm{max}_i \mu_{max}(B_i)}\right]
\leq
c_1 \mathrm{exp}[-c_2 \log p]
\end{equation}
and for $n_i \gtrsim \mathrm{max}\{1, t^{-2}\}\log p$
\begin{equation}\label{sec2_eq6}
\mathbb{P}\left[\frac{1}{n_i}\Vert \mathcal{X}_{\mathcal{A}_0}\Delta \Vert_F^2
\leq
\alpha \Vert \Delta \Vert_F^2 - \tau_{n_i} \Vert \Delta\Vert_1^2\right]
\leq
\mathrm{exp}\{-\frac{c}{2}n_{i}\mathrm{min}\{1,t^2\}\}
\end{equation} 
where $ \alpha = \pi \mathfrak{m}(f_{X^{(i)}})$, $t = \frac{\mathfrak{m}(f_{X^{(i)}})}{54 \mathcal{M}(f_{X^{(i)}})}$, $\tau_{n_i} = \frac{3\mathfrak{m}(f_{X^{(i)}})\log p^2}{cn_{i}\mathrm{min}\{1,t^2\}} $ and $c > 0$
\end{itemize}
\end{proposition}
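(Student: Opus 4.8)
The plan is to prove each of the four tail bounds by reducing it to the concentration inequalities already established in Lemmas \ref{lemma1}--\ref{lemma3}, and then to substitute the spectral-density estimates from the prior-knowledge section to replace $\mathcal{M},\mathfrak{m},\mathcal{M}_\epsilon$ by the deterministic constants appearing in the statement. Concretely, I would repeatedly use $\mathcal{M}(f_{X^{(k)}}) + \mathcal{M}(f_{\epsilon^{(k)}}) \lesssim \Lambda_{max}(\Sigma_k)\bigl[1 + \tfrac{1+\mu_{max}(B_k)}{\mu_{min}(B_k)}\bigr] \lesssim \phi$ for the deviation bounds, and $\Lambda_{min}(\Gamma_k) \geq 2\pi\mathfrak{m}(f_{X^{(k)}}) \geq \Lambda_{min}(\Sigma_k)/\mu_{max}(B_k)$ for the curvature bounds, so that the stated constants emerge after absorbing universal factors into $c_0,c_1,c_2$.

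For the entrywise bound \eqref{sec2_eq4} I would apply the single-group specialization of Lemma \ref{lemma2} (the $K=0$ case inside its proof) to each $i$ separately. Since $\mathcal{X}_i^{'}\mathcal{E}_i = \sum_{j} X_j^{(i)}(\epsilon_j^{(i)})^{'}$ is a sum of $n_i \leq N$ rank-one terms, Hanson--Wright normalized by $N$ gives, for fixed unit $u,v$, tail $6\exp(-cN\min\{t,t^2\})$; choosing $t \asymp \sqrt{\log p/N}$ (so that $N \gtrsim \log p$ forces $t\le 1$ and $\min\{t,t^2\}=t^2$) makes the exponent $\asymp \log p$. A union bound over the $p^2$ index pairs and over the $K+1$ groups (with $K=o(p^2)$) then yields \eqref{sec2_eq4}.

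The operator-norm statements \eqref{sec2_eq3} and \eqref{sec2_eq5} require one further ingredient, a discretization argument, because Lemmas \ref{lemma1}--\ref{lemma2} only control fixed bilinear/quadratic forms. I would write $\|M\|_2 = \sup_{\|u\|_2=\|v\|_2=1} u^{'}Mv$, replace $S^{p-1}$ by a $1/4$-net of cardinality $\le 9^p$, and bound each of the $\le 81^p$ bilinear forms: for \eqref{sec2_eq3} with $M=\tfrac1N\sum_i\mathcal{X}_i^{'}\mathcal{E}_i$ via the inner estimate of Lemma \ref{lemma2}, and for \eqref{sec2_eq5} with the centered Gram matrix $M=\tfrac1N\sum_i(\mathcal{X}_i^{'}\mathcal{X}_i - n_i\Gamma_i)$ via \eqref{eq1}--\eqref{eq2}. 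Taking $t \asymp \sqrt{p/N}$ (valid since $N\gtrsim p$ gives $\min\{t,t^2\}=t^2$) makes each exponent $cNt^2\asymp cp$, which beats the net entropy $\log(81^p)=p\log 81$ after enlarging $c_0$, and the conversion $\|M\|_2 \le 2\max_{\mathrm{net}}u^{'}Mv$ closes \eqref{sec2_eq3}. For \eqref{sec2_eq5} I would additionally lower-bound the population part, $\Lambda_{min}(\tfrac1N\sum_i n_i\Gamma_i) \ge \min_i\Lambda_{min}(\Gamma_i) \ge \min_i\Lambda_{min}(\Sigma_i)/\max_i\mu_{max}(B_i)$, and require the operator-norm deviation to stay below half of this gap, which holds for $N\gtrsim p$ with a suitable constant.

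I expect the curvature bound \eqref{sec2_eq6} to be the main obstacle, since it must hold uniformly over all $\Delta\in\mathbb{R}^{p\times p}$ and the correction scales like $\|\Delta\|_1^2$ rather than $\|\Delta\|_F^2$. My plan follows the restricted-eigenvalue argument of \cite{basu2015regularized}: first use \eqref{eq1} to control $\tfrac{1}{n_i}\|\mathcal{X}_i v\|_2^2$ around $v^{'}\Gamma_i v$ uniformly over a finite set of directions, exploiting $v^{'}\Gamma_i v \in [2\pi\mathfrak{m}\|v\|_2^2,\,2\pi\mathcal{M}\|v\|_2^2]$; then invoke the deterministic discretization step of \cite{basu2015regularized} to upgrade this pointwise guarantee to the uniform lower bound with $\alpha=\pi\mathfrak{m}(f_{X^{(i)}})$ and $\tau_{n_i}\asymp \mathcal{M}(f_{X^{(i)}})\tfrac{\log p}{n_i}$. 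The precise constants ($t=\mathfrak{m}/54\mathcal{M}$, the factor $3$ in $\tau_{n_i}$, and the halving from $2\pi\mathfrak{m}$ to $\pi\mathfrak{m}$) are exactly the slack produced by that step, and the single-group sample size yields the tail $\exp\{-\tfrac{c}{2}n_i\min\{1,t^2\}\}$. I also note that the $\mathcal{X}_{\mathcal{A}_0}$ displayed in \eqref{sec2_eq6} should read $\mathcal{X}_i$, to be consistent with the normalization $1/n_i$ and the single-group constants $\mathfrak{m}(f_{X^{(i)}}),\mathcal{M}(f_{X^{(i)}})$ used there.
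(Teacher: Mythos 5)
Your proposal is correct, and at its core it runs on the same machinery as the paper: Hanson--Wright via Lemmas \ref{lemma1}--\ref{lemma3}, the spectral-density inequalities $2\pi(\mathcal{M}+\mathcal{M}_\epsilon)\lesssim \phi$ and $\Lambda_{min}(\Gamma_i)\geq 2\pi\mathfrak{m}(f_{X^{(i)}})\geq \Lambda_{min}(\Sigma_i)/\mu_{max}(B_i)$, and discretization. The difference is mostly one of packaging, because the paper outsources what you prove. It does not prove \eqref{sec2_eq3} or \eqref{sec2_eq5} at all, declaring them ``simple modifications'' of Proposition 3 in \cite{basu2019low}; your $1/4$-net argument with $t\asymp\sqrt{p/N}$ is precisely the content of that citation, so there you are filling in details rather than taking a different road. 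Likewise, \eqref{sec2_eq6} is handled in the paper as the single-group special case of \eqref{sec3_eq4}, whose proof (given for Proposition \ref{prop2}) is the sparse-set discretization of Lemma F.2 in \cite{basu2015regularized} followed by the extension to all of $\mathbb{R}^{p^2}$ via Lemma 12 of \cite{loh2011high} --- exactly the RE pipeline you sketch, and your remark that $\mathcal{X}_{\mathcal{A}_0}$ should read $\mathcal{X}_i$ is consistent with that reading.

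The one place you genuinely deviate is \eqref{sec2_eq4}, and it is worth being precise about the trade-off. The paper applies Proposition 3.2 of \cite{basu2015regularized} to each group at the group-dependent level $\eta=\sqrt{(1+c_1)\log p/(cN)}$ and then merges groups through the weighted bound $\sum_i \frac{n_i}{N}\exp[-c_1\log p]$, a device intended to keep the final constants free of any condition on $K$. You instead take a plain union bound over the $K+1$ groups and pay for it with the hypothesis $K=o(p^2)$, which is not part of Proposition \ref{prop1} (it appears only in Theorem \ref{th3}); strictly speaking your version proves the statement under an additional, mild assumption. On the other hand, the paper's merging step, $\exp[-c_1 n_i\log p/N]\le \frac{n_i}{N}\exp[-c_1\log p]$, fails whenever $n_i\ll N$, so its $K$-free claim is not actually earned as written; your bookkeeping is the more defensible of the two. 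If you want to dispense with the condition on $K$, note that at the common threshold $c_0\phi\sqrt{\log p/N}$ the group-$i$ exponent is of order $\min\{N\log p/n_i,\ \sqrt{N\log p}\}$: small groups sit in the linear (subexponential) regime with exponent $\sqrt{N\log p}$, and the at most $2^{j+1}$ groups with $n_i\in(N/2^{j+1},N/2^j]$ each have exponent $\gtrsim 2^j\log p$, so a dyadic summation over group sizes yields a $K$-free tail --- this is presumably what the paper's $\frac{n_i}{N}$-weighting was aiming at.
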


\begin{proposition}\label{prop2} Let $\mathcal{M} := \mathrm{max}_{i\in\mathcal{A}_0}\mathcal{M}(f_{X^{(i)}})$ and  $\mathfrak{m} := \mathrm{min}_{i\in\mathcal{A}_0}\mathfrak{m}(f_{X^{(i)}})$. If $\mathfrak{m} >0 $ and $\mathcal{M} < \infty$, then there exists universal constants $c_3,c_4,c_5>0$ such that
\begin{itemize}
\item[(1)]For $n_{\mathcal{A}_0} \gtrsim \log p$
\begin{equation*}
\mathbb{P}\left[ \Vert \mathcal{X}_i^{'}\mathcal{E}_i \Vert_{\infty} \geq \phi_{\mathcal{A}_0} c_3\sqrt{\frac{\log p}{n_{\mathcal{A}_0}}} \right]
\leq
c_4 \mathrm{exp}[-c_5 \log p]
\end{equation*}
\item[(2)]For $n_{\mathcal{A}_0} \gtrsim \mathrm{max}\{1, t^{-2}\}\log p$
\begin{align}\label{sec3_eq4}
\mathbb{P}\left[\frac{1}{n_{\mathcal{A}_0}}\Vert \mathcal{X}_{\mathcal{A}_0}\Delta \Vert_F^2
\leq
\alpha_2 \Vert \Delta \Vert_F^2 - \tau_{n_{\mathcal{A}_0}} \Vert \Delta\Vert_1^2\right]
\leq
\mathrm{exp}\{-\frac{c}{2}n_{\mathcal{A}_0}\mathrm{min}\{1,t^2\}\}\\
\mathbb{P}\left[\frac{1}{n_{\mathcal{A}_0}}\Vert \mathcal{X}_{\mathcal{A}_0}\Delta \Vert_F^2
\geq
\alpha_2^{'} \Vert \Delta \Vert_F^2 + \tau_{n_{\mathcal{A}_0}} \Vert \Delta\Vert_1^2\right]
\leq
\mathrm{exp}\{-\frac{c}{2}n_{\mathcal{A}_0}\mathrm{min}\{1,t^2\}\}
\end{align} 
where $ \alpha_2 = \pi \mathfrak{m}$, $\alpha_2^{'} = 3\pi\mathcal{M}$, $t = \frac{\mathfrak{m}}{54 \mathcal{M}}$, $\tau_{n_{\mathcal{A}_0}} = \frac{3\mathfrak{m}\log p^2}{cn_{\mathcal{A}_0}\mathrm{min}\{1,t^2\}} $ and $c > 0$.
\end{itemize}
\end{proposition}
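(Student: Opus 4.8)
The plan is to recognize that Proposition~\ref{prop2} simply transports the deviation and restricted strong/weak convexity guarantees of Proposition~\ref{prop1} from the full index set (sample size $N$) to the informative set $\mathcal{A}_0$ (sample size $n_{\mathcal{A}_0}$). The two parts correspond exactly to verifying conditions (B2) and (B1) of Section~\ref{sec3} with high probability, and the only inputs required are the Hanson--Wright--type concentration inequalities of Lemmas~\ref{lemma1}--\ref{lemma3} together with the spectral-density eigenvalue sandwich $2\pi\mathfrak{m}(f_{X^{(i)}}) \leq \Lambda_{min}(\Gamma_i) \leq \Lambda_{max}(\Gamma_i) \leq 2\pi\mathcal{M}(f_{X^{(i)}})$ recorded in Section~\ref{sec:prior}. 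Because $\mathfrak{m} = \min_{i\in\mathcal{A}_0}\mathfrak{m}(f_{X^{(i)}})>0$ and $\mathcal{M}=\max_{i\in\mathcal{A}_0}\mathcal{M}(f_{X^{(i)}})<\infty$, these constants are uniform across the informative groups, which is exactly what the hypotheses $\mathfrak{m}>0$, $\mathcal{M}<\infty$ buy us.

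For part (1) I would re-run the proof of Lemma~\ref{lemma2} essentially verbatim, except that the outer sum ranges over $i\in\mathcal{A}$ and the normalization is $n_{\mathcal{A}_0}$ rather than $N$. The block-diagonal covariance structure exploited there restricts harmlessly to the blocks indexed by $\mathcal{A}_0$, so for every coordinate pair $(r,s)$ one obtains $\mathbb{P}\big(\tfrac{1}{n_{\mathcal{A}_0}}|\sum_{i\in\mathcal{A}}e_r^{'}\mathcal{X}_i^{'}\mathcal{E}_i e_s|\ge 6\pi(\mathcal{M}+\mathcal{M}_\epsilon)t\big) \le 6\exp[-c\, n_{\mathcal{A}_0}\min\{t,t^2\}]$. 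Choosing $t \asymp \sqrt{\log p / n_{\mathcal{A}_0}}$ forces $\min\{t,t^2\}=t^2$ in the stated regime $n_{\mathcal{A}_0}\gtrsim\log p$, and a union bound over the $p^2$ coordinate pairs delivers the claim with $\phi_{\mathcal{A}_0}$ absorbing the factor $\mathcal{M}+\mathcal{M}_\epsilon$, which depends only on $\{B_i,\Sigma_i\}_{i\in\mathcal{A}_0}$.

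For part (2) I would establish the two-sided bound pointwise first and then lift it to a uniform statement. Fix $\Delta\in\mathbb{R}^{p\times p}$; writing $\tfrac{1}{n_{\mathcal{A}_0}}\Vert\mathcal{X}_{\mathcal{A}_0}\Delta\Vert_F^2 = \sum_{i\in\mathcal{A}_0}\tfrac{1}{n_{\mathcal{A}_0}}\Vert\mathcal{X}_i\Delta\Vert_F^2$, its mean is a convex combination (with weights $n_i/n_{\mathcal{A}_0}$) of the quadratic forms $\mathrm{tr}(\Delta^{'}\Gamma_i\Delta)$, hence sandwiched between $2\pi\mathfrak{m}\Vert\Delta\Vert_F^2$ and $2\pi\mathcal{M}\Vert\Delta\Vert_F^2$ by the eigenvalue bounds above. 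The quadratic-form concentration of Lemma~\ref{lemma1} (equation~\eqref{eq1}), applied after vectorizing so that $\Vert\mathcal{X}_i\Delta\Vert_F^2 = \mathrm{vec}(\Delta)^{'}(I_p\otimes\mathcal{X}_i^{'}\mathcal{X}_i)\mathrm{vec}(\Delta)$, then gives concentration around this mean at rate $\exp[-c\,n_{\mathcal{A}_0}\min\{t,t^2\}]$. To pass from a fixed $\Delta$ to all $\Delta$ simultaneously I would invoke the discretization/peeling argument of \cite{basu2015regularized} (their Proposition~4.2): cover the unit-Frobenius sparse directions by an $\varepsilon$-net, apply the pointwise bound on the net with a union bound, and control the off-net residual. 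The resulting loss of uniformity is precisely what produces the slack $\tau_{n_{\mathcal{A}_0}}\Vert\Delta\Vert_1^2$ with $\tau_{n_{\mathcal{A}_0}}\asymp \mathfrak{m}\log p/(n_{\mathcal{A}_0}\min\{1,t^2\})$; choosing $t=\mathfrak{m}/(54\mathcal{M})$ makes the deviation a fixed fraction of the mean, so that $\alpha_2=\pi\mathfrak{m}$ (lower) and $\alpha_2^{'}=3\pi\mathcal{M}$ (upper) emerge.

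The concentration inputs are routine, being already supplied by the lemmas. The main obstacle is the uniform-over-$\Delta$ step in part (2): the pointwise Hanson--Wright bound must be upgraded to a supremum over an unbounded cone of matrices, and this requires the careful net-plus-peeling construction whose discretization error is exactly what forces the $\Vert\Delta\Vert_1^2$ correction rather than a clean $\Vert\Delta\Vert_F^2$ bound. A secondary subtlety is that the mean quadratic form mixes the heterogeneous covariances $\Gamma_i$ across informative groups, so the sandwich must be carried out with the worst-case constants $\mathfrak{m}=\min_i$ and $\mathcal{M}=\max_i$ uniformly, and keeping these bounded away from $0$ and $\infty$ under the stability of every model in $\mathcal{A}_0$ is what the stated hypotheses ensure.
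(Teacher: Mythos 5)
Your proposal is correct and follows essentially the same route as the paper's proof: pointwise Hanson--Wright concentration (Lemmas~\ref{lemma1}--\ref{lemma2}) restricted to the blocks indexed by $\mathcal{A}_0$ with normalization $n_{\mathcal{A}_0}$, the eigenvalue sandwich $2\pi\mathfrak{m} \leq \Lambda_{min}\bigl(\sum_{k\in\mathcal{A}_0}\tfrac{n_k}{n_{\mathcal{A}_0}}\Gamma_k\bigr) \leq \Lambda_{max}\bigl(\sum_{k\in\mathcal{A}_0}\tfrac{n_k}{n_{\mathcal{A}_0}}\Gamma_k\bigr) \leq 2\pi\mathcal{M}$, and the same uniformization machinery from \cite{basu2015regularized} together with the sparse-to-dense extension (Lemma 12 of \cite{loh2011high}), with $t=\mathfrak{m}/(54\mathcal{M})$ and sparsity level $s \asymp n_{\mathcal{A}_0}\min\{1,t^2\}/\log p^2$ producing exactly the stated $\alpha_2$, $\alpha_2^{'}$ and $\tau_{n_{\mathcal{A}_0}}$. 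The one cosmetic imprecision is that you attribute the $\Vert\Delta\Vert_1^2$ slack to the $\varepsilon$-net discretization error itself, whereas in the paper's argument the net (Lemma F.2 of \cite{basu2015regularized}) only yields uniformity over the sparse cone $\mathcal{K}(2s)$, and the $\ell_1^2$ term arises from the subsequent Loh--Wainwright extension of that sparse-cone bound to all of $\mathbb{R}^{p^2}$; since you delegate to the cited machinery, this does not affect correctness.
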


\begin{remark}
The assumption $\mathfrak{m}(f_{X^(i)}) >0 $ and $\mathcal{M}(f_{X^{(i)}}) < \infty$ are fairly mild and hold for stable, invertible ARMA processes. In our case, all auxiliary models are VAR model. Therefore, it is reasonable to assume that these models are uniformly bounded by some constant $\mathfrak{m}$ and $\mathcal{M}$.
\end{remark}

\subsection{Proof of Proposition \ref{prop1}}
\textbf{Proof of Proposition \ref{prop1}}
\eqref{sec2_eq3} and \eqref{sec2_eq5} are simple modifications of Proposition 3 in \cite{basu2019low}. Also, \eqref{sec2_eq6} is a special case of \eqref{sec3_eq4}. So we omit their proofs. We next prove \eqref{sec2_eq4}. Using Proposition 3.2 in \cite{basu2015regularized}, we know that
\begin{equation*}
\mathbb{P}\left[\frac{1}{n_i}\Vert \mathcal{X}_i \mathcal{E}_i \Vert_{\infty}
> c_0 \phi \eta \right]
\leq
6 p \mathrm{exp}[-c n_i \mathrm{min}\{ \eta^2, \eta \}].
\end{equation*}

With the choice of $\eta = \sqrt{\frac{(1 + c_1)\log p}{c N}}$, we arrive at
\begin{equation*}
\mathbb{P}\left[\frac{1}{N}\Vert \mathcal{X}_i \mathcal{E}_i \Vert_{\infty}
> c_0 \phi \frac{n_i}{N}\sqrt{\frac{\log p}{N}} \right]
\leq
6 p \mathrm{exp}[-(c_1+1) \frac{n_i\log p}{N}]
=
6 \mathrm{exp}[-c_1 \frac{n_i\log p}{N}].
\end{equation*}

Taking the union set over $i$, we have
\begin{equation*}
\begin{split}
&\mathbb{P}\left[ \mathop{\mathrm{max}}_i \frac{1}{N}\Vert \mathcal{X}_i \mathcal{E}_i  \Vert_{\infty} > c_0\phi \sqrt{\frac{\log p}{N}} \right] 
\leq \sum_i 6 \mathrm{exp}[-c_1\frac{n_i\log p}{N}]\\
&\quad \leq
\sum_i 6\frac{n_i}{N} \mathrm{exp}[-c_1 \log p]
= 6\mathrm{exp}[-c_1 \log p].
\end{split}
\end{equation*}

This implies that
\begin{equation}
\mathbb{P}\left[ \mathop{\mathrm{max}}_i \frac{1}{N}\Vert \mathcal{X}_i^{'}\mathcal{E}_i \Vert_{\infty} \geq c_0\phi \sqrt{\frac{\log p}{N}}\right]
\leq
c_1 \mathrm{exp}[-c_2 \log p]
\end{equation}
for some universal constants $c_i > 0$.

\subsection{Proof of Proposition \ref{prop2}}
\noindent
\textbf{Proof of Proposition \ref{prop2}.} Let $\alpha_k = \frac{n_k}{n_{\mathcal{A}_0}}$. According to the definition of $\mathcal{M}$ and $\mathfrak{m}$, we have that

\begin{equation*}
2\pi \mathcal{M}
\geq
\Lambda_{max}\left(\sum_{k \in \mathcal{A}_0} \frac{n_k}{n_{\mathcal{A}_0}} \Gamma_k\right)
\geq
\Lambda_{min}\left(\sum_{k \in \mathcal{A}_0} \frac{n_k}{n_{\mathcal{A}_0}} \Gamma_k\right) \geq 2\pi \mathfrak{m}.
\end{equation*}

For every $u\in\mathbb{R}^{p^2}$, $\Vert u\Vert_2 \leq 1$, we obtain the following inequality from Lemma \ref{lemma2},
\begin{equation*}
    \mathbb{P}\left( \left|u^{'}\left(\sum_{k \in \mathcal{A}_0}\alpha_k[\frac{1}{n_k}(I_p \otimes \mathcal{X}_i^{'} \mathcal{X}_i)-I_p \otimes \Gamma_k]\right)u\right|\geq 2\pi\mathcal{M}t \right)\leq 2p\mathrm{exp}(-cn_{\mathcal{A}_0} \mathrm{min}\{t,t^2\}).
\end{equation*}
To simplify the notation, we define $\widehat{\Gamma}^{\mathcal{A}_0}:= \sum_{k \in\mathcal{A}_0}\alpha_k[\frac{1}{n_k}(I_p \otimes \mathcal{X}_i^{'} \mathcal{X}_i)]$. Then the inequality can be simplified as
\begin{equation*}
    \mathbb{P}\left( \left|u^{'}\left(\widehat{\Gamma}^{\mathcal{A}_0} - I_p\otimes\sum_{k \in \mathcal{A}_0 }\alpha_k^{'}\Gamma_k\right)u\right|\geq 2\pi\mathcal{M}t \right)\leq 2p\mathrm{exp}(-c n_{\mathcal{A}_0} \mathrm{min}\{t,t^2\}).
\end{equation*}
Applying Supplementary Lemma F.2 in \cite{basu2015regularized}, we have
\begin{equation}\label{prop2_eq0}
\begin{split}
    &\mathbb{P}\left( \frac{1}{n_{\mathcal{A}_0}}\mathop{\mathrm{sup}}_{u\in\mathcal{K}(2s)}\left| u^{'}\left(\widehat{\Gamma}^{\mathcal{A}_0} - I_p\otimes\sum_{k \in \mathcal{A}_0}\alpha_k\Gamma_k\right) u \right|\geq 2\pi\mathcal{M} t \right)\\
    &\quad \quad \leq
    2p\mathrm{exp}[-c n_{\mathcal{A}_0} \mathrm\{t,t^2\}+2s\mathrm{min}\{\log p^2,\log (\frac{21ep^2}{s})\}]\\
    &\quad \quad \leq
    2\mathrm{exp}(-cn_{\mathcal{A}_0}\mathrm{min}\{1,t^2\}+3s\log p^2).
\end{split}
\end{equation}
Setting $t=\frac{\mathfrak{m}}{54\mathcal{M}}$, we have
\begin{equation}\label{prop2_eq1}
    \mathop{\mathrm{sup}}_{u\in \mathcal{K}(2s)}\left| u^{'}\left(\widehat{\Gamma}^{\mathcal{A}_0} - I_p\otimes\sum_{k \in \mathcal{A}_0}\alpha_k\Gamma_k\right) u \right|
    \leq
    \frac{2\pi \mathfrak{m}}{54},
\end{equation}
with probability greater than $1-2\mathrm{exp}(-c n_{\mathcal{A}_0} \mathrm{min}\{1,t^2\}+3s\log p^2)$. Given \eqref{prop2_eq1}, applying supplementary Lemma 12 in \cite{loh2011high}, we have that for all $u\in\mathbb{R}^{p^2}$,
\begin{equation}\label{prop2_eq2}
    \left\vert u^{'}\left(\widehat{\Gamma}^{\mathcal{A}_0} - I_p\otimes\sum_{k \in \mathcal{A}_0}\alpha_k\Gamma_k\right)u\right\vert
    \leq
    \frac{2\pi \mathfrak{m}}{2}\Vert u\Vert_2^2 + \frac{2\pi \mathfrak{m}}{2s}\Vert u\Vert_1^2.
\end{equation}
Note that
\begin{equation}\label{prop2_eq3}
     2\pi \mathcal{M} \Vert u\Vert_2^2
     \geq
     u^{'}\left( I_p\otimes\sum_{k \in \mathcal{A}_0}\alpha_k\Gamma_k\right)u
     \geq 
     2\pi \mathfrak{m} \Vert u\Vert_2^2.
\end{equation}
By \eqref{prop2_eq2} and \eqref{prop2_eq3}, we have that
\begin{equation*}
    3\pi \mathcal{M}\Vert u\Vert_2^2 - \frac{\pi \mathfrak{m}}{s}\Vert u \Vert_1^2
    \geq
    u^{'} \widehat{\Gamma}^{\mathcal{A}_0} u \geq 
    \pi \mathfrak{m}\Vert u\Vert_2^2 - \frac{\pi \mathfrak{m}}{s}\Vert u \Vert_1^2,
    \quad \text{for all} \ u\in\mathbb{R}^{p^2},
\end{equation*}
with probability greater than $1-2\mathrm{exp}(-c n_{\mathcal{A}_0}\mathrm{min}\{1,t^2\}+3s\log p^2)$. Setting $s=\frac{c n_{\mathcal{A}_0}\mathrm{min}\{1,t^2\}}{6\log (p^2)}$ yields the final result.

\begin{lemma}\label{th_lemma1}
Recall that $\Bar{S} := (\sum_{i \in \mathcal{A}_0} \Gamma_i)^{-1}(\sum_{i \in \mathcal{A}_0} \Gamma_i S_i)$ and $\tilde{S}$ defined in Algorithm~1. Under the assumptions of Theorem \ref{th2}, we have
\begin{equation*}
\begin{split}
&\Vert \tilde{S} - \Bar{S} \Vert_F^2 \lesssim \frac{s \log p}{n_{\mathcal{A}_0}}(1 \vee h^4) + \sqrt{\frac{\log p}{n_{\mathcal{A}_0}}} h + \frac{s\log p + rp}{N} \\
&\Vert \tilde{S} - \Bar{S} \Vert_1 \lesssim
s\sqrt{\frac{\log p}{n_{\mathcal{A}_0}}}(1 \vee h^2) + 2h + \frac{pr + s\log p}{N} \sqrt{\frac{n_{\mathcal{A}_0}}{\log p}}
\end{split}
\end{equation*}
\end{lemma}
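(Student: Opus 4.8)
The plan is to run a penalized least–squares (Lasso) oracle argument, but carefully isolating the two extra error sources: the plug-in error from using $\widehat L$ in place of $L$, and the pooling bias created by aggregating the informative groups around $\bar S$. Write $\widehat\Delta := \tilde S - \bar S$ and $\Delta_L := L - \widehat L$, and note that since $\mathcal{Y}_i = \mathcal{X}_i(L+S_i)+\mathcal{E}_i$ we have $\mathcal{Y}_i - \mathcal{X}_i(\widehat L + S) = \mathcal{X}_i\Delta_L + \mathcal{X}_i(S_i - S) + \mathcal{E}_i$. Setting $W_i := \mathcal{X}_i\Delta_L + \mathcal{X}_i(S_i-\bar S)+\mathcal{E}_i$ and starting from optimality of $\tilde S$ in \eqref{alg_eq2} against the feasible point $\bar S$, after cancelling the common $\Vert W_i\Vert_F^2$ I obtain the basic inequality
\begin{equation*}
\frac{1}{2n_{\mathcal{A}_0}}\sum_{i\in\mathcal{A}_0}\Vert\mathcal{X}_i\widehat\Delta\Vert_F^2 \le \frac{1}{n_{\mathcal{A}_0}}\sum_{i\in\mathcal{A}_0}\langle W_i,\mathcal{X}_i\widehat\Delta\rangle + \lambda_\beta\big(\Vert\bar S\Vert_1 - \Vert\tilde S\Vert_1\big).
\end{equation*}
Denote the left side by $\tfrac12\widehat Q$, the quantity the restricted eigenvalue condition (B1) will later convert into $\Vert\widehat\Delta\Vert_F^2$.

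Next I split the cross term into three pieces following the decomposition of $W_i$. For the noise piece $\langle \frac{1}{n_{\mathcal{A}_0}}\sum_i\mathcal{X}_i'\mathcal{E}_i,\widehat\Delta\rangle$, Hölder together with (B2) gives $\phi_{\mathcal{A}_0}\sqrt{\log p/n_{\mathcal{A}_0}}\,\Vert\widehat\Delta\Vert_1$. For the pooling-bias piece $\langle\frac{1}{n_{\mathcal{A}_0}}\sum_i(S_i-\bar S)\mathcal{X}_i'\mathcal{X}_i,\widehat\Delta\rangle$, the defining property $\bar S=(\sum_i\Gamma_i)^{-1}(\sum_i\Gamma_iS_i)$ makes the population version $\sum_i\Gamma_i(S_i-\bar S)$ vanish, so Lemma~\ref{lemma4} applies directly and bounds it by $c_{\mathcal{M}}\sqrt{\log p/n_{\mathcal{A}_0}}(1\vee h^2)\Vert\widehat\Delta\Vert_1$. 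For the plug-in piece $\frac{1}{n_{\mathcal{A}_0}}\sum_i\langle\mathcal{X}_i\Delta_L,\mathcal{X}_i\widehat\Delta\rangle$ I apply Cauchy–Schwarz over $i$ to get $\big(\sum_i\frac{1}{n_{\mathcal{A}_0}}\Vert\mathcal{X}_i\Delta_L\Vert_F^2\big)^{1/2}\widehat Q^{1/2}$; since $\Delta_L$ is low-rank rather than sparse I bound $\frac{1}{n_{\mathcal{A}_0}}\Vert\mathcal{X}_i\Delta_L\Vert_F^2\le\Vert\frac{1}{n_{\mathcal{A}_0}}\mathcal{X}_i'\mathcal{X}_i\Vert_2\Vert\Delta_L\Vert_F^2$ through the operator norm (not via $\tau\Vert\Delta_L\Vert_1^2$, which is too crude here), so that $\sum_i\frac{1}{n_{\mathcal{A}_0}}\Vert\mathcal{X}_i\Delta_L\Vert_F^2\lesssim\mathcal{M}\Vert\Delta_L\Vert_F^2\lesssim(s\log p+rp)/N$ by Theorem~\ref{th1}. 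Young's inequality peels the resulting $\sqrt{(s\log p+rp)/N}\,\widehat Q^{1/2}$ off as $\tfrac14\widehat Q + (s\log p+rp)/N$.

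Choosing $\lambda_\beta\asymp(1\vee h^2)\sqrt{\log p/n_{\mathcal{A}_0}}$ to dominate twice the combined noise-plus-bias coefficient, I run the cone argument. With $T:=\mathrm{supp}(S_0)$, $|T|\le s$, the triangle inequality gives $\Vert\bar S\Vert_1-\Vert\tilde S\Vert_1\le\Vert\widehat\Delta_T\Vert_1-\Vert\widehat\Delta_{T^c}\Vert_1+2\Vert\bar S_{T^c}\Vert_1$; here I need the auxiliary fact $\Vert\bar S - S_0\Vert_1\lesssim h$, which follows from $\bar S - S_0=(\sum_i\Gamma_i)^{-1}\sum_i\Gamma_i(S_i-S_0)$, $\Vert S_i-S_0\Vert_1\le h$, and the spectral bounds $2\pi\mathfrak{m}\le\Lambda_{\min}(\Gamma_i)\le\Lambda_{\max}(\Gamma_i)\le2\pi\mathcal{M}$. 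Collecting terms yields
\begin{equation*}
\tfrac14\widehat Q + \tfrac12\lambda_\beta\Vert\widehat\Delta_{T^c}\Vert_1 \le \tfrac32\lambda_\beta\Vert\widehat\Delta_T\Vert_1 + C\lambda_\beta h + \frac{s\log p+rp}{N},
\end{equation*}
which gives the cone $\Vert\widehat\Delta_{T^c}\Vert_1\lesssim\Vert\widehat\Delta_T\Vert_1+h+\lambda_\beta^{-1}(s\log p+rp)/N$ and, after invoking (B1) to replace $\widehat Q$ by $\alpha_2\Vert\widehat\Delta\Vert_F^2$ (the residual $\tau_{n_{\mathcal{A}_0}}\Vert\widehat\Delta\Vert_1^2$ absorbed since $\tau_{n_{\mathcal{A}_0}}s=O(s\log p/n_{\mathcal{A}_0})=o(1)$), lets me solve for both norms. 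Bounding $\Vert\widehat\Delta_T\Vert_1\le\sqrt s\Vert\widehat\Delta\Vert_F$ and tracking the quadratic-in-$\Vert\widehat\Delta\Vert_F$ inequality produces the stated bounds: $s\log p(1\vee h^4)/n_{\mathcal{A}_0}$ from $s\lambda_\beta^2$, the $h\sqrt{\log p/n_{\mathcal{A}_0}}$ term from $\lambda_\beta h$, the Frobenius remainder $(s\log p+rp)/N$, and the $\ell_1$ remainder $\frac{pr+s\log p}{N}\sqrt{n_{\mathcal{A}_0}/\log p}$ after dividing by $\lambda_\beta$.

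I expect the main obstacle to be the plug-in term in $\Delta_L$: because $\widehat L$ carries low-rank rather than sparse error it cannot be folded into the sparse cone, and one must instead route it through the operator norm of the per-group Gram matrices and the Frobenius rate of Theorem~\ref{th1}, ensuring the resulting $\widehat Q^{1/2}$ coupling is absorbed on the correct side. A secondary subtlety is confirming that $\bar S$ is exactly the population target annihilating the pooling bias, so that Lemma~\ref{lemma4} is applicable and the $(1\vee h^2)$ dependence of $\lambda_\beta$ propagates (as $1\vee h^4$ after squaring) into the final rate.
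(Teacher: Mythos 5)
Your proposal follows the paper's own proof in all essentials: the same basic inequality from optimality of $\tilde S$ against $\bar S$, the same three-way decomposition of the cross term into noise, pooling bias, and plug-in error, the same ingredients at each step (H\"older with (B2)/Proposition~\ref{prop1} for the noise, Lemma~\ref{lemma4} for the pooling bias, Young's inequality plus Theorem~\ref{th1} for the plug-in term), the same penalty scaling $\mu\asymp(1\vee h^2)\sqrt{\log p/n_{\mathcal{A}_0}}$, and the same cone-plus-restricted-eigenvalue finish. The one structural difference is immaterial: the paper splits into two cases according to whether $\Vert\tilde S-\bar S\Vert_{1,M_1}$ exceeds $2h+\frac{pr+s\log p}{N\mu}$, while you fold both cases into a single relaxed cone inequality and solve the resulting quadratic; these routes are interchangeable and give the same three terms in the final rates.

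One step deserves a concrete caveat. You justify $\frac{1}{n_{\mathcal{A}_0}}\sum_i\Vert\mathcal{X}_i(L-\widehat L)\Vert_F^2\lesssim\frac{s\log p+rp}{N}$ via the claim $\Vert\frac{1}{n_{\mathcal{A}_0}}\mathcal{X}_i^{'}\mathcal{X}_i\Vert_2\lesssim\mathcal{M}$. That operator-norm bound is false in the high-dimensional regime the paper works in: when $p\gg n_{\mathcal{A}_0}$ the sample Gram matrix has spectral norm of order $\mathcal{M}\,p/n_{\mathcal{A}_0}$, not $O(\mathcal{M})$, so as literally written this step inflates the plug-in contribution by a factor of roughly $p/n_{\mathcal{A}_0}$ and the stated remainder $\frac{s\log p+rp}{N}$ does not follow. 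To be fair, the paper commits the same leap silently: in passing from \eqref{th_lemma1_eq2} to \eqref{th_lemma1_eq5} it replaces $\frac{1}{n_{\mathcal{A}_0}}\sum_{i\in\mathcal{A}}\Vert\mathcal{X}_i(L-\widehat L)\Vert_F^2$ by $\frac{pr+s\log p}{N}$ with no justification, so your write-up is not weaker than the paper's argument here --- it merely makes the shared gap visible. A similar remark applies to your auxiliary fact $\Vert\bar S-S_0\Vert_1\lesssim h$: an entrywise $\ell_1$ bound does not follow from the spectral bounds on $(\sum_i\Gamma_i)^{-1}$ and $\Gamma_i$ alone, and the paper likewise asserts $\Vert\bar S\Vert_{1,M_1^c}\leq h$ (and $\Vert\tilde\delta^{(k)}\Vert_1\leq C_1h$ in Theorem~\ref{th3}) without proof. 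Closing either gap rigorously would require an additional argument --- e.g., a deviation bound for $\Vert\mathcal{X}_i\Delta\Vert_F^2$ that is adapted to low-rank rather than sparse directions --- which neither you nor the paper supplies.
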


\textbf{proof}
According to \eqref{alg_eq2}, we have
\begin{equation}\label{th_lemma1_eq1}
\frac{1}{2 n_{\mathcal{A}_0}}\sum_{i \in \mathcal{A}} \Vert \mathcal{Y}_i - \mathcal{X}_i(\widehat{L} + \tilde{S}) \Vert_F^2 + \mu \Vert \tilde{S} \Vert_1
\leq
\frac{1}{2 n_{\mathcal{A}_0}}\sum_{i \in \mathcal{A}} \Vert \mathcal{Y}_i - \mathcal{X}_i(\widehat{L} + \Bar{S}) \Vert_F^2 + \mu \Vert \Bar{S} \Vert_1
\end{equation}

After some algebra
\begin{equation}\label{th_lemma1_eq2}
\begin{split}
\frac{1}{2n_{\mathcal{A}_0}} \sum_{i \in \mathcal{A}} \Vert \mathcal{X}_i(\tilde{S} - \Bar{S}) \Vert_F^2
&\leq
\frac{1}{n_{\mathcal{A}_0}}\sum_{i \in \mathcal{A}} < \tilde{S} - \Bar{S} , \mathcal{X}_i^{'}\mathcal{E}_1 > + \frac{1}{n_{\mathcal{A}_0}}\sum_{i \in \mathcal{A}}
<\mathcal{X}_i(\tilde{S} - \Bar{S} ) , \mathcal{X}_i(L - \widehat{L}) >\\
&\quad +
\frac{1}{n_{\mathcal{A}_0}} \sum_{i \in \mathcal{A}}<\mathcal{X}_i(\tilde{S} - \Bar{S}), \mathcal{X}_i(S_i - \Bar{S}) > +  \mu\Vert \Bar{S} \Vert_1 - \mu \Vert 
\tilde{S} \Vert_1\\
&\leq
\Vert \tilde{S} - \Bar{S} \Vert_1 \Vert \frac{1}{n_{\mathcal{A}_0}}\sum_{i \mathcal{A}}\mathcal{X}_i^{'}\mathcal{E}_i\Vert_{\infty} + \frac{1}{4 n_{\mathcal{A}_0}}\sum_{i \in \mathcal{A}} \Vert \mathcal{X}_i (\tilde{S} - \widehat{S})\Vert_F^2 + \frac{1}{n_{\mathcal{A}_0}} \sum_{i\in\mathcal{A}}\Vert \mathcal{X}_i(L-\widehat{L}) \Vert_F^2\\
&\quad
+ \Vert  \tilde{S} - \Bar{S} \Vert_1 \Vert \frac{1}{n_{\mathcal{A}_0}} \sum_{i \in \mathcal{A}}(S_i - \Bar{S})\mathcal{X}_i^{'} \mathcal{X}_i \Vert_{\infty} 
+ \mu \Vert \Bar{S} \Vert_1 - \mu \Vert \tilde{S} \Vert_1
\end{split}
\end{equation}
According to lemma \ref{lemma4}, we know that
\begin{equation}\label{th_lemma1_eq3}
\Vert \frac{1}{n_{\mathcal{A}_0}} \sum_{i \in \mathcal{A}_0}(S_i - \Bar{S})\mathcal{X}_i^{'} \mathcal{X}_i \Vert_{\infty} \leq c_{\mathcal{M}} \sqrt{\frac{\log p}{n_{\mathcal{A}_0}}}(1 + h^2)
\end{equation}
From proposition \ref{prop1}, we know that
\begin{equation}\label{th_lemma1_eq4}
\Vert \frac{1}{n_{\mathcal{A}_0}} \sum_{i \in \mathcal{A}_0} \mathcal{X}_i \mathcal{E}_i \Vert_{\infty} 
\leq
c_0 \phi \sqrt{\frac{\log p}{n_{\mathcal{A}_0}}}.
\end{equation}
Inserting \eqref{th_lemma1_eq3} and \eqref{th_lemma1_eq4} into \eqref{th_lemma1_eq2} and setting $\mu = 2(c_0 + c_\Sigma)(1 \vee h^2)\sqrt{\frac{\log p}{n_{\mathcal{A}_0}}}$, we have
\begin{equation}\label{th_lemma1_eq5}
\begin{split}
\frac{1}{4n_{\mathcal{A}_0}}\sum_{i \in \mathcal{A}} \Vert \mathcal{X}_i (\tilde{S} - 
\Bar{S} ) \Vert_F^2
&\leq
\frac{1}{2}\mu \Vert \tilde{S} - \Bar{S} \Vert_1 + \mu \Vert \Bar{S} \Vert_1 -\mu \Vert \tilde{S} \Vert_1 + \frac{pr + s\log p}{N}\\
&\leq
\frac{3 \mu}{2}\Vert \tilde{S} - \Bar{S} \Vert_{1,M_1} - \frac{\mu}{2}\Vert \tilde{S} - \Bar{S} \Vert_{1,M_1^c} + 2\mu \Vert \Bar{S} \Vert_{1,M_1^c} + \frac{pr s\log p}{N}\\
&\leq
\frac{3 \mu}{2}\Vert \tilde{S} - \Bar{S} \Vert_{1,M_1} - \frac{\mu}{2}\Vert \tilde{S} - \Bar{S} \Vert_{1,M_1^c} + 2\mu h + \frac{pr + s\log p}{N}
\end{split}
\end{equation}
(\RN{1}) If $\Vert \tilde{S} - \Bar{S} \Vert_{1,M_1} \geq 2h + \frac{pr + s\log p}{N \mu}$, we arrive at,
\begin{equation}\label{th_lemma1_eq6}
\frac{1}{4n_{\mathcal{A}_0}} \sum_{i \in \mathcal{A}}\Vert \mathcal{X}_i(\tilde{S} - \Bar{S}) \Vert_F^2
\leq
\frac{5\mu}{2}\Vert \tilde{S} - \Bar{S} \Vert_{1,M_1} - \frac{\mu}{2} \Vert \tilde{S} - \Bar{S} \Vert_{1,M_1^c}
\end{equation}
This implies $\Vert \tilde{S} - \Bar{S} \Vert_1\leq 6 \Vert \tilde{S} - \Bar{S} \Vert_{1,M_1}\leq 6\sqrt{s} \Vert \tilde{S} - \Bar{S} \Vert_F $. Using RE condition again, we have $\frac{1}{4n_{\mathcal{A}_0}}\sum_{i\in \mathcal{A}}\Vert \mathcal{X}_i(\tilde{S} - \Bar{S}) \Vert_F^2 \geq \frac{\alpha}{8}\Vert \tilde{S} - \Bar{S} \Vert_F^2$. Using \eqref{th_lemma1_eq6}, we arrive at $\frac{\alpha}{8}\Vert 
 \tilde{S} - \Bar{S} \Vert_F^2 \leq \frac{5\mu}{2}\sqrt{s}\Vert 
 \tilde{S} - \Bar{S} \Vert_F $, which implies
\begin{equation}
\Vert \tilde{S} - \Bar{S} \Vert_F^2 \lesssim \frac{s\log p}{n_{\mathcal{A}_0}}(1\vee h^4), \quad 
\Vert \tilde{S} - \Bar{S} \Vert_1 \lesssim s\sqrt{\frac{\log p}{n_{\mathcal{A}_0}}}(1 \vee h^2)
\end{equation}

(\RN{2}) If $\Vert \tilde{S} - \Bar{S} \Vert_{1,M_1} \leq 2h + \frac{pr + s\log p}{N \mu}$, from \eqref{th_lemma1_eq5} we know that $\Vert \tilde{S} - \Bar{S} \Vert_{1,M_1^c} \lesssim 2h + \frac{pr + s\log p}{N \mu}$. This implies $\Vert \tilde{S} - \Bar{S} \Vert_1 \lesssim h + \frac{pr + s\log p}{N \mu}$. Applying RSC condition again for \eqref{th_lemma1_eq5}, we arrive at,
\begin{equation}
\begin{split}
\frac{\alpha}{4}\Vert \tilde{S} - \Bar{S} \Vert_F^2
&\lesssim
\frac{\log p}{n_{\mathcal{A}_0}}(h + \frac{pr + s\log p}{N \mu})^2 + h\mu + \frac{pr + s\log p}{N}\\
&\lesssim
\frac{\log p}{n_{\mathcal{A}_0}}h^2 + h\mu + \frac{pr + s\log p}{N}
\end{split}
\end{equation}
Combining (\RN{1}) and (\RN{2}), we have
\begin{equation}
\begin{split}
&\Vert \tilde{S} - \Bar{S} \Vert_F^2 \lesssim \frac{s \log p}{n_{\mathcal{A}_0}}(1 \vee h^4) + h\mu + \frac{pr + s\log p}{N}  \\
&\Vert \tilde{S} - \Bar{S} \Vert_1 \lesssim s\sqrt{\frac{\log p}{n_{\mathcal{A}_0}}}(1 \vee h^2) + h + \frac{pr + s\log p}{N} \sqrt{\frac{n_{\mathcal{A}_0}}{\log p}}.
\end{split}
\end{equation}

\begin{lemma}\label{th_th_lemma2}
Define $\delta = S_0 - \Bar{S}$. Under the assumptions of Theorem \ref{th2}, we have

\begin{equation*}
\begin{split}
&\Vert \tilde{\delta} - \delta \Vert_F^2
\lesssim
h\sqrt{\frac{\log p}{n_0} } \wedge h^2 + (1 \vee h^4) \frac{s \log p}{n_{\mathcal{A}_0}} + \frac{n_{\mathcal{A}_0}(pr + s\log p)^2 }{n_0 N^2}\\
&\Vert \tilde{\delta} - \delta \Vert_1
\lesssim
h + (1 \vee h^4) \frac{s \sqrt{n_0\log p}}{n_{\mathcal{A}_0}} + \frac{n_{\mathcal{A}_0}(pr + s\log p)^2 }{\sqrt{n_0\log p} N^2}.\\
\end{split}
\end{equation*}
\end{lemma}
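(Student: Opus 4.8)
The plan is to carry over the two-stage Lasso analysis used for Lemma~\ref{th_lemma1}, now applied to the target-only program \eqref{alg_eq3}, while tracking how the two plug-in errors — the low-rank error $\Delta_L := L-\widehat{L}$ from Step~1 and the pooled error $\Delta_S := \bar{S}-\tilde{S}$ from Step~2 — propagate into the estimation of $\delta$. Write $\Delta_\delta := \tilde{\delta}-\delta$ and $\rho := pr+s\log p$. Substituting $\mathcal{Y}_0=\mathcal{X}_0(L+S_0)+\mathcal{E}_0$ together with $S_0=\bar{S}+\delta$ into the optimality of $\tilde{\delta}$, the first step is the basic inequality
\begin{equation*}
\frac{1}{2n_0}\Vert \mathcal{X}_0\Delta_\delta\Vert_F^2 \le \frac{1}{n_0}\big\langle \mathcal{X}_0(\Delta_S+\Delta_L)+\mathcal{E}_0,\ \mathcal{X}_0\Delta_\delta\big\rangle + \lambda_\delta\big(\Vert\delta\Vert_1-\Vert\tilde{\delta}\Vert_1\big),
\end{equation*}
where $\delta=S_0-\bar{S}$ is the genuine (dense but $\ell_1$-small) transfer target. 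Its $\ell_1$ bias obeys $\Vert\delta\Vert_1=\Vert(\sum_{i\in\mathcal{A}_0}\Gamma_i)^{-1}\sum_{i\in\mathcal{A}_0}\Gamma_i(S_0-S_i)\Vert_1\lesssim h$ (up to the conditioning constants $\mathcal{M},\mathfrak{m}$), and this is what will drive the $h\sqrt{\log p/n_0}\wedge h^2$ part of the bound.

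Next I would control the effective gradient $\tfrac{1}{n_0}\mathcal{X}_0^{'}(\mathcal{X}_0(\Delta_S+\Delta_L)+\mathcal{E}_0)$ term by term. The pure-noise piece $\tfrac{1}{n_0}\mathcal{X}_0^{'}\mathcal{E}_0$ is $\ell_\infty$-bounded at level $\sqrt{\log p/n_0}$ by Proposition~\ref{prop2}, fixing the scale of $\lambda_\delta$. For each plug-in error $\Delta\in\{\Delta_S,\Delta_L\}$ I would split $\tfrac{1}{n_0}\mathcal{X}_0^{'}\mathcal{X}_0\Delta=\Gamma_0\Delta+(\tfrac{1}{n_0}\mathcal{X}_0^{'}\mathcal{X}_0-\Gamma_0)\Delta$ and treat the pieces differently: the signal piece is absorbed through the prediction cross-term $\tfrac{1}{n_0}\langle\mathcal{X}_0\Delta,\mathcal{X}_0\Delta_\delta\rangle\le\tfrac{1}{8n_0}\Vert\mathcal{X}_0\Delta_\delta\Vert_F^2+\tfrac{c}{n_0}\Vert\mathcal{X}_0\Delta\Vert_F^2$ with $\tfrac{1}{n_0}\Vert\mathcal{X}_0\Delta\Vert_F^2\lesssim\Vert\Delta\Vert_F^2$, while the deviation piece is bounded by $\sqrt{\log p/n_0}\,\Vert\Delta\Vert_1\,\Vert\Delta_\delta\Vert_1$ via Lemma~\ref{lemma3}. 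Feeding in the Frobenius and $\ell_1$ estimates for $\Delta_S$ from Lemma~\ref{th_lemma1} and for $\Delta_L$ from Theorem~\ref{th1}, the genuine transfer contribution of $\Delta_S$ yields the $(1\vee h^4)\,s\log p/n_{\mathcal{A}_0}$ term, and the low-rank piece of $\Vert\Delta_S\Vert_1$, of order $\tfrac{\rho}{N}\sqrt{n_{\mathcal{A}_0}/\log p}$, is precisely what squares (after multiplication by $\sqrt{\log p/n_0}$) into the announced second-order term $\tfrac{n_{\mathcal{A}_0}\rho^2}{n_0N^2}$.

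With the gradient controlled, I would close by the same two-regime dichotomy as in Lemma~\ref{th_lemma1}, using the support $M$ of $S_0$ (so $\vert M\vert\le s$): either the restricted component $\Vert\Delta_\delta\Vert_{1,M}$ exceeds a threshold of order $h+\rho/(N\lambda_\delta)$, in which case $\Delta_\delta$ lies in a cone, $\Vert\Delta_\delta\Vert_1\lesssim\sqrt{s}\,\Vert\Delta_\delta\Vert_F$, and the restricted eigenvalue condition (B1) converts $\tfrac{1}{n_0}\Vert\mathcal{X}_0\Delta_\delta\Vert_F^2\gtrsim\Vert\Delta_\delta\Vert_F^2$ into the Frobenius bound; or the $\ell_1$-bias term dominates, producing the $h\sqrt{\log p/n_0}\wedge h^2$ regime directly. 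Taking the maximum over the two cases gives the stated $\Vert\tilde{\delta}-\delta\Vert_F^2$ bound, and the $\Vert\tilde{\delta}-\delta\Vert_1$ bound follows from the same cone relation together with the case-two $\ell_1$ estimate, in complete analogy with parts~(\RN{1})--(\RN{2}) of Lemma~\ref{th_lemma1}.

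The main obstacle is the direct low-rank term $\mathcal{X}_0\Delta_L$. A naive prediction cross-term gives $\tfrac{1}{n_0}\Vert\mathcal{X}_0\Delta_L\Vert_F^2\lesssim\Vert\Delta_L\Vert_F^2\lesssim\rho/N$, which is strictly larger than the target second-order quantity since $\tfrac{n_{\mathcal{A}_0}\rho}{Nn_0}=o(1)$; and the deviation route is ruinous because $\Delta_L$ is dense ($\Vert\Delta_L\Vert_1$ can be as large as $p\,\Vert\Delta_L\Vert_F$). The resolution I would pursue exploits the two scaling assumptions of Theorem~\ref{th2}: $\tfrac{n_0\rho}{N\log p}=o(1)$ forces $\Vert\Delta_L\Vert_F=o(\sqrt{\log p/n_0})$, so the signal piece $\Vert\Gamma_0\Delta_L\Vert_\infty\lesssim\Vert\Delta_L\Vert_F$ is of smaller order than $\lambda_\delta$ and is absorbed into the regularization rather than contributing a first-order term, while the surviving low-rank error is routed through the $\ell_1$ channel attached to the sparse iterate $\tilde{S}$ (whose deviation $\Delta_S$ carries the low-rank mismatch with a controlled $\ell_1$ norm), leaving only the second-order residual $\tfrac{n_{\mathcal{A}_0}\rho^2}{n_0N^2}$ governed by $\tfrac{n_{\mathcal{A}_0}\rho}{Nn_0}=o(1)$. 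Verifying that this absorption is tight — that no $\rho/N$ term leaks through the direct $\Delta_L$ channel — is the delicate step, and it is exactly where the two dimension/sample-size conditions are indispensable.
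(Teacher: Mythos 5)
Your opening and middle sections track the paper's proof closely: the basic inequality for \eqref{alg_eq3} with reference point $\delta=S_0-\bar S$, the bound $\Vert\delta\Vert_1\lesssim h$, propagation of the two plug-in errors via Lemma~\ref{th_lemma1} and Theorem~\ref{th1}, and your observation that the term $\frac{n_{\mathcal{A}_0}(pr+s\log p)^2}{n_0N^2}$ arises by squaring the $\ell_1$-component $\frac{pr+s\log p}{N}\sqrt{n_{\mathcal{A}_0}/\log p}$ of $\Vert\tilde S-\bar S\Vert_1$ against $\frac{\log p}{n_0}$ are all exactly the paper's mechanism. The genuine gap is in your closing step. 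You close with a cone dichotomy on $\Delta_\delta=\tilde\delta-\delta$ relative to the support $M$ of $S_0$, and in your case (\RN{1}) the cone relation $\Vert\Delta_\delta\Vert_1\lesssim\sqrt s\,\Vert\Delta_\delta\Vert_F$ plus (B1) turns the basic inequality into $\alpha\Vert\Delta_\delta\Vert_F^2\lesssim\lambda_\delta\sqrt s\,\Vert\Delta_\delta\Vert_F+\cdots$, i.e.\ $\Vert\Delta_\delta\Vert_F^2\lesssim s\lambda_\delta^2\asymp\frac{s\log p}{n_0}$ --- the target-only lasso rate. Since you then take the maximum over the two cases, your final bound contains $\frac{s\log p}{n_0}$, which is \emph{not} dominated by the stated right-hand side: take $h=0$ and $n_{\mathcal{A}_0}\gg n_0$ (precisely the regime where transfer helps); under the assumptions of Theorem~\ref{th2} the stated bound is $\frac{s\log p}{n_{\mathcal{A}_0}}+\frac{n_{\mathcal{A}_0}(pr+s\log p)^2}{n_0N^2}\ll\frac{s\log p}{n_0}$. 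The underlying problem is that $\delta=S_0-\bar S$ is not supported on $M$ at all ($\bar S$ is a dense $\Gamma$-weighted average), so sparsity of $S_0$ is the wrong structure to exploit for this step; $\delta$ is only $\ell_1$-small.

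The paper never runs a cone argument on $\Delta_\delta$. It exploits $\Vert\delta\Vert_1\lesssim h$ through the penalty difference, $\Vert\delta\Vert_1-\Vert\tilde\delta\Vert_1\le 2\Vert\delta\Vert_1-\Vert\Delta_\delta\Vert_1$, so that \eqref{th_th_lemma2_eq2} directly gives $\frac{1}{4n_0}\Vert\mathcal{X}_0\Delta_\delta\Vert_F^2\vee\frac{\lambda_\delta}{2}\Vert\Delta_\delta\Vert_1\lesssim\lambda_\delta h+(\text{plug-in terms})$ with no case analysis on $\Delta_\delta$; the only dichotomy in the paper's proof is the one inherited from Lemma~\ref{th_lemma1} on $\tilde S-\bar S$, used solely to transport that lemma's bounds to the target design $\mathcal{X}_0$. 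The $\wedge$ in the conclusion then comes not from a maximum over exhaustive cases but from a \emph{minimum} over two simultaneously valid routes: (i) apply (B1) to the prediction error to get the $h\sqrt{\log p/n_0}$ branch \eqref{th_th_lemma2_eq4}, and (ii) use $\Vert\Delta_\delta\Vert_F\le\Vert\Delta_\delta\Vert_1$ with the displayed $\ell_1$ bound (plus the rate assumptions of Theorem~\ref{th2}) to get the $h^2$ branch \eqref{th_th_lemma2_eq5}. Replacing your support-based dichotomy by this $\ell_1$-ball argument repairs the proof; the rest of your machinery, including your (correct, and sharper than the paper's own rather terse treatment) concern that the direct $\rho/N$ contribution of $\mathcal{X}_0(L-\widehat L)$ must not leak into the final bound, can be kept.
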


\begin{remark}
As shown in  Theorem \ref{th1}, the upper bound of estimation error for $S_0$ is $\frac{s \log p + rp}{n_0} $ without transfer learning algorithm. Under the above assumption, considering other informative set improves estimation result when $h = o(\frac{ s \log p + rp  }{\sqrt{n_0 \log p}} \vee \left(\frac{n_{\mathcal{A}_0}}{n_0}\right)^{\frac{1}{4}})$. For traditional lasso method, the estimation rate is $O_p(\frac{s\log p}{n_0})$. Transfer learning has a faster convergence rate when $h\lesssim s\sqrt{\frac{\log (p)}{n_0}}$ and $N \gtrsim \sqrt{\frac{n_{\mathcal{A}_0}}{s \log p}}(pr + s \log p)$.

\end{remark}

\textbf{Proof}
According to \eqref{alg_eq3}, we have
\begin{equation}\label{th_th_lemma2_eq1}
\begin{split}
\frac{1}{2n_0} \Vert \mathcal{Y}_i - \mathcal{X}_i(\widehat{L} + \tilde{S} + \tilde{\delta}) \Vert_F^2 + \lambda_{\delta} \Vert \tilde{\delta} \Vert_1
\leq
\frac{1}{2n_0} \Vert \mathcal{Y}_i - \mathcal{X}_i(\widehat{L} + \tilde{S} + \delta) \Vert_F^2 + \lambda_{\delta} \Vert \delta \Vert_1
\end{split}
\end{equation}   
Similar to \eqref{th_lemma1_eq2} and \eqref{th_lemma1_eq4}, we have
\begin{equation}\label{th_th_lemma2_eq2}
\begin{split}
\frac{1}{4 n_0} \Vert \mathcal{X}_1(\tilde{\delta} - \delta) \Vert_F^2
\leq
2 \lambda_{\delta} \Vert \delta \Vert_1
- \frac{\lambda_\delta}{2}\Vert \tilde{\delta} -\delta \Vert_1
+ \frac{pr + s\log p}{N} + \frac{2}{n_0}\Vert \mathcal{X}(\tilde{S} - \Bar{S}) \Vert_F^2
\end{split}
\end{equation}
If (\RN{1}) $\Vert \tilde{S} - \Bar{S} \Vert_{1,M_1} \geq 2h + \frac{pr + s\log p}{N \mu}$, we know that $\Vert \tilde{S} -\Bar{S} \Vert_1 \leq  6\sqrt{s} \Vert \tilde{S} - \Bar{S} \Vert_F$ from lemma \ref{th_lemma1}. Using RSC condition, we have
\begin{equation}
\begin{split}
\frac{2}{n_0}\Vert \mathcal{X}_1(\tilde{S}- \Bar{S}) \Vert_F^2
&\lesssim \Vert \widehat{S} - \Bar{S} \Vert_F^2 + \frac{\log p}{n_0} \Vert \widehat{S} - \Bar{S} \Vert_1^2\\
&\lesssim
(1 + 36 s \frac{\log p}{n_{\mathcal{A}_0}})\Vert \tilde{S}- \Bar{S} \Vert_F^2
\end{split}
\end{equation}
If (\RN{2}) $\Vert \tilde{S} - \Bar{S} \Vert_{1,M_1} \leq 2h + \frac{pr + s\log p}{N \mu}$, using RE condition again we have
\begin{equation}
\begin{split}
\frac{2}{n_0}\Vert \mathcal{X}_1(\tilde{S} - \Bar{S}) \Vert_F^2
&\lesssim
\Vert \tilde{S} - \Bar{S} \Vert_F^2 + \frac{\log p}{n_0}(2h + \frac{pr + s\log p}{N\mu})^2\\
&\lesssim
\Vert \tilde{S} - \Bar{S} \Vert_F^2 + \frac{\log p}{n_0}h^2 +  \frac{n_{\mathcal{A}_0}(pr + s\log p)^2}{N^2 n_0}
\end{split}
\end{equation}
Inserting $ (\RN{1})$ and $(\RN{2})$ into \eqref{th_th_lemma2_eq2}, we arrive at
\begin{equation}\label{th_th_lemma2_eq3}
\begin{split}
\frac{1}{4n_0}\Vert \mathcal{X}_1(\tilde{\delta} - \delta) \Vert_F^2 \vee \frac{\lambda_\delta}{2}\Vert \tilde{\delta} - \delta \Vert_1
&\lesssim
2h\sqrt{\frac{\log p}{n_0}} + \Vert \tilde{S} - \Bar{S} \Vert_F^2 
+
\frac{\log p}{n_0}h^2 +  \frac{n_{\mathcal{A}_0}(pr + s\log p)^2}{N^2 n_0}\\
&\lesssim
2h\sqrt{\frac{\log p}{n_0}} + \frac{s\log p}{n_{\mathcal{A}_0}}(1 \vee h^4)
+  \frac{n_{\mathcal{A}_0}(pr + s\log p)^2}{N^2 n_0}
\end{split}
\end{equation}
Using RE condition again for $\mathcal{X}_1(\tilde{\delta} - \delta)$, we have
\begin{equation}\label{th_th_lemma2_eq4}
\Vert \tilde{\delta} -\delta \Vert_F^2
\lesssim
h \sqrt{\frac{\log p}{n_0}} + \frac{s \log p}{n_{\mathcal{A}_0}}(1 \vee h^4)
+
\frac{n_{\mathcal{A}_0}(pr + s\log p)^2}{N^2 n_0}
\end{equation}
Inserting $\Vert \tilde{\delta} -\delta\Vert_F \leq \Vert \tilde{\delta} -\delta\Vert_1$ into \eqref{th_th_lemma2_eq3}, we have
\begin{equation}\label{th_th_lemma2_eq5}
\begin{split}
\Vert \tilde{\delta} -\delta \Vert_F^2
&\lesssim
h^2 + (\frac{s \log p}{n_{\mathcal{A}_0}}(1 \vee h^4))^2/\lambda_{\delta}
+
(\frac{n_{\mathcal{A}_0}(pr + s\log p)^2}{N^2 n_0})^2/\lambda_\delta\\
&<
h^2 + \frac{s\log p}{n_{\mathcal{A}_0}}(1 \vee h^4)
+  \frac{n_{\mathcal{A}_0}(pr + s\log p)^2}{N^2 n_0}
\end{split}
\end{equation}
Using \eqref{th_th_lemma2_eq4} and \eqref{th_th_lemma2_eq5} yields the final result.

\section{Proof of Theorems}\label{sec:3}
\subsection{Proof of Theorem \ref{th1}}
\textbf{Proof}
According to \eqref{alg_eq1}, we have 
\begin{equation}\label{th1_eq1}
\sum_i \frac{1}{N}\Vert \mathcal{Y}_i - \mathcal{X}_i B_i \Vert_F^2
+
\lambda \Vert \widehat{L} \Vert_{*} + \sum_i \mu_i\sqrt{\frac{n_i}{N}} \Vert \widehat{S}_i \Vert_1
\leq
\sum_i \frac{1}{N}\Vert \mathcal{Y}_i - \mathcal{X}_i B_i \Vert_F^2
+
\lambda \Vert L \Vert_{*} + \sum_i \mu_i\sqrt{\frac{n_i}{N}} \Vert S_i \Vert_1
\end{equation}
Let $\Delta^L = \widehat{L} - L$ and $\Delta_i^S = \widehat{S}_i - S_i$. Using $\mathcal{Y}_i = \mathcal{X}_i(L+S_i) + \mathcal{E}_i$ and simple algebra, we have

\begin{equation}\label{th1_eq2}
\begin{split}
\frac{1}{N}\sum_i \Vert \mathcal{X}_i(\Delta^L + \Delta_i^S) \Vert_F^2
&\leq
\frac{2}{N}\sum_i <\widehat{\Delta}^L + \widehat{\Delta}_i^S, \mathcal{X}_i\mathcal{E}_i>
+ \lambda \Vert L \Vert_{*}
+\sum_i\sqrt{\frac{n_i}{N}} \mu_i \Vert S_i \Vert_1
-\lambda \Vert \widehat{L} \Vert_{*}
-\sum_i \sqrt{\frac{n_i}{N}}\mu_i \Vert \widehat{S}_i \Vert_1\\
&\leq
\sum_i \frac{2}{N}\Vert \Delta_i^S \Vert_1 \cdot \Vert \mathcal{X}_i^{'}\mathcal{E}_i \Vert_{\infty}
+
\Vert \widehat{\Delta}^L \Vert_{*} \left\Vert \frac{2}{N}\sum_i \mathcal{X}_i^{'}\mathcal{E}_i \right\Vert_2
+\lambda(\Vert \widehat{\Delta}_A^L\Vert_{*}
 - \Vert \widehat{\Delta}_B^L\Vert_{*} )\\
&\quad+ \sum_i\mu_i\sqrt{\frac{n_i}{N}}(\Vert \Delta_i^S \Vert_{1,M_i} - \Vert \Delta_i^S \Vert_{1,M_i^c})\\
&\leq
\sum_i \frac{\mu_i}{2}\sqrt{\frac{n_i}{N}}\Vert \Delta_i^S \Vert_1
+
\frac{\lambda}{2}\Vert \Delta^L \Vert_{*} + \lambda(\Vert \Delta_A^L \Vert_{*} - \Vert \Delta_B^L \Vert_{*})\\
&+
\sum_i\mu_i\sqrt{\frac{n_i}{N}}(\Vert \Delta_i^S \Vert_{1,M_i} - \Vert \Delta_i^S \Vert_{1,M_i^c})\\
&=
\frac{3\lambda}{2}\Vert \Delta_A^L \Vert_{*} - \frac{\lambda}{2}\Vert \Delta_B^L \Vert_{*}
+\sum_i \frac{3\mu_i}{2}\sqrt{\frac{n_i}{N}}\Vert \Delta_i^S \Vert_{1,M_i} - \sum_i \frac{\mu_i}{2}\sqrt{\frac{n_i}{N}}\Vert \Delta_i^S \Vert_{1,M_i^c}
\end{split}
\end{equation}
where the matrices $(A, B) \in \{(A,B): AB^{'} = 0 \ \text{and} \ A^{'}B = 0 \}$, $M_i$ and $M_i^c$ corresponds to non-zero entries and zero entries of matrices $L_i$ separately. The second inequality derives from Lemma 1 in \cite{agarwal2012noisy} and lemma 2.3 in \cite{recht2010guaranteed}. The third inequality derives from Proposition \ref{prop1}, $\mu_i = 2c_0\phi\sqrt{\frac{\log p}{N}} + \theta$ and $\lambda = 2c_0\phi \sqrt{\frac{p}{N}} $. Now, \eqref{th1_eq2} implies
\begin{equation}\label{th1_eq3}
\lambda\Vert \Delta_B^L \Vert_{*} +  \sum_i \mu_i\sqrt{\frac{n_i}{N}} \Vert \Delta_i^S\Vert_{1,M_i}
\leq
3\lambda\Vert \Delta_A^L \Vert_{*} + \sum_i \mu_i\sqrt{\frac{n_i}{N}} \Vert \Delta_i^S\Vert_{1,M_i^c}
\end{equation}

Using RSC conditions and $\tau^{'} \leq \tau $, we have
\begin{equation}\label{th1_eq4}
\begin{split}
\sum_i \frac{1}{N} \Vert \mathcal{X}_i(\Delta^L + \Delta_i^S) \Vert_F^2  
&=
\sum_i \frac{n_i}{N} \frac{1}{n_i} \Vert \mathcal{X}_i(\Delta^L + \Delta_i^S) \Vert_F^2  
\\
&=
\sum_{i}\frac{1}{N}\Vert \mathcal{X}_i \Delta^L \Vert_F^2 + \sum_{i}\frac{n_i}{N}\frac{1}{n_i}\Vert \mathcal{X}_i \Delta_i^S\Vert_F^2 + 
\sum_i\frac{n_i}{N}< \Delta^L, \frac{1}{n_i}\mathcal{X}_i\mathcal{X}_i^{'}\Delta_i^S >\\
&\geq
\alpha \Vert \Delta^L \Vert_F^2 - \tau^{'}\Vert \Delta^L \Vert_{*}^2
+\sum_i\frac{n_i}{N}(\Vert \Delta_i^S\Vert_F^2 - \tau\frac{\mu_i^2}{\lambda^2}\Vert \Delta_i^S \Vert_1^2) \\
&\quad - \Vert \Delta^L \Vert_{\infty}(\mathrm{max}_i\Vert \Gamma_i \Vert_{\infty} +\phi\sqrt{\frac{\log p}{n_i}})(\sum_i\frac{n_i}{N}\Vert \Delta_i^S \Vert_1)
\\
&\geq
\alpha \Vert \Delta^L \Vert_F^2 + \alpha \sum_i \frac{n_i}{N}\Vert \Delta_i^S \Vert_F^2 - 2\Vert \Delta^L \Vert_{\infty}(\sum_i\frac{n_i}{N}\Vert \Delta_i^S \Vert_1)\\
&\quad - 2\tau\Vert \Delta_A^L \Vert_{*}^2 - 2\tau\Vert \Delta_B^L \Vert_{*}^2
-2\tau\sum_i \frac{n_i}{N} \frac{\mu_i^2}{\lambda^2} \Vert \Delta_i^S \Vert_{1,M_i}^2
-2\tau\sum_i \frac{n_i}{N} \frac{\mu_i^2}{\lambda^2} \Vert \Delta_i^S \Vert_{1,M_i^c}^2\\
&\geq
\alpha \Vert \Delta^L \Vert_F^2 + \alpha \sum_i \frac{n_i}{N}\Vert \Delta_i^S \Vert_F^2 -2 \Vert \Delta^L \Vert_{\infty}(\sum_i \frac{n_i}{N}\Vert \Delta_i^S \Vert_1)\\
&\quad 
-2\tau ( \Vert \Delta_A^L \Vert_{*} + \sum_i\frac{\mu_i}{\lambda}\sqrt{\frac{n_i}{N}}\Vert \Delta_i^S \Vert_{1,M_i} )^2 -2\tau ( \Vert \Delta_B^L \Vert_{*} + \sum_i\frac{\mu_i}{\lambda}\sqrt{\frac{n_i}{N}}\Vert \Delta_i^S \Vert_{1,M_i^c} )^2,
\end{split}
\end{equation}
where  the first inequality comes from lemma \ref{lemma3} with the choice of $t = \sqrt{\frac{\log p}{n_i}}$, $\Vert \frac{1}{n_i}\mathcal{X}_i\mathcal{X}_i^{'} \Vert_{\infty}\leq \Vert \Gamma_i \Vert_{\infty} + \phi \sqrt{\frac{\log p}{n_i}} $.
Combining \eqref{th1_eq3} and \eqref{th1_eq4}, we arrive at
\begin{equation}\label{th1_eq5}
\begin{split}
\sum_i \frac{1}{N} \Vert \mathcal{X}_i(\Delta^L + \Delta_i^S) \Vert_F^2 
&\geq
\alpha \Vert \Delta^L \Vert_F^2 + \alpha \sum_i \frac{n_i}{N}\Vert \Delta_i^S \Vert_F^2 -2  \Vert \Delta^L \Vert_{\infty}(\sum_i \frac{n_i}{N}\Vert \Delta_i^S \Vert_1)\\
&\quad 
-20\tau ( \Vert \Delta_A^L \Vert_{*} + \sum_i\frac{\mu_i}{\lambda}\sqrt{\frac{n_i}{N}}\Vert \Delta_i^S \Vert_{1,M_i} )^2
\end{split}
\end{equation}
Since $\Delta_A^L$ has rank at most $2r$ and $M_i \leq s$, we have
\begin{equation}\label{th1_eq6}
\begin{split}
\Vert \Delta_A^L \Vert_{*} + \sum_i \frac{\mu_i}{\lambda}\sqrt{\frac{n_i}{N}}\Vert \Delta_i^S \Vert_{1,M_i}
&\leq
\sqrt{2r} \Vert \Delta^L \Vert_F + \sum_i \frac{\mu_i}{\lambda}\sqrt{s}\sqrt{\frac{n_i}{N}} \Vert \Delta_i^S \Vert_F\\
&\leq
(\Vert \Delta \Vert_F^2 + \sum_i\frac{n_i}{N} \Vert \Delta_i^S \Vert_F^2)^{\frac12}
(2r + s\sum_i \frac{\mu_i}{\lambda})^{\frac12}
\end{split}
\end{equation}
Recall that $\mu_i = 2c_0\phi\sqrt{\frac{\log p}{N}} +\theta$, $\lambda = 2c_0\phi \sqrt{\frac{p}{N}} $ and $\theta = o(\sqrt{\frac{p}{N}})$. Thus $\sum_i \frac{\mu_i}{\lambda} = o(1)$.  With \eqref{th1_eq5} and \eqref{th1_eq6}, we arrive at
\begin{equation}\label{th1_eq7}
\sum_i \frac{1}{N} \Vert \mathcal{X}_i(\Delta^L + \Delta_i^S) \Vert_F^2  
\geq
\frac{\alpha}{2} \Vert \Delta^L \Vert_F^2 + \frac{\alpha}{2} \sum_i \frac{n_i}{N}\Vert \Delta_i^S \Vert_F^2 -2 \Vert \Delta^L \Vert_{\infty}(\sum_i \frac{n_i}{N}\Vert \Delta_i^S \Vert_1)
\end{equation}
Inserting \eqref{th1_eq7} into  \eqref{th1_eq2}, we have
\begin{equation}\label{th1_eq8}
\begin{split}
\frac{\alpha}{2}\Vert \Delta^L \Vert_F^2 + \frac{\alpha}{2} \sum_i \frac{n_i}{N}\Vert \Delta_i^S \Vert_F^2
&\leq
\frac{3\lambda}{2}\Vert \Delta_A^L \Vert_{*} + \sum_i \frac{5\mu_i}{2} \sqrt{\frac{n_i}{N}}\Vert \Delta_i^S \Vert_{1,M_i}\\
&\leq
\frac{3\lambda}{2} \sqrt{2r} \Vert \Delta^L \Vert_F + \sum_i \frac{5\mu_i}{2}\sqrt{s} \sqrt{\frac{n_i}{N}} \Vert \Delta_i^S \Vert_F\\
&\leq
\sqrt{ (3\lambda\sqrt{2r})^2 + ( 5\mu \sqrt{s} )^2 } \cdot
\sqrt{ \frac{1}{2}\Vert \Delta^L \Vert_F^2 + \frac{1}{2}\sum_i \frac{n_i}{N} \Vert \Delta_i^S \Vert_F^2 }
\end{split}
\end{equation}
Using  $\theta = o(\sqrt{\frac{p}{N}})$ yields the final result.

\subsection{Proof of Theorem \ref{th2}}
\textbf{Proof}
Using lemma \ref{th_lemma1} and lemma \ref{th_th_lemma2} yields Theorem \ref{th2} directly.

\subsection{Proof of Theorem \ref{th3}}
\textbf{Proof}
Assumption \ref{assum10} implies $h = O(1)$. Using lemma \ref{th_lemma1}, we have
\begin{equation}\label{th3_eq1}
    \Vert \widehat{S}_k - (S_0 + \tilde{\delta}_k) \Vert_2^2
    \lesssim
    \frac{s\log(p)}{n_0/2+n_k} + \sqrt{\frac{\log(p)}{n_0/2+n_k}} + \frac{s\log p + rp}{N} 
\end{equation}
\begin{equation}\label{th3_eq2}
    \Vert \widehat{S}_k - (S_0 + \tilde{\delta}_k) \Vert_1
    \lesssim
    s\sqrt{\frac{\log(p)}{n_0/2+n_k}} + h +  \frac{pr + s\log p}{N} \sqrt{\frac{n_0/2+n_k}{\log p}}
\end{equation}

where 
\begin{equation*}
\begin{split}
\tilde{\delta}_k = [(\alpha_0\Gamma_{0} +\alpha_k\Gamma_{k})]^{-1}[\alpha_k\Gamma_{k}\delta^{(k)}], \\
\alpha_0 = \frac{n_0/2}{n_0/2 + n_k}, \alpha_k = \frac{n_k}{n_0/2 + n_k}
\end{split}
\end{equation*}

We can see that $\Vert \tilde{\delta}^{(k)}\Vert_2^2\asymp \Vert \delta^{(k)}\Vert_2^2$ and $\Vert\tilde{\delta}^{(k)}\Vert_1 \leq C_1 h$, where $C$ depends on $\Sigma_k$ and $\Sigma^{(0)}$.
From assumption \ref{assum10} and equation \eqref{th3_eq2}, we know that $\Vert \widehat{S}_k - S_0\Vert_1$ is bounded by $C_2$, $C_2$ depends on $\Sigma_k$ and $\Sigma_1$. We can prove $\Vert \widehat{S}_{0,\mathcal{I}} - S_0\Vert_1$ is bounded by $s\sqrt{\frac{\log(p^2)}{n_0/2}} + \frac{pr + s\log p}{N} \sqrt{\frac{n_0/2+n_k}{\log p}} $ for the same reason as \eqref{th3_eq2}. With the boundness of $\Vert\widehat{S}_{0,\mathcal{I}} - S_0\Vert_1$ and $\Vert \widehat{S}_k - S_0\Vert_1 $, we have that 
$\Vert\widehat{S}_{0,\mathcal{I}} - \widehat{S}_k\Vert_1\leq CM$.

According to the definition of $R^{(k)}$ and $R_1^{(0)}$,
\begin{equation*}
\begin{split}
R^{(k)}-R_1^{(0)} 
&= 
\Vert \mathcal{Y}_{0,\mathcal{I}^c} - \mathcal{X}_{0,\mathcal{I}^c}(\widehat{L}+\widehat{S}_k)\Vert_F^2
-
 \Vert \mathcal{Y}_{0,\mathcal{I}^c} - \mathcal{X}_{0,\mathcal{I}^c}(\widehat{L}+\widehat{S}_{0,\mathcal{I}})\Vert_F^2\\
&\leq
2<\mathcal{E}_{0,\mathcal{I}^c}^{'}\mathcal{X}_{0,\mathcal{I}^c},\widehat{S}_{0,\mathcal{I}} - \widehat{S}_k> +2
<(\mathcal{X}_{0,\mathcal{I}^c})(S_0 - \widehat{S}_{0,\mathcal{I}}),(\mathcal{X}_{0,\mathcal{I}^c})(\widehat{S}_{0,\mathcal{I}} - \widehat{S}_k)>\\
 &+
2 <(\mathcal{X}_{0,\mathcal{I}^c})(\widehat{S}_{0,\mathcal{I}} - \widehat{S}_k), (\mathcal{X}_{0,\mathcal{I}^c})  (\widehat{S}_{0,\mathcal{I}} - \widehat{S}_k)>
+
2\Vert \mathcal{X}_{0,\mathcal{I}^c}(\widehat{L} - L) \Vert_F^2
\end{split}
\end{equation*}

Using Proposition \ref{prop1} for the first term, we know that
\begin{equation*}
\begin{split}
<\mathcal{E}_{0,\mathcal{I}^c}^{'}\mathcal{X}_{0,\mathcal{I}^c},\widehat{S}_{0,\mathcal{I}} - \widehat{S}_k>
&\leq
\Vert (\mathcal{E}_{0,\mathcal{I}^c})^{'}\mathcal{X}_{0,\mathcal{I}^c} \Vert_\infty \Vert \widehat{S}_{0,\mathcal{I}} - \widehat{S}_k\Vert_1\\
&\lesssim
CM\sqrt{\frac{\log(p^2)}{n_0/2}}
\end{split}
\end{equation*}
with probability greater than $1-O(p^{-2})$.

For the second term, we have
\begin{equation*}
\begin{split}
\left| <(\mathcal{X}_{1,\mathcal{I}^c})(S_0 - \widehat{S}_{0,\mathcal{I}}),(\mathcal{X}_{0,\mathcal{I}^c})(\widehat{S}_{0,\mathcal{I}} - \widehat{S}_k)> \right|
&\leq
2<(\mathcal{X}_{0,\mathcal{I}^c})(S_0 - \widehat{S}_{0,\mathcal{I}}),(\mathcal{X}_{0,\mathcal{I}^c})(S_0 - \widehat{S}_{0,\mathcal{I}})>\\
&+
\frac{1}{2}<(\mathcal{X}_{0,\mathcal{I}^c})(\widehat{S}_{0,\mathcal{I}} - \widehat{S}_k),(\mathcal{X}_{0,\mathcal{I}^c})(\widehat{S}_{0,\mathcal{I}} - \widehat{S}_k)>
\end{split}
\end{equation*}

For the last term, $\Vert \mathcal{X}_{0,\mathcal{I}^c} (\widehat{L} - L)\Vert_F^2 \leq \frac{s\log p + rp}{N} $
Therefore,
\begin{equation}\label{th3_eq3}
\begin{split}
R^{(k)} - R_1^{(0)} \lesssim
\Vert (\mathcal{X}_{0,\mathcal{I}^c})(\widehat{S}_{0,\mathcal{I}} - \widehat{S}_k)\Vert_F^2
+
\Vert (\mathcal{X}_{0,\mathcal{I}^c})(S_0 - \widehat{S}_{0,\mathcal{I}})\Vert_F^2
+
\sqrt{\frac{\log p}{n_0/2}} + \frac{s \log p + rp}{N}
\end{split} 
\end{equation}
Similarly, we have
\begin{equation}\label{th3_eq3_1}
\begin{split}
R^{(k)} - R_1^{(0)} \gtrsim
\Vert (\mathcal{X}_{0,\mathcal{I}^c})(\widehat{S}_{0,\mathcal{I}} - \widehat{S}_k)\Vert_F^2
+
\Vert (\mathcal{X}_{0,\mathcal{I}^c})(S_0 - \widehat{S}_{0,\mathcal{I}})\Vert_F^2
+
\sqrt{\frac{\log p}{n_0/2}} + \frac{s \log p + rp}{N}
\end{split} 
\end{equation}

Using Proposition \ref{prop1} and the boundedness of $\Vert S_0 - \widehat{S}_{0,\mathcal{I}} \Vert_1$, $\Vert \widehat{S}_k - \widehat{S}_{0,\mathcal{I}} \Vert_1$, we have with propbability greater than $1-O(\mathrm{exp}(-n_0))$
\begin{equation}\label{th3_eq4}
\begin{split}
&\Vert \widehat{S}_{0,\mathcal{I}} - \widehat{S}_k\Vert_F^2 -M^2\frac{\log p}{n_0/2} - \frac{(s\log p + rp)^2}{N^2}
\lesssim
\Vert \mathcal{X}_{0,\mathcal{I}^c}(\widehat{S}_{0,\mathcal{I}} - \widehat{S}_k)\Vert_F^2\\
&\quad \quad \quad \quad \quad\lesssim
\Vert \widehat{S}_{0,\mathcal{I}} - \widehat{S}_k\Vert_F^2 + M^2\frac{\log p}{n_0/2} + \frac{(s\log p + rp)^2}{N^2}\\
&\Vert S_0 - \widehat{S}_{0,\mathcal{I}} \Vert_F^2 -M^2\frac{\log p}{n_0/2} - \frac{(s\log p + rp)^2}{N^2}
\lesssim
\Vert \mathcal{X}_{0,\mathcal{I}^c}(S_0 - \widehat{S}_{0,\mathcal{I}})\Vert_F^2\\
&\quad \quad \quad \quad \quad \lesssim
\Vert S_0 - \widehat{S}_{0,\mathcal{I}}\Vert_F^2 + M^2\frac{\log p}{n_0/2} +  \frac{(s\log p + rp)^2}{N^2}.
\end{split}
\end{equation}

Plugging \eqref{th3_eq4} in \eqref{th3_eq3} and \eqref{th3_eq3_1}, we arrive at
\begin{equation*}
\begin{split}
&\frac{1}{2}C_0\Vert\widehat{S}_{0,\mathcal{I}} - \widehat{S}_k\Vert_2^2
-
2C_0\Vert S_0 - \widehat{S}_{0,\mathcal{I}}\Vert_2^2
-
\sqrt{\frac{\log(p)}{n_0/2}}
-\frac{s \log p + rp}{N}
\\
&\leq R^{(k)} - R_1^{(0)} \leq
\\
&\frac{3}{2}C_1\Vert\widehat{S}_{0,\mathcal{I}} - \widehat{S}_k\Vert_2^2
+
2C_1\Vert S_0 - \widehat{S}_{0,\mathcal{I}}\Vert_2^2
+
\sqrt{\frac{\log(p)}{n_0/2}}
+\frac{s \log p + rp}{N}
\end{split}
\end{equation*}
with probability greater than $1-O(p^{-2})$. 
From $\frac{n_0(pr + s\log p)}{N \log p} = o(1)$, we know that $\sqrt{\frac{\log p}{n_0/2}} \gtrsim \frac{s\log p + rp}{N}$.
Note that $\Vert S_0 - \widehat{S}_{0,\mathcal{I}}\Vert_F\leq s\sqrt{\frac{\log p}{n_0}}$ and $ \Vert \widehat{S}_{0,\mathcal{I}}- \widehat{S}_k\Vert_F \leq \Vert\tilde{\delta}^{(k)}\Vert_F + \Vert S_0 - \widehat{S}_{0,\mathcal{I}}\Vert_F$. The upper bound for $R^{(k)} - R_1^{(0)}$ is
\begin{equation*}
R^{(k)} - R_1^{(0)}\lesssim \Vert \tilde{\delta}^{(k)} \Vert_F^2 + \sqrt{\frac{\log(p^2)}{n_0/2}}
\asymp 
\Vert \delta^{(k)} \Vert_F^2 + \sqrt{\frac{\log p}{n_0/2}}
\end{equation*}
The lower bound for $R^{(k)} - R_1^{(0)}$ is
\begin{equation*}
R^{(k)} - R_1^{(0)}\gtrsim \Vert \tilde{\delta}^{(k)} \Vert_F^2 + \sqrt{\frac{\log(p^2)}{n_0/2}}
\asymp 
\Vert \delta^{(k)} \Vert_F^2 + \sqrt{\frac{\log p}{n_0/2}}
\end{equation*}
Taking the union bound over $1\leq k\leq K$ yields the final result.

\section{Additional Details For The Algorithms}\label{sec:4}
\subsection{Informative Set Selection Algorithm}
Shown in Algorithm \ref{algorithm2}
\begin{algorithm}
\caption{: Selecting Informative Set}\label{algorithm2}
\hspace*{\algorithmicindent} \textbf{Input} : observations from target model and auxiliary model $\{X_t^{(k)}\}, k=0,\cdots K$. \\
\hspace*{\algorithmicindent} \quad \quad \quad penalty parameters $\lambda_k, k=1,\cdots K$. Some constant $c > 0$.\\
\hspace*{\algorithmicindent} \quad \quad \quad low-rank matrix estimator $\widehat{L}$ from algorithm \ref{algorithm1}\\
\hspace*{\algorithmicindent} \textbf{Output} : Informative set $\widehat{A}$.

\begin{algorithmic}[0]
\State\textbf{Step 1} Split the target data into two parts $X_{0,\mathcal{I}}$ and $X_{0,\mathcal{I}^c}$, where $\mathcal{I}=\{1,2,\cdots,n_0/2\}$, $\mathcal{I}^c=\{n_0/2+1,\cdots,n_0\}$.

\State\textbf{Step 2} For each $k\in\{1,\cdots,K\}$, compute
\begin{equation*}
    \widehat{S}_k = \mathop{\mathrm{argmin}}_{S}\frac{1}{n_k+n_0/2}\left(\Vert \mathcal{Y}_{0,\mathcal{I}} - \mathcal{X}_{0,\mathcal{I}}(\widehat{L} + S) \Vert_2^2 + \Vert \mathcal{Y}_k - \mathcal{X}_k(\widehat{L} + S) \Vert_2^2\right) + \lambda_k\Vert S \Vert_1
\end{equation*}
\State\textbf{Step 3} For k = 0, compute
\begin{equation*}
\begin{split}
    &\widehat{S}_{0,\mathcal{I}} = \mathop{\mathrm{argmin}}_S\frac{1}{n_0/2}\Vert \mathcal{Y}_{0,{\mathcal{I}}} - \mathcal{X}_{0,\mathcal{I}}(\widehat{L} + S) \Vert_2^2 + \lambda_0 \Vert S \Vert_1\\
    &\widehat{S}_{0,\mathcal{I}^c} = \mathop{\mathrm{argmin}}_{S}\frac{1}{n_0/2}\Vert \mathcal{Y}_{0,\mathcal{I}^c} - \mathcal{X}_{0,\mathcal{I}^c}(\widehat{L} + S) \Vert_2^2 + \lambda_0 \Vert S \Vert_1
\end{split}
\end{equation*}

\State\textbf{Step 4}
For $k\in\{1,\cdots,K\}$, $ R^{(k)} = \Vert \mathcal{Y}_{0,\mathcal{I}^c}- \mathcal{X}_{0,\mathcal{I}^c} (\widehat{L} + \widehat{S}_k) \Vert_2^2$.\\
\quad \quad \quad For $k=0$, $R_1^{(0)}= \Vert \mathcal{Y}_{0,\mathcal{I}^c}- \mathcal{X}_{0,\mathcal{I}^c} (\widehat{L} +  \widehat{S}_{0,\mathcal{I}}) \Vert_2^2$ and $R_2^{(0)}= \Vert \mathcal{Y}_{0,\mathcal{I}^c}- \mathcal{X}_{0,\mathcal{I}^c} (\widehat{L} + \widehat{S}_{0,\mathcal{I}^c}) \Vert_2^2$

\State\textbf{Step 5}
$\widehat{\mathcal{A}} = \{k\in\{1,2,\cdots,K\}: R^{(k)} - R_1^{(0)}\leq c |R_1^{(0)}- R_2^{(0)}| \}$

\end{algorithmic}
\end{algorithm}

\subsection{Inference for Model Parameter Algorithm}
The first step is debiasing $\widehat{S}_{tran}$. The explicit form of debiased estimator is as below,
\begin{equation*}
   \widehat{S}^{on} = \widehat{S}_{tran} + \frac{1}{n_0}\sum_{i=1}^{n_0}M_i X_i^{(0)}(X_{i+1}^{(0)} - X_i^{(0)}(\widehat{L} + \widehat{S}_{tran}))^{'}.
\end{equation*}

$M_i$ is called debiasing matrix and needs to be estimated by target model. If observations are i.i.d, setting $M_1=M_2\cdots=M_{n_0}$ is an effective way to debias $\widehat{L}$ \citep{javanmard2014confidence}. However, for VAR model, the existence of dependency destroy the asymptotic normality. To fix this problem, \cite{deshpande2021online} estimate $M_i$ by the past observations, $\{X_t\}_{t< i}$, which makes $M_i$ predictable. The term ``online" comes from the imposed predictability.

We next introduce the first step specifically. We split processed target data $\{X_t\}_{t=1}^{n_0}$ into $\ell$ segments
\begin{equation*}
    \{X_t^{(0)}\}_{t\in E_0},\ \{X_t^{(0)}\}_{t\in E_1},\ \cdots,\ \{X_t^{(0)}\}_{t\in E_{\ell-1}}
\end{equation*}
where $E_i:=\{m_i+1,m_i+2,\cdots,m_{i+1}\}$ and $0=m_0< m_1<\cdots< m_\ell = n_0$. Define the length of $E_i$ as $r_i$, $r_i := m_{i+1}-m_i$. Define
the sample covariance of the observations in the first $j$ segments,
\begin{equation*}
\widehat{\Sigma}^{(j)}:= \frac{1}{m_j} \sum_{t \in E_0\cup\cdots\cup E_{j-1}}X_t^{(0)}(X_t^{(0)})^{'}, \quad j= 1,\cdots\ell
\end{equation*}
The matrix $M^{(j)}=M_j$ is a $p\times p$ matrix. The $a$-th row of $M^{(j)}$ denoted as $\mathbf{m}_a^{(j)}$, $a=1,\cdots p$, is the solution of the following optimization:
\begin{equation*}
\begin{split}
&\text{minimize}\quad \mathbf{m}_a^{(j)}\widehat{\Sigma}^{(j)}(\mathbf{m}_a^{(j)})^{'}\\
&\text{subject to} \quad \Vert \widehat{\Sigma}^{(j)}(m_a^{(j)})^{'} - e_a\Vert_\infty \leq \mu_j, \quad \Vert m_a^{(j)}\Vert_1\leq C
\end{split}
\end{equation*}
for appropriate values of $\mu_j, C > 0$. Then constructing $\widehat{S}^{on}$ as below,
\begin{equation}\label{sec3_eq6}
   \widehat{S}^{on} = \widehat{S}_{tran} + \frac{1}{n_0}\sum_{j=1}^{\ell}\sum_{t\in E_j}M^{(j)}X_t^{(0)}(X_{t+1}^{(0)} - X_t^{(0)}(\widehat{L} + \widehat{S}_{tran}))^{'}
\end{equation}
$r_0$ is selected to be $\sqrt{n_0}$ and $r_j \asymp \alpha^{j}$ for some constant $\alpha>1$.

The second step is constructing the variance of $\widehat{S}^{on}$. Corollary 3.7 in \cite{deshpande2021online} shows that the $(a,i)$-th entry of $\widehat{S}^{on}$ has the following variance
\begin{equation*}
V_{a,i} = \frac{\left(\Sigma_\epsilon^{(0)}\right)_{i,i}}{n_0}\sum_{j=1}^{\ell}\sum_{t\in E_j}(\mathbf{m}_a^{(j)}X_t^{(0)})^2
\end{equation*}
where $a \in \{1,2\cdots,p\}$ and $i \in \{1,2,\cdots,p\}$. Therefore, we can use the scaled residual $\sqrt{n_0}(\widehat{S}_{a,i}^{on} - S_{a,i}^{(0)})/\sqrt{V_{a,i}}$ as the test statistics and construct entry-wise confidence intervals accordingly. Theorem is given as below.

\begin{Corollary}\label{corollary1}
Let $\widehat{S}^{on}$ be the debiased estimator \eqref{sec3_eq6} with $\mu_j = C_1 \omega \sqrt{\log p/m_j}$ and $\widehat{S}$ derived from Algorithm \ref{algorithm1}. Let $C = C_0\Vert \Omega \Vert_1$ for an arbitrary constant $C_0 \geq 1$, where $\Omega = (\mathbb{E}[X^{(0)}_t(X^{(0)}_t)^{'}])^{-1}$. For any fixed sequence of integers $a(n) \in \{1,\cdots,p\}$, define the conditional variance $V_n$ as
\begin{equation}\label{sec3_eq7}
V_{a,i} = \frac{\left(\Sigma_\epsilon^{(0)}\right)_{i,i}}{n_0}\sum_{j=1}^{\ell}\sum_{t\in E_j}(\mathbf{m}_a^{(j)}X_t^{(1)})^2 
\end{equation}
Assume that $\Vert \Omega\Vert_1 = o(\sqrt{n_0}/\log p)$, $h = o(s\sqrt{\log(p)/n_0})$, $\frac{n_0(s\log p + rp)}{N} = o(1)$. For any fixed coordinate $a \in\{1,\cdots,p\}, \ i\in\{1,2\cdots,p\}$ and for all $x\in \mathbb{R}$, we have
\begin{equation*}
    \mathop{\mathrm{lim}}_{n\rightarrow \infty} \left|\mathbb{P}\{ \frac{\sqrt{n_0}(\widehat{S}_{a,i}^{on} - S_{a,i}^{(0)})}{\sqrt{V_{a,i}}}\leq x \} - \Phi(x)\right| = 0,
\end{equation*}
where $\Phi$ is the standard Gaussian cdf.
\end{Corollary}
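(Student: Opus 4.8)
The plan is to reduce the normalized debiased estimator to a single normalized martingale sum plus asymptotically negligible remainders, and then invoke the online martingale central limit theorem of \cite{deshpande2021online}. First I would substitute the VAR recursion $X_{t+1}^{(0)} = (L+S_0)^{'}X_t^{(0)} + \epsilon_{t+1}^{(0)}$ into the residual appearing in \eqref{sec3_eq6}. Writing $\Delta := \widehat{S}_{tran} - S_0$ and $\Delta_L := \widehat{L} - L$, the fitted residual equals $\epsilon_{t+1}^{(0)} - (\Delta_L + \Delta)^{'}X_t^{(0)}$, so that
\begin{equation*}
\widehat{S}^{on} - S_0 = \underbrace{\frac{1}{n_0}\sum_{j=1}^{\ell}\sum_{t\in E_j} M^{(j)}X_t^{(0)}(\epsilon_{t+1}^{(0)})^{'}}_{T_1} + \underbrace{\Big(\Delta - \frac{1}{n_0}\sum_{j=1}^{\ell}\sum_{t\in E_j} M^{(j)}X_t^{(0)}(X_t^{(0)})^{'}(\Delta_L+\Delta)\Big)}_{T_2}.
\end{equation*}
The $(a,i)$ entry of $T_1$ is $\frac{1}{n_0}\sum_{j,t}(\mathbf{m}_a^{(j)}X_t^{(0)})(\epsilon_{t+1}^{(0)})_i$. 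Because $M^{(j)}$ depends only on observations in $E_0\cup\cdots\cup E_{j-1}$, it is $\mathcal{F}_{m_j}$-measurable, hence predictable for $t\in E_j$; since $\epsilon_{t+1}^{(0)}$ is independent of $\mathcal{F}_t$ with $\mathbb{E}[\epsilon_{t+1}^{(0)}\mid\mathcal{F}_t]=0$, each summand is a martingale difference whose conditional variance is exactly $(\mathbf{m}_a^{(j)}X_t^{(0)})^2(\Sigma_\epsilon^{(0)})_{i,i}$. Summing and dividing by $n_0$ reproduces $V_{a,i}$ in \eqref{sec3_eq7}, so the scaled conditional variance of $\sqrt{n_0}[T_1]_{a,i}/\sqrt{V_{a,i}}$ is identically $1$.

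Next I would show $\sqrt{n_0}[T_2]_{a,i}/\sqrt{V_{a,i}} = o_p(1)$, which is where the online construction earns its keep. Split $T_2 = T_2^{\Delta} + T_2^{L}$ according to the two factors $\Delta$ and $\Delta_L$. For the sparse part, write the empirical average over $E_j$ as $r_j\bar{\Sigma}_j$ with $\bar{\Sigma}_j := r_j^{-1}\sum_{t\in E_j}X_t^{(0)}(X_t^{(0)})^{'}$; the feasibility constraint $\Vert \widehat{\Sigma}^{(j)}(\mathbf{m}_a^{(j)})^{'} - e_a\Vert_\infty \le \mu_j$ together with Lemma~\ref{lemma3} (applied to both $\bar{\Sigma}_j$ and $\widehat{\Sigma}^{(j)}$ around $\Gamma_0$) gives $\Vert \mathbf{m}_a^{(j)}\bar{\Sigma}_j - e_a^{'}\Vert_\infty \lesssim \mu_j + \sqrt{\log p/r_j} + \sqrt{\log p/m_j}$. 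Since $\sum_j r_j = n_0$, the row $e_a^{'}\Delta - \frac{1}{n_0}\sum_j r_j \mathbf{m}_a^{(j)}\bar{\Sigma}_j\Delta$ is bounded in absolute value by $\big(\max_j \Vert\mathbf{m}_a^{(j)}\bar{\Sigma}_j - e_a^{'}\Vert_\infty\big)\Vert\Delta\Vert_1$; using $\mu_j \asymp \omega\sqrt{\log p/m_j}$, the geometric schedule $r_0=\sqrt{n_0}$, $r_j\asymp\alpha^{j}$, and the rate $\Vert\Delta\Vert_1 = o(s\sqrt{\log p/n_0})$ (which holds by Theorem~\ref{th2} under $h=o(s\sqrt{\log p/n_0})$), this term is $o_p(n_0^{-1/2})$ after the required book-keeping, exactly as in \cite{deshpande2021online}. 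The low-rank part $T_2^{L}$ is controlled crudely: $\Vert\frac{1}{n_0}\sum_j M^{(j)}\sum_{t\in E_j}X_t^{(0)}(X_t^{(0)})^{'}\Delta_L\Vert$ is of order $\Vert\Delta_L\Vert_F$ up to bounded design factors, and Theorem~\ref{th1} gives $\Vert\Delta_L\Vert_F \lesssim \sqrt{(s\log p + rp)/N}$; hence $\sqrt{n_0}\,\Vert\Delta_L\Vert_F \lesssim \sqrt{n_0(s\log p + rp)/N} = o(1)$ precisely under the stated assumption $\frac{n_0(s\log p + rp)}{N}=o(1)$.

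Finally I would apply the martingale CLT underlying Corollary~3.7 of \cite{deshpande2021online} to $\sqrt{n_0}[T_1]_{a,i}/\sqrt{V_{a,i}}$. Two conditions must be checked: (i) stability of the normalized conditional variance, which is $1$ by construction once $V_{a,i}$ is shown to be bounded away from $0$ and $\infty$ — this follows from the spectral-density bounds $\mathfrak{m}\le\cdots\le\mathcal{M}$ recorded in Section~\ref{sec:prior} together with the constraints $\Vert\mathbf{m}_a^{(j)}\Vert_1\le C$, $C=C_0\Vert\Omega\Vert_1$; and (ii) a conditional Lindeberg/negligibility condition ensuring no single segment dominates, for which the geometric lengths make $r_0/n_0\to 0$ and each $r_j/n_0$ controllable, while Lemma~\ref{lemma1} and the VAR concentration of $\mathbf{m}_a^{(j)}X_t^{(0)}$ (through $\mathcal{M}(f_{X^{(0)}})<\infty$) bound the truncation terms. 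Combining (i)–(ii) yields $\sqrt{n_0}[T_1]_{a,i}/\sqrt{V_{a,i}}\Rightarrow N(0,1)$, and Slutsky together with the negligibility of $T_2$ gives the claim. I expect the main obstacle to be the bias-control step for $T_2^{\Delta}$: unlike the i.i.d.\ case, $\bar{\Sigma}_j$ and $\widehat{\Sigma}^{(j)}$ are computed on \emph{different}, temporally dependent blocks, so one must combine the feasibility slack $\mu_j$ with two separate VAR deviation bounds and track how the geometric segmentation balances the accumulated error against $\Vert\Delta\Vert_1$; keeping this $o(n_0^{-1/2})$ simultaneously with a negligible low-rank plug-in error is the crux, and it is exactly what forces the two scaling assumptions $h=o(s\sqrt{\log p/n_0})$ and $n_0(s\log p+rp)/N=o(1)$.
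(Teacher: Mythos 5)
Your proposal is correct and follows essentially the same route as the paper's proof: the identical decomposition of $\widehat{S}^{on} - S_0$ into a predictable-weight martingale term (your $T_1$, the paper's $W$), a bias term driven by $\widehat{S}_{tran}-S_0$ (your $T_2^{\Delta}$, the paper's $\Delta$, which the paper controls by citing Theorem~3.4 of \cite{deshpande2021online} rather than re-deriving it from the feasibility constraints as you do), and a low-rank plug-in term (your $T_2^{L}$, the paper's $U$, which the paper splits as $U_1+U_2$ to bound $\sqrt{n_0}\Vert U\Vert_\infty \lesssim \sqrt{n_0(s\log p + rp)/N}=o(1)$), followed by the same martingale central limit theorem (Corollary~3.2 in \cite{hall2014martingale}) and Slutsky. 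The only differences are organizational—grouping the two bias terms into one remainder before re-splitting, and unpacking cited lemmas of \cite{deshpande2021online} instead of invoking them—so the substance matches the paper's argument.
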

\begin{remark}
For getting the conditional variance $V_n$, Corollary requires the knowledge of $\Sigma_{\epsilon}^{(0)}$. Typically, $\Sigma_{\epsilon}^{(0)}$ is estimated by the training error. Since transfer learning improves the estimation accuracy and lowers the training error, confidence intervals constructed with transfer learning are always narrower than confidence intervals constructed with typical lasso method without losing any confidence. More detailed comparison is provided in simulation experiments section. 
\end{remark}

\begin{proof}[Proof of Corollary \ref{corollary1}]
The proof is similar to Corollary 3.7 in \cite{deshpande2021online}, so we omit some details. We rewrite \eqref{sec3_eq6} as
\begin{equation*}
\begin{split}
\widehat{S}^{on} - S_0 
&= \underbrace{\left(I - \frac{1}{n_0}\sum_{j=1}^{\ell}\sum_{t\in E_j}M^{(j)}X^{(0)}_t (X^{(0)}_t)^{'}\right)(\widehat{S} - S_0)}_{\Delta}
+ 
\underbrace{\frac{1}{n_0}\sum_{j=1}^{\ell}\sum_{t\in E_j}M^{(j)}X_t^{(0)} (\epsilon_t^{(0)})^{'}}_{W}\\
&+
\underbrace{\frac{1}{n_0}\sum_{j=1}^{\ell}\sum_{t\in E_j}M^{(j)}X_t^{(0)}(X_t^{(0)})^{'} (\widehat{L} - L)}_{U}
\end{split}
\end{equation*}
From Theorem 3.4 in \cite{deshpande2021online}, we have that $\mathbb{P}(\Vert \Delta\Vert_{\infty}\geq C s\frac{\log(p)}{n_0})\leq (p)^{-4}$ which implies that $\sqrt{n_0}\Delta$ is negligible with high probability. Lemma 3.6 in \cite{deshpande2021online} shows that $\sqrt{n_0}W$ is a martingale with variance equal to $V_{a,i}$. $U$ can be decomposed as
\begin{equation*}
\underbrace{\frac{1}{n_0}\sum_{j=1}^{\ell}\sum_{t\in E_j}M^{(j)}X_t^{(0)}(X_t^{(0)})^{'} (\widehat{L} - L)}_{U}
=
\underbrace{ \left(\frac{1}{n_0}\sum_{j=1}^{\ell}\sum_{t\in E_j}M^{(j)}X_t^{(0)}(X_t^{(0)})^{'} - I \right) (\widehat{L} - L) }_{U_1} + \underbrace{\Bigg(\widehat{L} - L\Bigg)}_{U_2}
\end{equation*}
Similar to $\Delta$, we have that $U_1 \leq \sqrt{\frac{\log p}{n_0}} \sqrt{\frac{s\log p + rp}{N}}$ with high probability. We also have $\Vert U_2\Vert_{\infty} \leq \sqrt{\frac{s\log p + rp}{N}}$, Thus $\Vert \sqrt{n_0}U\Vert_{\infty} \leq \sqrt{\frac{n_0(s\log p + rp)}{N}} = o(1)$.
The final result derives from Martingale central limit Theorem in \cite{hall2014martingale}, i.e. Corollary 3.2.
\end{proof}

\section{Additional Numerical Results and Considerations}\label{sec:5}
\subsection{Figure for Section 4}
\begin{figure}[h]
    \centering
    \includegraphics[width=4in]{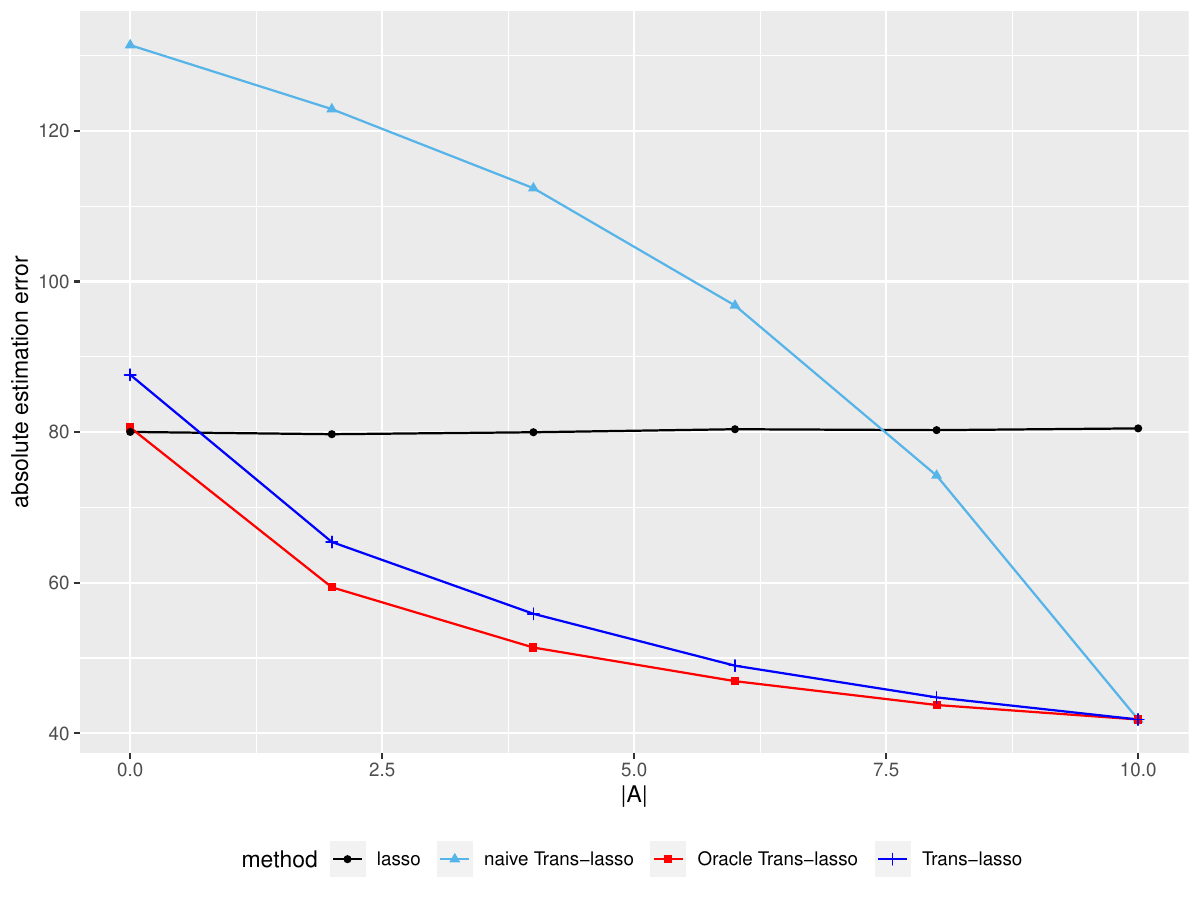}
    \caption{Absolute Estimation Error for Sparse Matrix}
    \label{simu2_fig1}
\end{figure}

\newpage
\subsection{Figure for Section 5}
\begin{figure}[h]
    \begin{tabular}{cccc}
        \centering
        \subfigure[T = 0(Video start)]{
        \begin{minipage}[t]{0.21\linewidth}
        \centering
        \includegraphics[width=1.3in]{pic/realdata/Meet_WalkTogether2000_c.jpg}
        \end{minipage}%
        }&\subfigure[T=115(First man walk out of lobby)]{
        \begin{minipage}[t]{0.21\linewidth}
        \centering
        \includegraphics[width=1.3in]{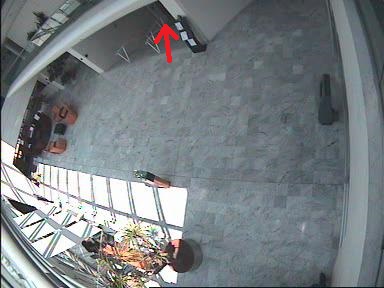}
        \end{minipage}%
        }&\subfigure[T = 173(Two man walk in lobby)]{
        \begin{minipage}[t]{0.21\linewidth}
        \centering
        \includegraphics[width=1.3in]{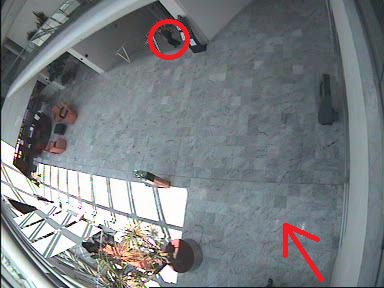}
        \end{minipage}%
        }&\subfigure[T=231(Two man walk in lobby)]{
        \begin{minipage}[t]{0.21\linewidth}
        \centering
        \includegraphics[width=1.3in]{pic/realdata/Meet_WalkTogether2231_c.jpg}
        \end{minipage}%
        }
        \\
        \subfigure[T=289(Two man stand together)]{
        \begin{minipage}[t]{0.21\linewidth}
        \centering
        \includegraphics[width=1.3in]{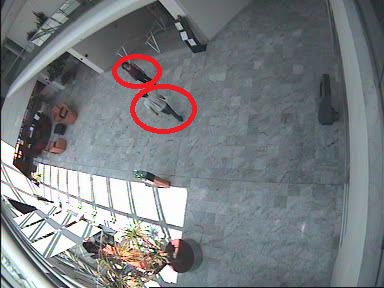}
        \end{minipage}%
        }&\subfigure[T = 347(Two man stand together)]{
        \begin{minipage}[t]{0.21\linewidth}
        \centering
        \includegraphics[width=1.3in]{pic/realdata/Meet_WalkTogether2347_c.jpg}
        \end{minipage}%
        }&\subfigure[T=405(Two man walk together)]{
        \begin{minipage}[t]{0.21\linewidth}
        \centering
        \includegraphics[width=1.3in]{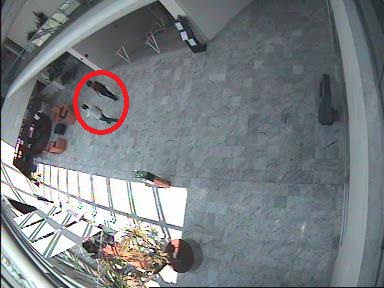}
        \end{minipage}%
        }&\subfigure[T = 463(Two man walk out of lobby)]{
        \begin{minipage}[t]{0.21\linewidth}
        \centering
        \includegraphics[width=1.3in]{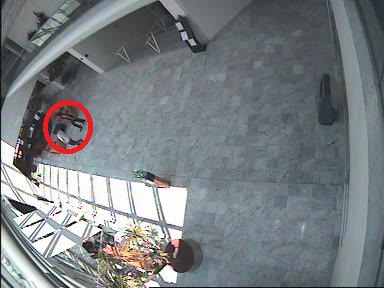}
        \end{minipage}%
        }
        \\
        \subfigure[T = 521(Two man walk to the door)]{
        \begin{minipage}[t]{0.21\linewidth}
        \centering
        \includegraphics[width=1.3in]{pic/realdata/Meet_WalkTogether2521_c.jpg}
        \end{minipage}%
        }&\subfigure[T = 579(Two man walk through the door)]{
        \begin{minipage}[t]{0.21\linewidth}
        \centering
        \includegraphics[width=1.3in]{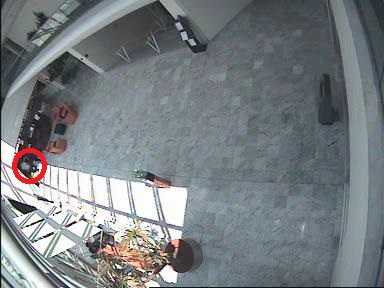}
        \end{minipage}%
        }&\subfigure[T = 637(Two man exit)]{
        \begin{minipage}[t]{0.21\linewidth}
        \centering
        \includegraphics[width=1.3in]{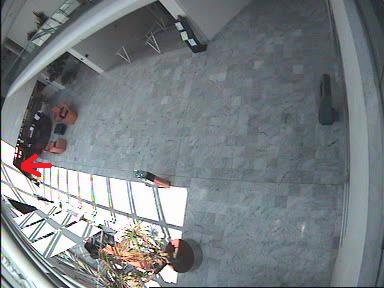}
        \end{minipage}%
        }&\subfigure[T = 695(Empty lobby)]{
        \begin{minipage}[t]{0.21\linewidth}
        \centering
        \includegraphics[width=1.3in]{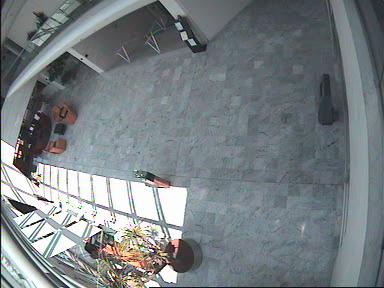}
        \end{minipage}%
        }
    \end{tabular}
\caption{View of Footage}
\label{fig4}
\end{figure}

\begin{figure}[h]
    \begin{tabular}{cccc}
        \centering
        \subfigure[T = 0(Video start)]{
        \begin{minipage}[t]{0.21\linewidth}
        \centering
        \includegraphics[width=1.3in]{pic/realdata/segment1.pdf}
        \end{minipage}%
        }&\subfigure[T=115(First man walk out of lobby)]{
        \begin{minipage}[t]{0.21\linewidth}
        \centering
        \includegraphics[width=1.3in]{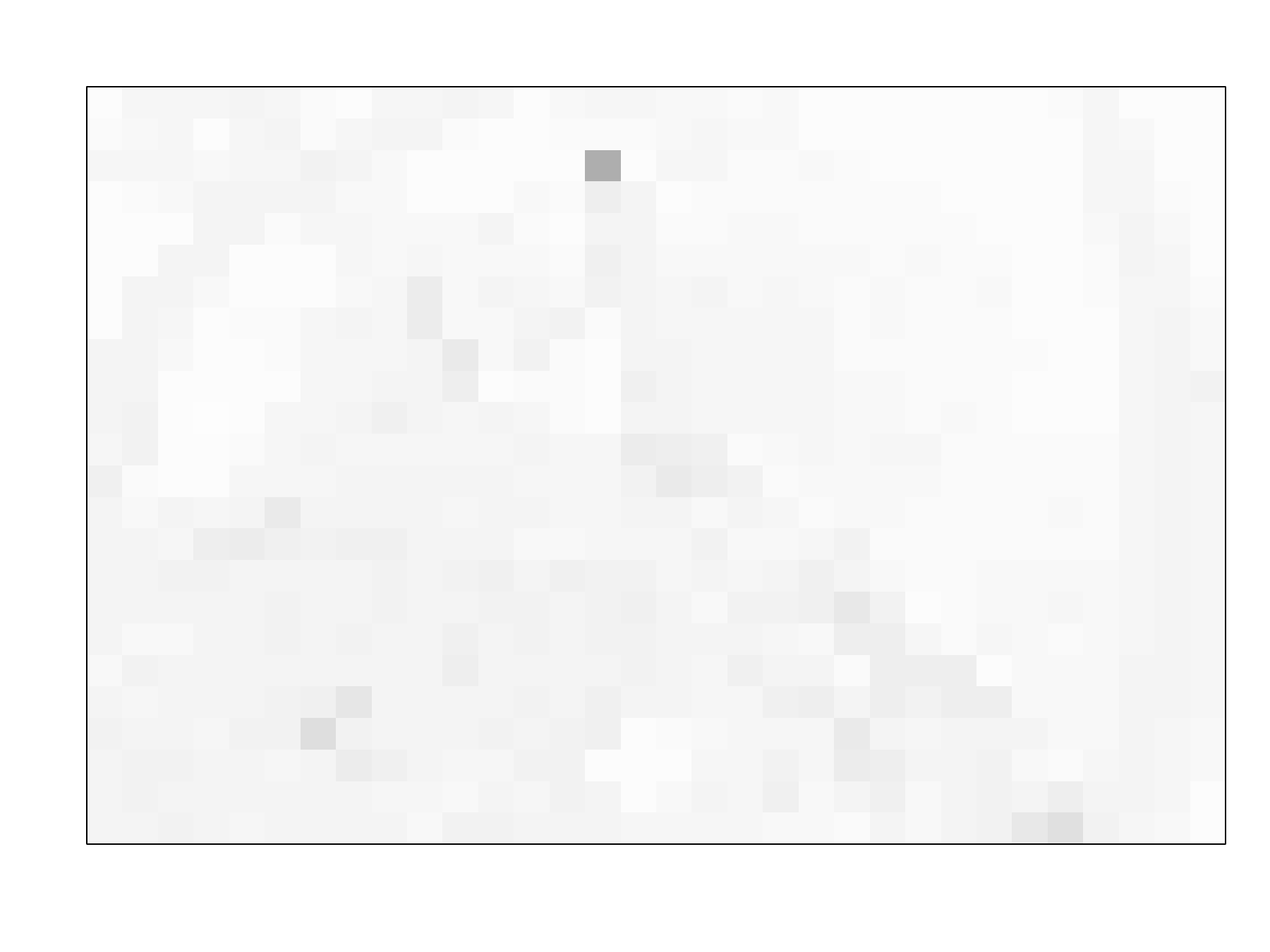}
        \end{minipage}%
        }&\subfigure[T = 173(Two man walk in lobby)]{
        \begin{minipage}[t]{0.21\linewidth}
        \centering
        \includegraphics[width=1.3in]{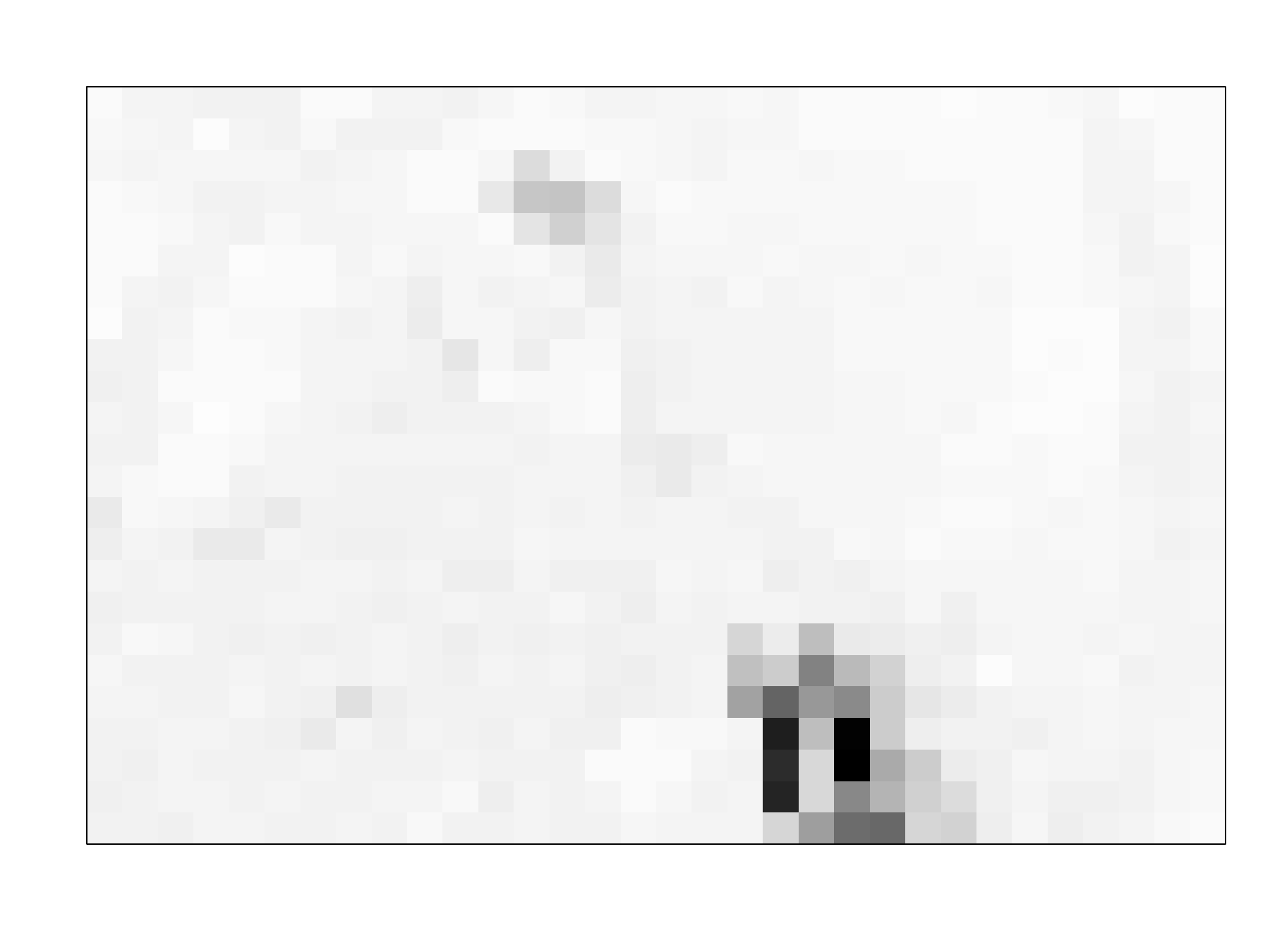}
        \end{minipage}%
        }&\subfigure[T=231(Two man walk in lobby)]{
        \begin{minipage}[t]{0.21\linewidth}
        \centering
        \includegraphics[width=1.3in]{pic/realdata/segment4.pdf}
        \end{minipage}%
        }
        \\
        \subfigure[T=289(Two man stand together)]{
        \begin{minipage}[t]{0.21\linewidth}
        \centering
        \includegraphics[width=1.3in]{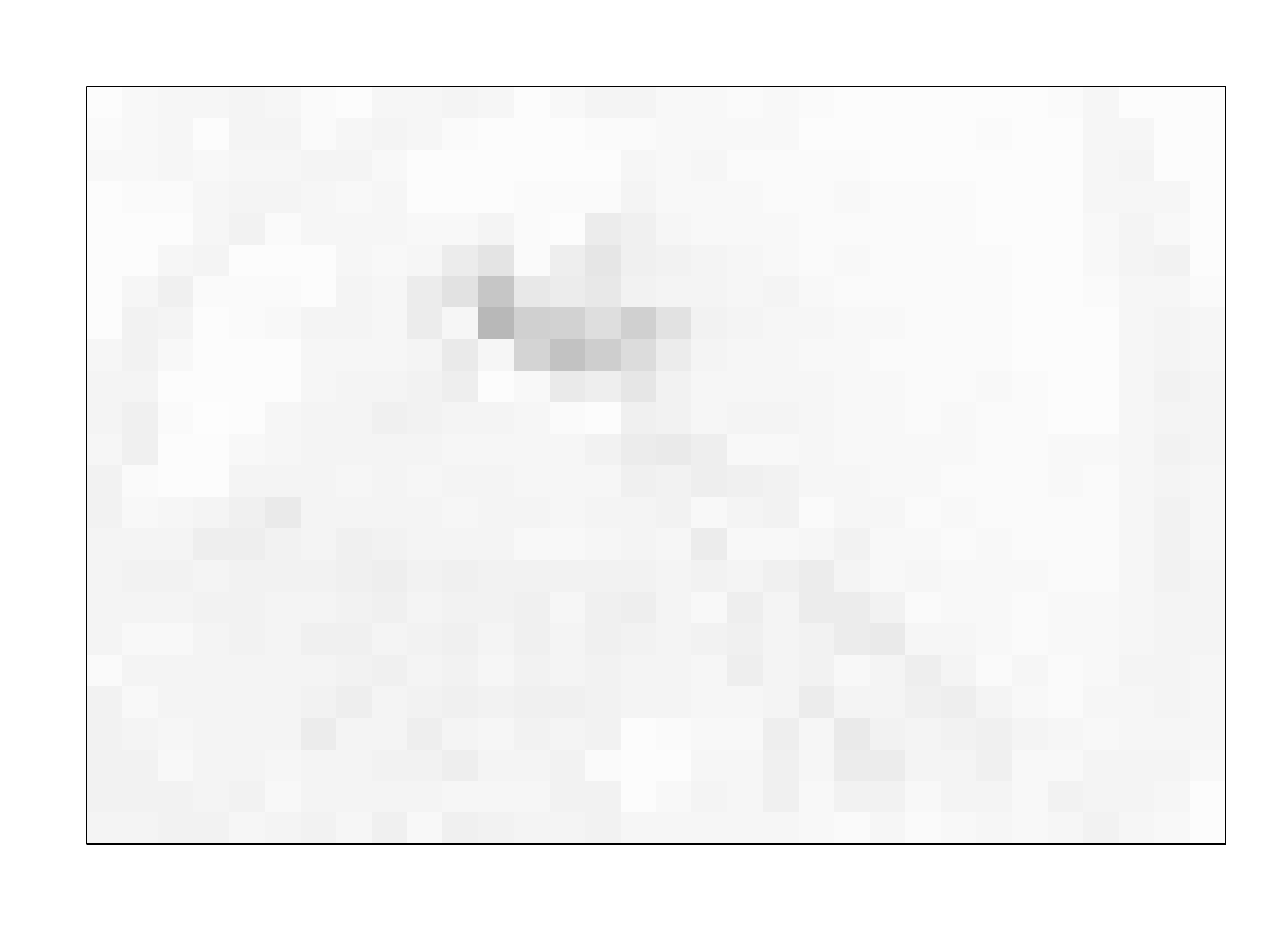}
        \end{minipage}%
        }&\subfigure[T = 347(Two man stand together)]{
        \begin{minipage}[t]{0.21\linewidth}
        \centering
        \includegraphics[width=1.3in]{pic/realdata/segment6.pdf}
        \end{minipage}%
        }&\subfigure[T=405(Two man walk together)]{
        \begin{minipage}[t]{0.21\linewidth}
        \centering
        \includegraphics[width=1.3in]{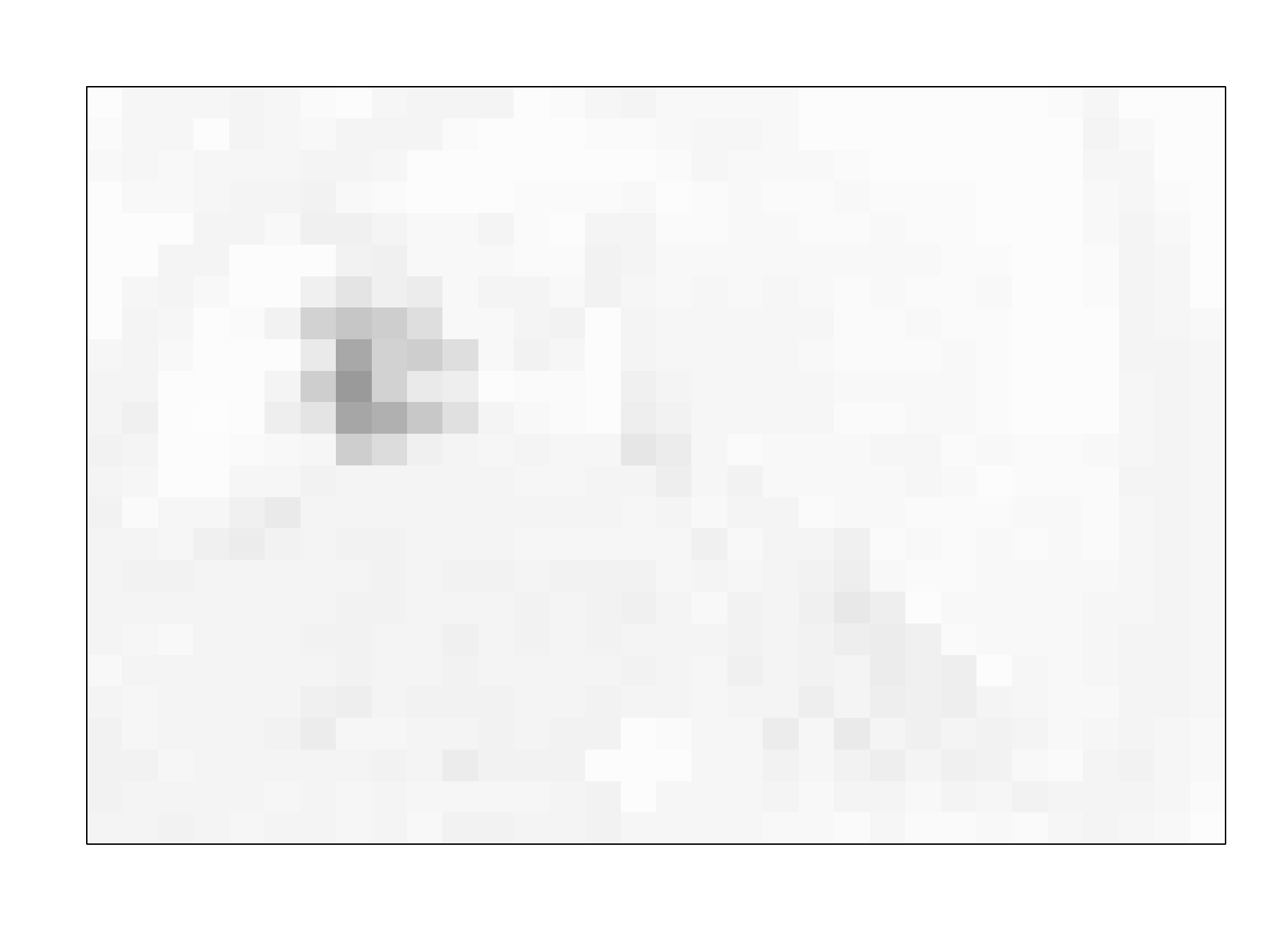}
        \end{minipage}%
        }&\subfigure[T = 463(Two man walk out of lobby)]{
        \begin{minipage}[t]{0.21\linewidth}
        \centering
        \includegraphics[width=1.3in]{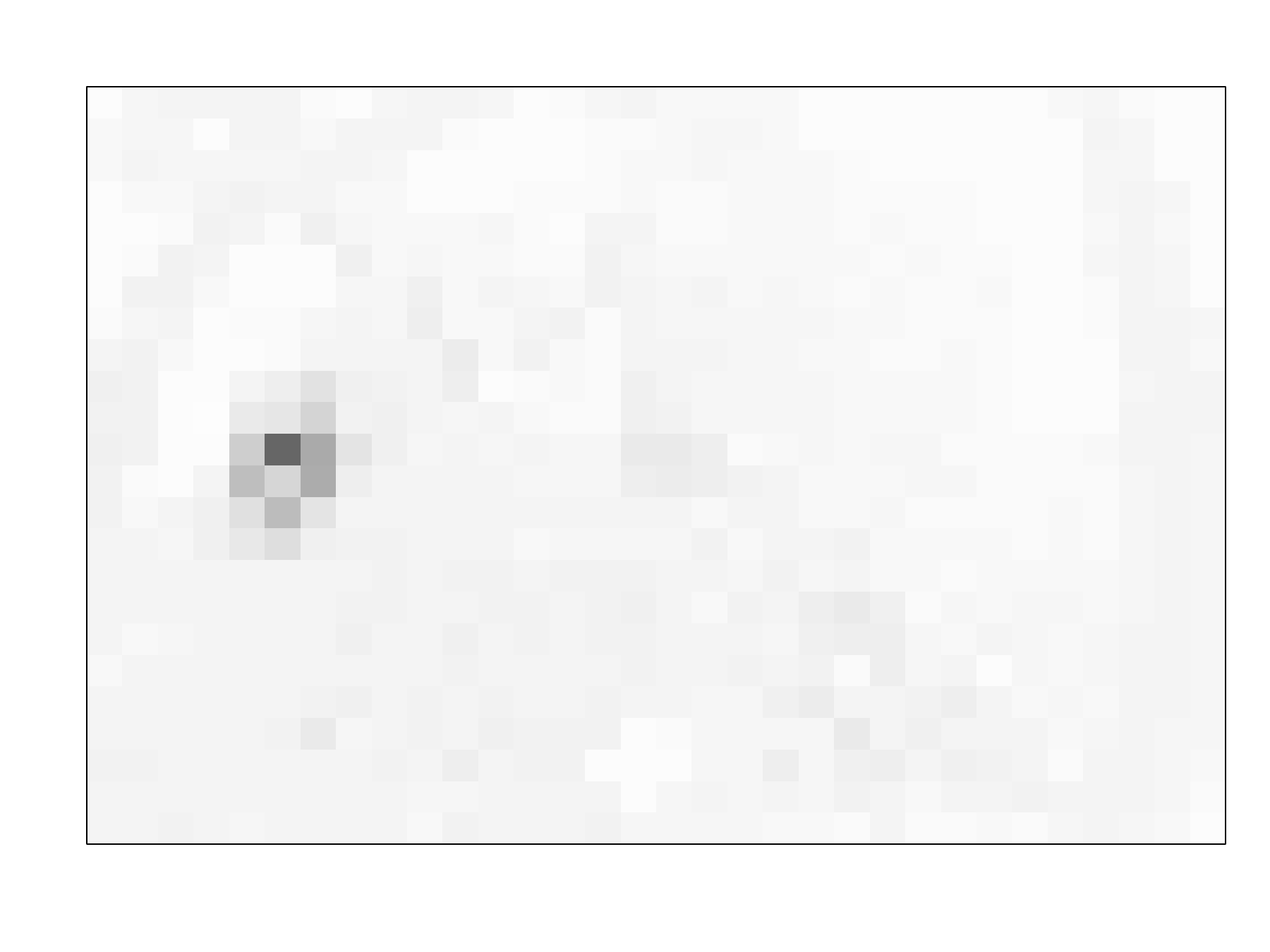}
        \end{minipage}%
        }
        \\
        \subfigure[T = 521(Two man walk to the door)]{
        \begin{minipage}[t]{0.21\linewidth}
        \centering
        \includegraphics[width=1.3in]{pic/realdata/segment9.pdf}
        \end{minipage}%
        }&\subfigure[T = 579(Two man walk through the door)]{
        \begin{minipage}[t]{0.21\linewidth}
        \centering
        \includegraphics[width=1.3in]{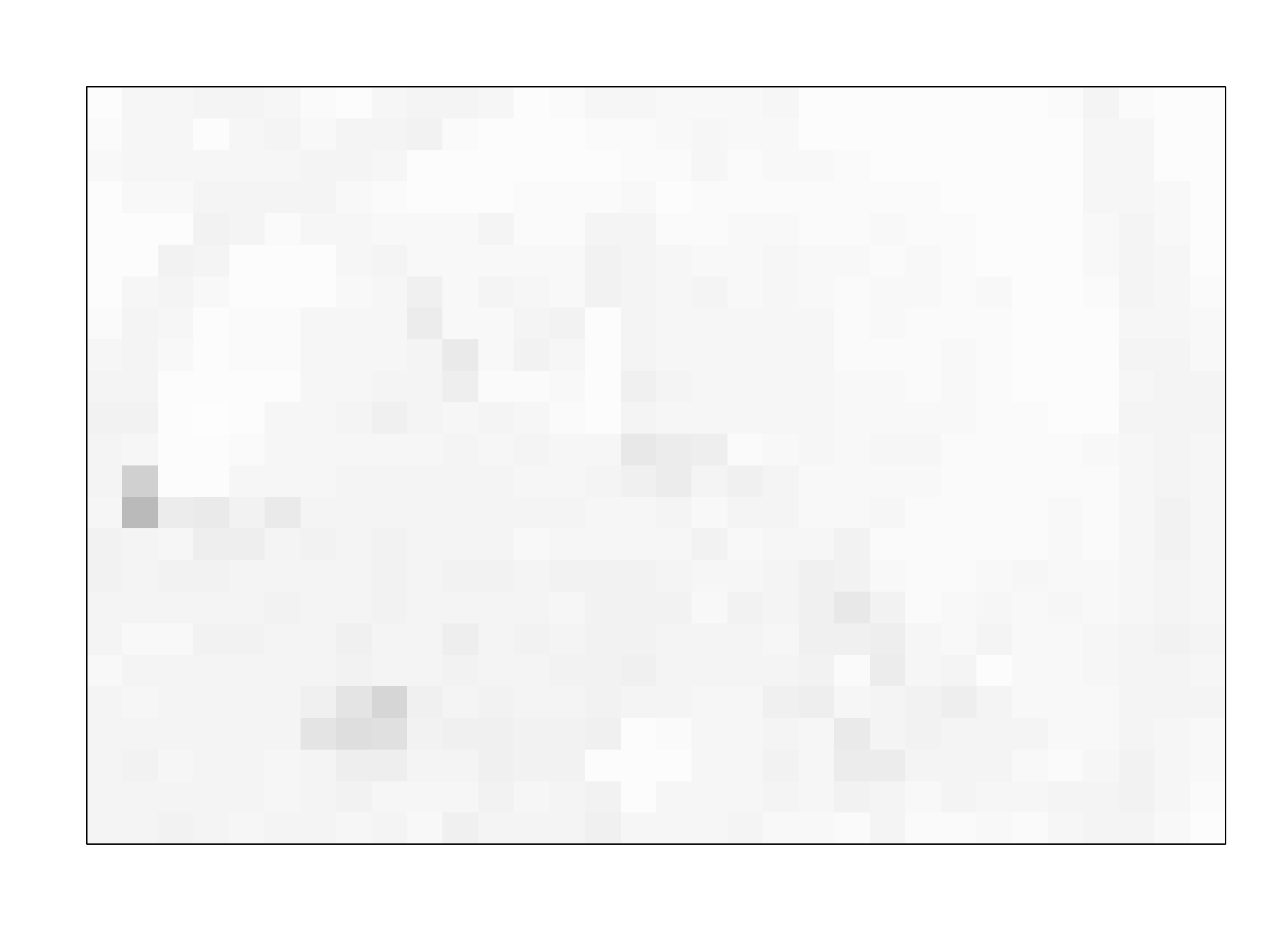}
        \end{minipage}%
        }&\subfigure[T = 637(Two man exit)]{
        \begin{minipage}[t]{0.21\linewidth}
        \centering
        \includegraphics[width=1.3in]{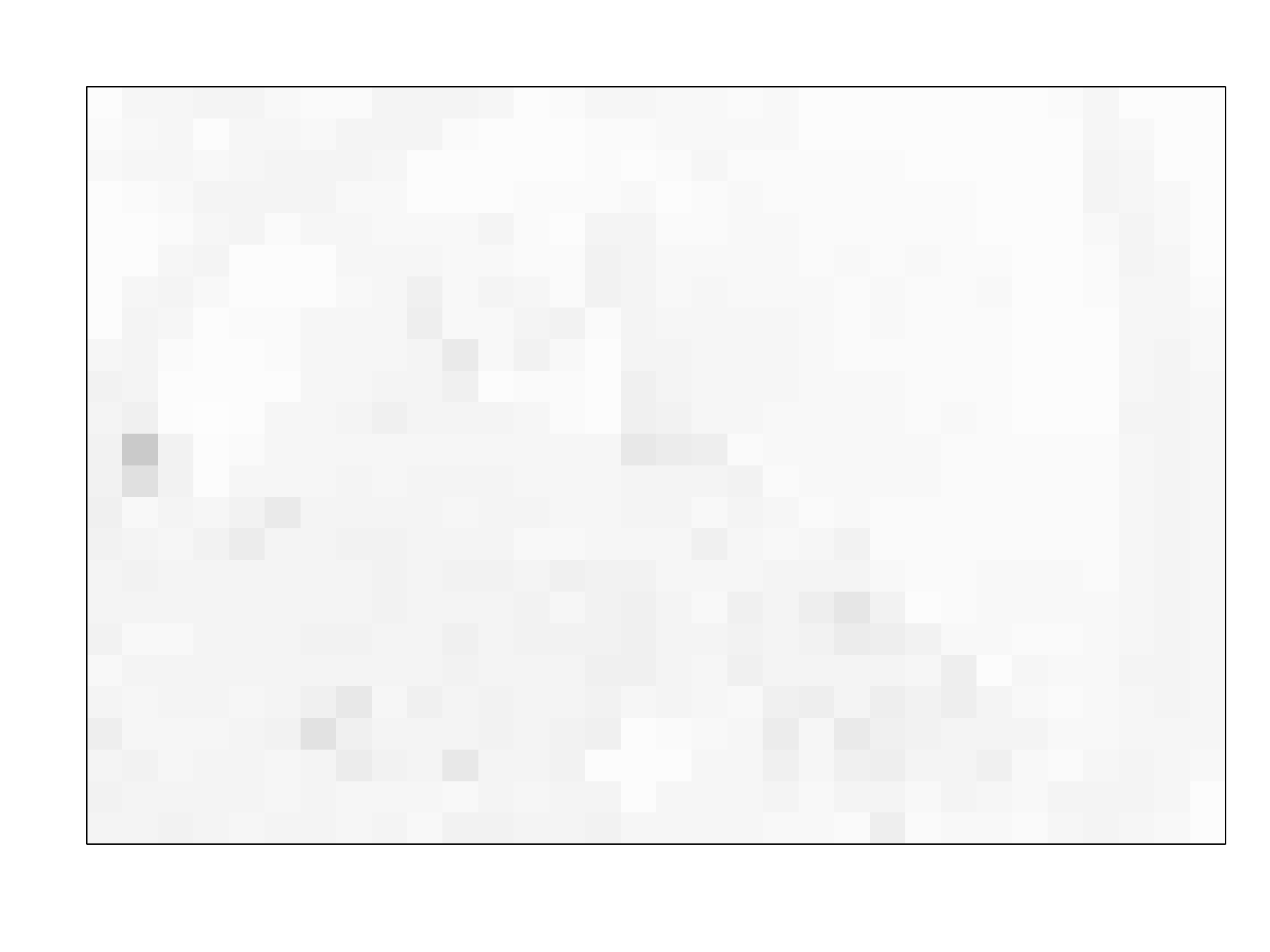}
        \end{minipage}%
        }&\subfigure[T = 695(Empty lobby)]{
        \begin{minipage}[t]{0.21\linewidth}
        \centering
        \includegraphics[width=1.3in]{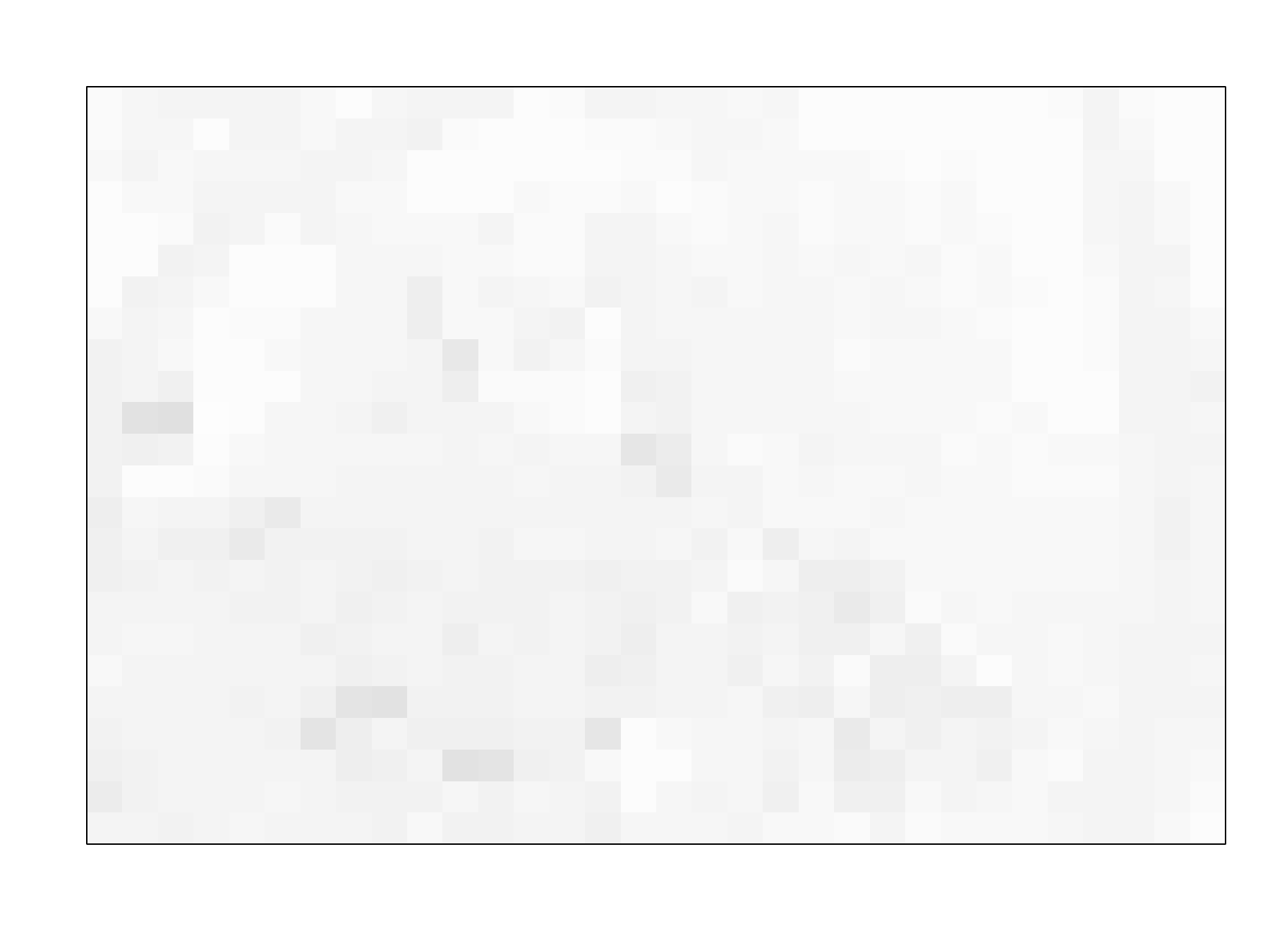}
        \end{minipage}%
        }
    \end{tabular}
\caption{View of Footage}
\label{fig5}
\end{figure}

\newpage
\subsection{Additional Simulation Study}
In this simulation, we show the accuracy of recovering low rank
matrix in the first step of Algorithm \ref{algorithm1} with increasing dimension. Consider the VAR model
\begin{center}
Target model:
$
X^{(0)}_t = 
(L + S_0)
X^{(0)}_{t-1}
+
\epsilon^{
(0)}_{t}
$;
Auxiliary model:
$
X^{(k)}_t = 
(L + S_k)
X^{(k)}_{t-1}
+
\epsilon^{(k)}_{t}, \ 1\leq k\leq K,
$
    
\end{center}

where $S_0$ is one-off diagonal matrix having the following structure

\[
S_0=
\begin{bmatrix}
0  & 0.5  & 0   & \cdots  &  0 \\
0  & 0    & 0.5 & \cdots  &  0 \\
\vdots &\vdots &\vdots &\ddots & \vdots   \\
0  & 0    & 0   & \cdots  &  0.5 \\
0  & 0    & 0   & \cdots  &  0 
\end{bmatrix}_{p\times p}.
\]

$L$ is generated by $L = U  D  V^{'}$, where $D := diag(0.2, r)$ is a diagonal matrix. The dimension $p$ is set to 20, 50 and 100, respectively, while the rank $r$ is fixed to be 4 for all $p$. $S_k, k\geq 1$ is constructed by randomly replacing four entries of $S_0$. The sample size for each group is set to be 200, $n_0 = n_1=\cdots = n_K = 200$. Define $\mathcal{S}$ as the set of non-zero entries in $S_0$. Let $H$ be a random subset of $\{(i,j): 1\leq i\leq p, 1\leq j\leq p\}$ such that $|H|=4$. If $(i,j)\in H\cap \mathcal{S}$, we set $(S_k)_{ij} = 0$; if $(i,j)\in H\cap \mathcal{S}^{c}$, $(S_k)_{ij} = \eta_{ij}+0.5$, where $\eta_{ij}\sim b\cdot \mathrm{uniform}(\{+1, -1\})$. Size of auxiliary set is set to $0$, $1$, $\cdots$, $9$.

we utilize a grid search to select the optimal values of $\lambda$ and $\mu_i$. To simplify the selection procedure, we use the same $\mu_i$ for all groups. We compare estimation error of low-rank matrix for different values of $p$. The squared Frobenius norm error of estimation given by $\Vert L - \widehat{L}\Vert_F^2$ is shown in Table \ref{simu1_table1}. As we can see, our proposed algorithm gets better low-rank matrix estimator when more observations are provided. This result is consistent with Theorem \ref{th1}.

\begin{table}[h]
\caption{Squared Frobenius Norm Error of $L$}\label{simu1_table1}
    \centering
    \begin{tabular}{c|c|c|c|}
    \hline
        & $p=20$ & $p=50$ & $p=100$\\
        \hline
$K = 0$ & 0.639 & 0.639  & 0.639 \\
$K = 1$ & 0.616 & 0.631 & 0.636 \\
$K = 2$ & 0.546 & 0.612 & 0.628 \\
$K = 3$ & 0.468 & 0.579  & 0.611 \\
$K = 4$ & 0.414 & 0.529 & 0.597  \\
$K = 5$ & 0.368 & 0.484 &  0.578 \\
$K = 6$ & 0.320 & 0.451  & 0.555 \\
$K = 7$ & 0.294 & 0.429 & 0.533 \\
$K = 8$ & 0.274 & 0.407 & 0.508 \\
$K = 9$ & 0.260 & 0.389  & 0.486 \\
 \hline
    \end{tabular}
\end{table}

\subsection{Computation Time}

We provide average computation time for the proposed algorithm in Table~\ref{tab:my_label}.

    \begin{table}[h]
        \centering
        \begin{tabular}{c|c|c|c|c|c|}
        \hline
             $N$     &  200   &  400   &  600    &  800    &  1000  \\
             \hline
             $p=20$  &  0.1869 & 0.5492 &  1.115 &  2.040 &  3.351\\
             $p=40$  &  0.2011 & 0.5986 &  1.206 &  2.213 &  3.607\\
             $p=60$  &  0.2184 & 0.6418 &  1.314 &  2.378 &  3.848\\
             $p=80$  &  0.2284 & 0.6890 &  1.410 &  2.553 &  4.109\\
             $p=100$ &  0.2448 & 0.7413 &  1.499 &  2.722 &  4.381\\
             \hline
        \end{tabular}
        \caption{Computation time of algorithm 1. Each entry is the total time
of 200 replicates. The unit of computation time is second.  We use the same target model as simulation 1 in our paper and consider only one auxiliary model. The sample size of the target model is set to be 100 and sample size of auxiliary model take the value from $\{100,300,500,700,900\}$.}
        \label{tab:my_label}
    \end{table}

\subsection{Alternative Models for the Real Data}

In this part, for the real data analysis, we make a comparison with other parameterizations (results are provided in Table~\ref{table1}). As we can see from these results, the proposed algorithm coupled with low rank plus sparse model parameters outperforms all other competing methods.

    \begin{table}[h]
    \centering
    \begin{tabular}{c|c|c|c|c|}
    \hline
    & seg1 & seg2 & seg3 & seg4 \\
    \hline
Trans-lasso(L+S) & 7.205(0.015) & 0.133(0.003) & 2.177(0.012) & 0.468(0.005)  \\
Trans-lasso(S) & 7.415(0.015) & 0.289(0.004) & 3.398(0.014) & 0.673(0.006) \\
lasso(L+S) & 8.660(0.017) & 0.241(0.004) & 4.484(0.017) & 1.928(0.011)  \\
lasso(S) & 8.686(0.017)  & 0.257(0.004) & 4.579(0.017) & 1.954(0.011) \\
Low-rank & 14.090(0.021) & 1.955(0.005) & 6.854(0.018) & 5.008(0.014) \\ 
 \hline
   &  seg5 & seg6 & seg7 & seg8 \\
\hline
Trans-lasso(L+S) & 0.092(0.002) & 0.916(0.008) & 0.372(0.005) & 0.233(0.004) \\
Trans-lasso(S) & 0.251(0.004) & 1.164(0.009) & 0.920(0.008) & 0.503(0.006) \\
lasso(L+S)    & 0.450(0.005) & 2.686(0.013) & 1.653(0.010) & 1.235(0.009)\\
lasso(S) & 0.476(0.005) & 2.682(0.013) & 1.683(0.010) & 1.273(0.009) \\
Low-rank & 2.388(0.007) & 5.682(0.016) & 3.938(0.012) & 3.509(0.011) \\
 \hline
  & seg9 & seg10 & seg11 & seg12\\
 \hline
Trans-lasso(L+S) & 0.146(0.003) & 0.094(0.002) & 0.071(0.002) & 0.139(0.002)\\
Trans-lasso(S) & 0.297(0.004) & 0.209(0.003) & 0.163(0.003) & 0.212(0.002) \\
lasso(L+S)  & 0.523(0.006) & 0.189(0.004) & 0.102(0.004) & 0.153(0.002) \\
lasso(S) & 0.529(0.006) & 0.211(0.004) & 0.117(0.003) & 0.219(0.003) \\
Low-rank & 2.377(0.007) & 1.910(0.005) & 1.815(0.004) & 0.876(0.003) \\
\hline
    \end{tabular}
    \caption{Mean squared prediction error for each segment. Standard errors are shown in parentheses. Trans-lasso(L+S) and lasso(L+S) refer to methods that we model VAR(1) with low-rank plus sparse structure. Trans-lasso(S) and lasso(S) refer to methods that we model VAR(1) without low-rank component. Low-rank methods in the last row implies that we model VAR(1) with only low-rank component for each segment.}
    \label{table1}
\end{table}

\subsection{Hyperparameter selection}

Using cross-validation to select hyperparameters (tuning parameters) in our model is difficult due to the time series nature of the data. Also, it would be computationally demanding since each private sparse component corresponds to one hyperparameter. It seems difficult to compute all cases when dealing with large groups. There are three types of hyperparameters in the proposed algorithms. The first group is related to lasso penalty terms. For those, we use suggestions in the literature for tuning parameter selection \citep{li2020transfer}. Specifically, we set $\lambda_\beta = \sqrt{2\log p/(n_0+n_{\mathcal{A}_0})}$ and $\lambda_\delta = \sqrt{2\log p/(n_0)}$ . Second, there will be a tuning parameter for the low rank penalty. For that, we set $\mu = \tau * \sqrt{np}$. We performed several simulations with different $\tau$. Empirically speaking, our algorithm reaches a good result when $\tau \in (0.1, 2)$. We set $\tau = 0.1$ in all numerical analyses. Further, there will be an additional tuning parameter for the selection algorithm which is the constant $c$. We performed sensitivity analysis and given $c \in (0.01, 0.5)$, the algorithm always selects helpful groups for the next transfer learning step. We set $c = 0.01$ in all numerical analyses. Finally, for the lasso method, we set $\lambda = \sqrt{2\log p/(n_0)}$ while for the inference part we use $r_0 = \sqrt{n_0}$, $r_i = 2^{i}$ and $\mu_j = \sqrt{\frac{\log p}{2m_j}}$ for $i, j\geq 1$ \citep{deshpande2021online}.


\section{Additional Details on Numerical Experiments}\label{sec:6}
\textbf{Computer Information}:\\
Processor:   Intel(R) Core(TM) i7-9750H CPU @ 2.60GHz   2.59 GHz\\
Installed RAM: 16.0 GB (15.9 GB usable)\\
System type: Windows 10 Home 64-bit operating system, x64-based processor\\
Time of execution: 3h(one computer)

\end{document}